\documentclass[11pt]{article}
\usepackage[dvips]{graphicx}
\usepackage{color}
\usepackage{url}
\usepackage[colorlinks=true, allcolors=blue, pagebackref]{hyperref}       % hyperlinks
\renewcommand*{\backrefalt}[4]{%
    \ifcase #1 \footnotesize{(not cited)}%
    \or        \footnotesize{(cited on page~#2)}%
    \else      \footnotesize{(cited on pages~#2)}%
    \fi}
    
\usepackage[utf8]{inputenc} % allow utf-8 input
\usepackage[T1]{fontenc}    % use 8-bit T1 fonts
\usepackage{url}            % simple URL typesetting
\usepackage{booktabs}       % professional-quality tables
\usepackage{amsfonts}       % blackboard math symbols
\usepackage{amssymb,amsmath,color,amsthm}
\usepackage{nicefrac}       % compact symbols for 1/2, etc.
\usepackage{microtype}      % microtypography
\usepackage{xcolor}         % colors

\usepackage{wrapfig}

\usepackage{amssymb,amsmath}

\usepackage[mathcal]{eucal}

\DeclareMathAlphabet\mathbfcal{OMS}{cmsy}{b}{n}

\usepackage[mathcal]{eucal}

\newcommand{\BEAS}{\begin{eqnarray*}}
\newcommand{\EEAS}{\end{eqnarray*}}
\newcommand{\BEA}{\begin{eqnarray}}
\newcommand{\EEA}{\end{eqnarray}}
\newcommand{\BEQ}{\begin{equation}}
\newcommand{\EEQ}{\end{equation}}
\newcommand{\BIT}{\begin{itemize}}
\newcommand{\EIT}{\end{itemize}}
\newcommand{\BNUM}{\begin{enumerate}}
\newcommand{\ENUM}{\end{enumerate}}
\newcommand{\BA}{\begin{array}}
\newcommand{\EA}{\end{array}}

\newcommand{\Diag}{\mathop{\rm Diag}}

\newcommand{\tr}{\mathop{ \rm tr}}

\newcommand{\idm}{I}
\newcommand{\rb}{\mathbb{R}}
\newcommand{\mysec}[1]{Sec.~\ref{sec:#1}}
\newcommand{\eq}[1]{Eq.~(\ref{eq:#1})}
\newcommand{\myfig}[1]{Fig.~\ref{fig:#1}}
\def \E{{\mathbb E}}
\def \P{{\mathbb P}}
\def \H{{\mathcal H}}
\def \X{{\mathcal X}}
\def \W{{\mathcal W}}

\usepackage{enumitem}
\setlist[itemize]{leftmargin=*}

\newcommand{\bivec}[2]{\Big( \! \! \begin{array}{c} {#1} \\ {#2} \end{array} \! \! \Big)}
\newcommand{\bimat}[4]{\bigg(  \! \! \begin{array}{c@{\hspace{2mm}}c} {#1} &  {#2} \\ {#3} &  {#4} \end{array}   \! \! \bigg)}

\newtheorem{lemma}{Lemma}

\newtheorem{proposition}{Proposition}

 \title{A Spectral Framework for Closed-Form Relative Density Estimation}

\author{Francis Bach \\
Inria - Ecole Normale Sup\'erieure\\ 
PSL Research University
\\
{ \url{francis.bach@inria.fr}}}

\begin{document}

\maketitle

\begin{abstract}
We propose a closed-form spectral framework for relative log-density estimation in linearly parameterized probabilistic models, including unnormalized and conditional models. This is achieved by representing the Kullback-Leibler (KL) divergence as an integral of weighted chi-squared divergences, converting KL estimation into a family of least-squares problems. We derive an explicit spectral formula based only on first- and second-order feature moments, yielding closed-form estimators of both divergences and log-density potentials for fixed features. The framework extends to a broad class of $f$-divergences and can be combined with kernelization or feature learning with neural networks. We prove convergence guarantees for the resulting estimators and empirically compare them on synthetic data with optimization-based variational formulations, including logistic and softmax regression for normalized conditional models.
\end{abstract}

\section{Introduction}
\label{sec:introduction}
Estimating the Kullback-Leibler (KL) divergence from samples is ubiquitous in data science and machine learning, as it reduces to relative log-density estimation~\cite{perez2008kullback}. This is an important example of the family of $f$-divergences between two probability distributions $p$ and $q$ on the same space $\X$, defined,  for convex  $f$ with domain containing $\rb_+^\ast$, as (see, e.g.,~\cite{polyanskiy2025information}):

\vspace*{-.3cm}

\[D(p\|q) =  \int_\X f\Big( \frac{dp}{dq}(x) \Big) dq(x),\]

\vspace*{-.2cm}

Key examples include $f(t) = t \log t - t + 1$ for the KL  divergence $\int_\X \log \big( \frac{dp}{dq}(x) \big) dp(x)$, and $f(t) = \frac{1}{2} (t-1)^2$ for the Pearson chi-squared divergence $\frac{1}{2}\int_\X   \big( \frac{dp}{dq}(x) - 1\big)^{\raisebox{-0.3ex}{$\scriptstyle 2$}} dq(x)$.

\textbf{Variational formulation.}
Representing $f(t)  = \sup_{ v \in \rb}v t - f^\ast(v)$ using the Fenchel conjugate $f^\ast$ of $f$, this is equivalent to relative density estimation through the variational formulation~\cite{broniatowski2006minimization,nguyen2010estimating}

\vspace*{-.3cm}

\BEQ
\label{eq:varDPq} D(p\|q)  = \sup_{ v : \X \to \rb }  \ 
  \int_\X\! v(x) dp(x)  -  \int_\X \! f^\ast(v(x)) dq(x),
\EEQ

\vspace*{-.15cm}

with an optimal potential $v$ defined as $v(x) = f'( dp/dq(x))$. This leads to a convex formulation (a convex problem when $v$ is linearly parameterized) without the need to normalize models, in particular when $p$ and $q$ are available as empirical distributions (since \eq{varDPq} is defined through expectations). In this paper, we refer to models where the normalizing constant can be tractably computed as ``normalized'' (such as when summing with respect to a small finite number of components), and to models with non-tractable normalizing constants as ``unnormalized.''

However, for the KL divergence, we have $f^\ast(u) = e^{u}-1$, and we obtain in \eq{varDPq} an expectation (with respect to  $q$) of exponentials, which can be statistically and numerically unstable when the expectation is replaced in practice by an empirical average (see, e.g.,~\cite{liu2015estimating,chehab2023provable}).

 For the Pearson divergence, however, we get $f^\ast(u) = \frac{u^2}{2} \!+\! u$, and \eq{varDPq} becomes quadratic in~$v$. For a linear model $v(x) = \theta^\top \varphi(x)$, based on a certain feature map $\varphi:\X \to \rb^m$, this leads to the maximization of
$\theta^\top (\mu_p\! -\! \mu_q) - \frac{1}{2} \theta^\top \Sigma_q \theta$ in closed form (based on the first two moments $\mu_p, \mu_q\in \rb^m, \Sigma_p, \Sigma_q \in \rb^{m \times m}$ of $\varphi(x)$ under $p$ and $q$), with optimal predictor $\theta = \Sigma_q^{-1} ( \mu_p- \mu_q) $ and optimal value
$\frac{1}{2}  (\mu_p\! -\! \mu_q)^\top \Sigma_q^{-1}  (\mu_p \!- \!\mu_q)$; this is equivalent to discriminant analysis for binary classification~\cite{hastie1995penalized,harchaoui2008testing}, and can be seen as a least-squares approach for density-ratio estimation~\cite{kanamori2009least,sugiyama2012density}.

The key features for this particular case are that  (1) the estimator and optimal values are given by a closed-form formula based on first- and second-order moments, so that (2) classical simple algorithms (based on regularization or gradient ascent) are available with a precise, sharp theoretical analysis~(see, e.g.,~\cite{bach2024learning}), with (3) usual extensions beyond fixed finite-dimensional feature spaces, through kernelization~\cite{scholkopf2002learning} or feature learning through neural networks.
 The main question we aim to tackle in this paper is: \emph{Can we extend all these nice properties to other $f$-divergences, including the KL divergence?} In other words, can we leverage the least-squares formalism to tightly approximate a different geometry (in the conditional case, can we perform logistic regression through a fixed combination of least-squares regressions, and not by Newton's  method, which is iterative~\cite{mccullagh2019generalized})?

 \textbf{Variational formulations for chi-squared divergences.}
  In order to define our extension, we need an equivalent \emph{constrained} variational formulation with \emph{two}  functions $v$ and $w:\X \to \rb$~\cite{broniatowski2006minimization,nguyen2010estimating}:
\BEQ
\label{eq:varfdiv}  D(p\|q) =  \sup_{ v,w : \X \to \rb } \ 
  \int_\X \! v(x) dp(x)  \!+ \!\int_\X \!w(x) dq(x)  \  \mbox{ such that }  \ \forall x \in \X,
\ w(x) \leqslant - f^\ast(v(x)).
   \EEQ
   At optimality, we have $w(x) = - f^\ast(v(x))$, and we recover \eq{varDPq}, but this new formulation allows some additional flexibility that we can leverage.
   For the Pearson divergence, the function $w$ corresponding to the linear function $v$ is quadratic in $\varphi(x)$ since $f^\ast$ is quadratic (where ``quadratic'' means ``quadratic+linear''). We can extend these quadratic representations to \emph{weighted} chi-squared divergences such as those used in~\cite{yamada2013relative}, with $f(t) = \frac{1}{2} \frac{(t-1)^2}{\rho t + 1-\rho} $, where $\rho=0$ leads to the Pearson divergence. With the representation $f(t) =  \sup_{u \in \rb} u(t-1) - \frac{u^2}{2} ( \rho t + 1-\rho)$, we then have the representation
   
   \vspace*{-.7cm}
   
   \BEA
  \notag D(p\|q) & \!=\! &  \sup_{u:\X\to \rb}\  \int_\X \Big[
   u(x) \Big( \frac{dp}{dq}(x) - 1 \Big) - \frac{u(x)^2}{2} \Big( \rho \frac{dp}{dq}(x) + 1-\rho\Big)
   \Big] dq(x)\\[-.015cm]
\label{eq:DpqUU}   & \!=\! &  \sup_{u:\X\to \rb}\  \int_\X \Big[ u(x) - \frac{\rho}{2} u(x)^2 \Big] dp(x)
   + \int_\X \Big[ - u(x) - \frac{1-\rho}{2} u(x)^2 \Big] dq(x), 
    \EEA
    which is quadratic in $u(x)$, corresponding to potentials
    $v(x) = u(x) - \frac{\rho}{2} u(x)^2$ and $w(x) = - u(x) - \frac{1-\rho}{2} u(x)^2$, which (non-obviously) satisfy $w(x) \leqslant - f^\ast(v(x))$. When $u$ is linearly parameterized as $u(x) = \theta^\top \varphi(x)$, we obtain a least-squares problem with an explicit solution, leading to the expression  $\frac{1}{2}  (\mu_p - \mu_q)^\top (\rho \Sigma_p + (1-\rho) \Sigma_q)^{-1}  (\mu_p - \mu_q)$, which is a lower bound on $D(p\|q)$.  All three nice properties highlighted above for the Pearson divergence also extend to this case.

\textbf{From chi-squared to ``all'' $f$-divergences.}
 It turns out that many convex functions are convex combinations of the functions above, that is, $f(t) = \int_0^1 \frac{1}{2} \frac{(t-1)^2}{\rho t + 1-\rho} d\nu(\rho)$ for a probability distribution~$\nu$ on $[0,1]$. These are exactly the ``operator-convex'' functions (for which the definition of convexity holds for the Loewner order between symmetric matrices), such that $f(1)=f'(1)=0$ and $f''(1)=1$~\cite{lesniewski1999monotone,bach2022sum}. This includes the KL divergence, with~$\nu$ having density $2(1-\rho)$ with respect to the Lebesgue measure, and the weighted chi-squared divergences for $\nu$ equal to a Dirac at $\rho$ (see additional examples in Table~\ref{table:fdivcomplete} in App.~\ref{app:fdiv}).

This implies by integration that the associated relaxation of $f$-divergences has a formulation where the functions $v$ and $w$ are quadratic in $\varphi$, and are integrals of the individual quadratic functions defined above for each $\rho$. This leads to the definition of the following lower bound on $D(p\|q)$,
\BEQ
\label{eq:defF}
 F(p\|q ,\varphi)  =   \frac{1}{2}
\int_0^1 ( \mu_p - \mu_q)^\top (\rho \Sigma_p +(1-\rho)\Sigma_q)^{-1} ( \mu_p - \mu_q) 
d\nu(\rho) .
\EEQ
It inherits all the numerical and statistical properties of the Pearson divergence and chi-squared divergences, except that it requires integration with respect to $\rho$, which is problematic for algorithms.

\textbf{Spectral formulation.} As we show in \mysec{prop}, building on \cite{bach2022information,bach2022sum}, if we consider the generalized eigenvalue decomposition of the pair $(\Sigma_p,\Sigma_q)$, that is, with a basis $(v_i)_{i\in \{1,\dots,m\}}$ such that $  v_i^\top \Sigma_q v_{j}  = 1_{i=j}$ and $\Sigma_p v_i = \lambda_i \Sigma_q v_i$, we have (using the limit $f''(1)/2$ instead of $f(\lambda_i)/(\lambda_i-1)^2$ for $\lambda_i=1$):
\BEQ
\label{eq:defFspec}
 F(p\|q ,\varphi)  =  \sum_{i=1}^m
\frac{ f(\lambda_i)}{(\lambda_i - 1)^2}  \big( (   \mu_p  - \mu_q)^\top v_i \big)^2.
\EEQ
%\vspace*{-.5cm}
Moreover, $F(p\|q ,\varphi) $ is convex with respect to the moments of $p$ and $q$, and gradients can be computed in closed form (with simple linear algebra) given the spectral decomposition. In addition, the associated (feasible) potentials $v$ and $w$ that are quadratic forms in $\varphi$ can be obtained \emph{in closed form}, thus leading, for the KL divergence, to the estimation of the log relative density as a quadratic function of $\varphi$ in closed form based on the first two moments (no normalization is performed, as opposed to the classical variational method). When considering the estimation of mutual information between two random variables, this leads to conditional density models (including normalized models such as logistic and softmax regression, but also unnormalized models, see \mysec{MI}). This is the main contribution of the paper, on which all other developments build.

\textbf{Feature learning algorithms.}
  Since, given $\varphi$ and the associated moments $\mu_p,\Sigma_p, \mu_q,\Sigma_q$, $F(p\|q ,\varphi) $ is a lower bound on $D(p\|q)$, $F(p\|q ,\varphi) $ can be naturally maximized with respect to $\varphi$, leading to explicit feature learning, which we explore in two forms in \mysec{featlearn}: (1) linear parameterizations, as a way to learn a more compact representation, akin to PCA, and (2) nonlinear parameterizations (e.g., using neural networks). Gradient ascent based on empirical moments can be performed, but stochastic approximation methods are developed as well, ``\`a la'' online/incremental EM~\cite{cappe2009line}.
  
  \textbf{Theoretical results.} The algorithmic contributions above are completed in \mysec{theory} with a theoretical analysis of the resulting estimators. We obtain partial adaptivity to smoothness with positive kernel methods, and adaptivity to linear latent variables with overparameterized neural networks, extending classical results in supervised learning~\cite{bach2017breaking}.

 \textbf{Empirical results.} In \mysec{experiments}, we illustrate with synthetic experiments our algorithmic and theoretical contributions, with empirical scaling laws, and empirical illustrations of the adaptivity to linear latent variables. In particular, we consider the estimation of mutual information, which leads to conditional density estimation (see details in \mysec{mutual}).

 Overall, we obtain a new flexible loss framework for estimating $f$-divergences (and thus log-density estimation, potentially conditional and for unnormalized models), which is numerically competitive with the plain variational method. For unnormalized models, the estimator does not explicitly aim to compute the normalization constant and is empirically more sample-efficient than the tested variational baseline (this can thus be seen as a potential alternative to score matching~\cite{hyvarinen2005score} and noise-contrastive estimation~\cite{gutmann2012noise}, with closed-form relative-density estimators for fixed features).
 
 \textbf{Related work.} This paper builds on kernel-based variational formulations of $f$-divergences~\cite{bach2022information,bach2022sum} but instead of relying on quantum divergences that require normalized feature maps, we extend the line of work~\cite{harchaoui2008testing,ribero2026regularized,kanamori2009least,sugiyama2012density}, which was dedicated to the Pearson divergence, to a broad subclass of $f$-divergences, and thus to the Kullback-Leibler divergence. Not requiring normalization is crucial for the feature learning extensions that make the overall framework competitive (see detailed comparison in App.~\ref{app:comparison}). Another related line of work estimates $f$-divergences from binary classification with a specific loss~\cite{JMLR:v12:reid11a}, but, as shown in App.~\ref{app:binary}, this does not apply as easily to the KL divergence.

\section{Properties of the moment-based divergence}
\label{sec:prop}
 We now list some properties of the divergence $F$ defined in \eq{defF},  including a spectral formulation that avoids the integral in $\rho$. Unless otherwise stated, we consider finite-dimensional feature spaces $\varphi:\X \to \H$, with $\H  = \rb^m$. See proofs in App.~\ref{app:proof_main}.

  \textbf{Sufficient condition for finiteness.} In this paper, for simplicity, we will assume that $dp/dq$ exists almost everywhere and is strictly positive, that is, lies in an interval $[\alpha,1/\alpha]$, for $\alpha \in (0,1]$. Then $D(p\|q)$ is well-defined and both $D(p\|q)$ and $F(p\|q,\varphi)$ are bounded by $\max\{ f(\alpha), f(1/\alpha)\}$. 
   
\textbf{Convexity.} The function $(p,q) \mapsto F(p\|q,\varphi)$ is  jointly convex in $p$ and $q$, as the $f$-divergence is. We will compute its derivatives in Prop.~\ref{prop:closedform} below.

\textbf{Monotonicity.} The divergence $F$ is increasing in the feature map, that is, if $\tilde{\varphi} = (\varphi^\top, \varphi_+^\top)^\top$, then $ {F}(p\|q, \tilde{\varphi}) \geqslant  {F}(p\|q, {\varphi})$. As a lower bound on $D(p\|q)$, adding more features will make it closer.
 
\textbf{Linear invariance.}  If $\varphi(x)$ is  replaced by $V\varphi(x)$ for an \emph{invertible} map $V$, the value remains the same. If $V$ is injective, this remains true. If $V$ is surjective, the value of $F$ cannot increase (it is thus lower than or equal to the original value). See App.~\ref{app:constant} for an affine-invariant formulation.

 \textbf{Concavity in the kernel.} As for quantum divergences~\cite{bach2022information,bach2022sum}, the estimate is concave in the kernel function $(x,y) \mapsto \varphi(x)^\top \varphi(y)$, and hence in the kernel matrix. This  will be leveraged in \mysec{featlearn}.

\textbf{Link with $f$-divergence.} We always obtain a lower bound, which is tight under certain conditions on the representability of relative density $dp/dq$. See \mysec{infdim}  for a generic sufficient condition under which tightness holds for infinite-dimensional feature spaces (a key deviation from~\cite{bach2022information,bach2022sum} is that features do not need to be normalized).

 \begin{proposition}[Tightness]
 \label{prop:universality} Assume $\varphi: \X \to \rb^m$.
For all probability distributions $p$ and $q$ that have full support and  are mutually absolutely continuous (which implies that $dp/dq$ is finite and strictly positive), 
$ F(p\|q ,\varphi)    \leqslant D(p\|q),$
with equality if and only if for almost all $\rho$ in the support of $\nu$, there exists $\theta(\rho)$ such that for almost all $x \in \X$, $\big( \frac{dp}{dq}(x) - 1 \big) / \big( \rho \frac{dp}{dq}(x) + 1-\rho\big) = \theta(\rho)^\top \varphi(x)$.
\end{proposition}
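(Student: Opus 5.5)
Writing $r = dp/dq$, I would prove the statement $\rho$ by $\rho$ and then integrate against $\nu$. The basic tool is the integral representation $f(t) = \int_0^1 \frac{1}{2}\frac{(t-1)^2}{\rho t + 1-\rho}\, d\nu(\rho)$. Plugging $t = r(x)$ and integrating against $q$ (Tonelli applies, since everything is nonnegative) gives
\[
D(p\|q) = \int_0^1 D_\rho(p\|q)\, d\nu(\rho),
\qquad
D_\rho(p\|q) := \frac{1}{2}\int_\X \frac{(r(x)-1)^2}{\rho r(x) + 1-\rho}\, dq(x).
\]
By \eq{defF}, $F(p\|q,\varphi) = \int_0^1 F_\rho\, d\nu(\rho)$ with $F_\rho := \frac{1}{2}(\mu_p-\mu_q)^\top(\rho\Sigma_p + (1-\rho)\Sigma_q)^{-1}(\mu_p-\mu_q)$. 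It therefore suffices to show two things for each fixed $\rho$: that $F_\rho \leqslant D_\rho$, and that equality holds exactly when the function $g_\rho := (r-1)/(\rho r + 1-\rho)$ is $q$-a.e.\ equal to some $\theta^\top\varphi$.

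For fixed $\rho$, I would use the pointwise quadratic representation already written in \eq{DpqUU}. Define the weight $w_\rho := \rho r + 1-\rho$, which is strictly positive under the assumptions. For any $u$,
\[
J_\rho(u) := \int_\X \Big[u(r-1) - \frac{u^2}{2}\, w_\rho\Big]\, dq = D_\rho - \frac{1}{2}\int_\X w_\rho\, (u - g_\rho)^2\, dq,
\]
which follows by completing the square pointwise. Now restrict to linear functions $u = \theta^\top\varphi$. In that case $J_\rho(\theta^\top\varphi) = \theta^\top(\mu_p-\mu_q) - \frac{1}{2}\theta^\top(\rho\Sigma_p + (1-\rho)\Sigma_q)\theta$, and its supremum over $\theta$ is $F_\rho$. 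This needs the matrix $\rho\Sigma_p + (1-\rho)\Sigma_q$ to be invertible. If it is only semidefinite, I would use the pseudo-inverse; the supremum is still $F_\rho$ because $\mu_p - \mu_q$ lies in the range of that matrix, which is a Cauchy–Schwarz argument with weight $w_\rho$. The identity above then gives
\[
F_\rho = D_\rho - \frac{1}{2}\inf_\theta \int_\X w_\rho\,(\theta^\top\varphi - g_\rho)^2\, dq,
\]
so $F_\rho \leqslant D_\rho$, and integrating in $\rho$ yields the inequality of the proposition.

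For equality, I would study the gap $D - F = \int_0^1 \frac{1}{2}\, e(\rho)\, d\nu(\rho)$, where $e(\rho) := \inf_\theta \int_\X w_\rho(\theta^\top\varphi - g_\rho)^2\, dq \geqslant 0$. This is a weighted least-squares projection error. The infimum is attained because we are projecting onto the finite-dimensional subspace $\{\theta^\top\varphi\}$ in $L^2(w_\rho q)$. Since $w_\rho > 0$ and $q$ has full support, $e(\rho) = 0$ holds if and only if $g_\rho = \theta(\rho)^\top\varphi$ for $q$-almost every $x$, equivalently for almost every $x$ in the sense of the statement. The gap $D - F$ then vanishes if and only if $e(\rho) = 0$ for $\nu$-almost every $\rho$. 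For this last step I also need $\rho \mapsto e(\rho)$ to be measurable; I would get that from the closed form $e(\rho) = 2(D_\rho - F_\rho)$, since both terms are measurable in $\rho$.

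The main obstacle I expect is technical rather than conceptual: handling possibly singular moment matrices, and justifying the finiteness and integrability needed so that $D_\rho$ and $F_\rho$ can be subtracted. Both points are handled by the standing bound $r \in [\alpha, 1/\alpha]$. This bound makes $w_\rho$ uniformly bounded above and below, so $g_\rho$ is bounded and all integrals are finite. It also gives $F_\rho \leqslant D_\rho \leqslant \frac{1}{2}\max_{t\in[\alpha,1/\alpha]}(t-1)^2/\min\{t,1\}$, which is uniformly bounded in $\rho$.
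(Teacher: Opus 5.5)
Your proof is correct and takes essentially the paper's route: write $F$ and $D$ as $\nu$-mixtures over $\rho$, restrict the quadratic variational formula \eq{DpqUU} to $u=\theta^\top\varphi$ to get $F_\rho\leqslant D_\rho$, and obtain tightness from the fact that the maximizer is $(r-1)/(\rho r+1-\rho)$. Your completion of the square, which writes the gap as $\frac{1}{2}\int_0^1 e(\rho)\,d\nu(\rho)$ with $e(\rho)$ a weighted least-squares projection error, just spells out the uniqueness, singular-matrix and measurability details that the paper's two-line proof leaves implicit.
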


\subsection{Spectral formulation and closed-form potentials}

\label{sec:var}\label{sec:variational}
The variational formulation of $D(p\|q)$ in \eq{varfdiv} is central in the use of $f$-divergences in practice, with potentials $v:\X \to \rb$ and $w:\X\to \rb$ obtained from the relative density $dp/dq$ as $v(x) = f'( dp/dq(x))$ and $w(x) = g'(dq/dp(x))$, where $g(t) = t f(1/t)$ is the function recovering the same value of the $f$-divergence when $p$ and $q$ are swapped. It turns out that our new closed-form divergence also leads to estimates of these potentials. We assume that we are in a finite-dimensional setting, as this is what we obtain in practice (where the added regularizer allows us to use the representer theorem, as shown in \mysec{estimate}). 
An integral representation, obtained by integrating the solutions to \eq{DpqUU}, valid even for infinite-dimensional $\H$, can be found in App.~\ref{app:var}. 

\begin{proposition}
\label{prop:closedform} Assume that $\H$ is finite-dimensional and $\Sigma_q$ invertible.
Consider the generalized eigenvalue decomposition of the pair $(\Sigma_p,\Sigma_q)$, that is, with a basis $(v_i)_{i  \in\{1,\dots,m\}}$ such that $  v_i^\top \Sigma_q v_{j}   = 1_{i=j}$ and $\Sigma_p v_i = \lambda_i \Sigma_q v_i$. Define $v(x) =  \varphi(x)^\top M \varphi(x)  + 2   c^\top \varphi(x)$ and $w(x) = \varphi(x)^\top N \varphi(x) - 2   c ^\top \varphi(x) $, as quadratic functions of $\varphi(x)$, with

\vspace*{-.5cm}

 \BEAS
\textstyle c    
 & = &   \sum_{i = 1}^m  \frac{{f}(\lambda_i)}{(\lambda_i-1)^2} v_i v_i^\top (\mu_p-\mu_q) \\[-.1cm]
M  & = &   \sum_{i,j =1}^m   (\mu_p-\mu_q)^\top  v_i  v_j^\top   (\mu_p-\mu_q)   \frac{f(\lambda_i)/(\lambda_i-1)^2-f(\lambda_j)/(\lambda_j-1)^2}{\lambda_i - \lambda_j} v_i  v_j^\top  \\[-.1cm]
N   & = &   - 
 \sum_{i,j = 1}^m  (\mu_p-\mu_q)^\top  v_i  v_j^\top   (\mu_p-\mu_q)  \frac{f(\lambda_i) \lambda_i /(\lambda_i-1)^2-f(\lambda_j) \lambda_j /(\lambda_j-1)^2}{\lambda_i - \lambda_j}  v_i   v_j^\top ; \\[-.4cm]
 \EEAS
$v$ and $w$ are feasible for \eq{varfdiv}, and $F(p\|q,\varphi) 
 = \int_{\X} v(x) dp(x) + \int_{\X} w(x) dq(x)$, leading to \eq{defFspec} 
  (above, the divided differences are replaced by the corresponding derivatives when $\lambda_i = \lambda_j$).
\end{proposition}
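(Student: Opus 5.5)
The plan is to build $v,w$ by integrating over $\rho$ the explicit optimal quadratic potentials of the weighted chi-squared problems \eq{DpqUU}, and then to diagonalize everything in the generalized eigenbasis. For a fixed $\rho$, restricting \eq{DpqUU} to $u_\rho(x)=\theta_\rho^\top\varphi(x)$ gives the least-squares problem
\[
\max_\theta\ \theta^\top(\mu_p-\mu_q)-\tfrac12\theta^\top\Sigma_\rho\theta,
\qquad \Sigma_\rho=\rho\Sigma_p+(1-\rho)\Sigma_q .
\]
Its solution is $\theta_\rho=\Sigma_\rho^{-1}(\mu_p-\mu_q)$, with value $\frac12(\mu_p-\mu_q)^\top\Sigma_\rho^{-1}(\mu_p-\mu_q)$. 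The associated potentials are
\[
v_\rho=u_\rho-\tfrac{\rho}{2}u_\rho^2,
\qquad
w_\rho=-u_\rho-\tfrac{1-\rho}{2}u_\rho^2 .
\]
I will define $v=\int v_\rho\,d\nu(\rho)$ and $w=\int w_\rho\,d\nu(\rho)$.

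\textbf{Feasibility.} For each $\rho$, the pointwise inequality $w_\rho\le -f_\rho^\ast(v_\rho)$ holds, where $f_\rho(t)=\frac12\frac{(t-1)^2}{\rho t+1-\rho}$. This follows from the representation $f_\rho(t)=\sup_u\, u(t-1)-\frac{u^2}{2}(\rho t+1-\rho)$: for every $t>0$ and every $u$ we have $v_\rho t+w_\rho\le f_\rho(t)$, which is exactly the statement $w_\rho\le -f_\rho^\ast(v_\rho)$ on $\rb_+^\ast$. Integrating in $\nu$ gives $v(x)t+w(x)\le\int f_\rho(t)\,d\nu=f(t)$ for all $t>0$, hence $w\le -f^\ast(v)$. (Here $f^\ast$ only matters on $\rb_+^\ast$, since the domain of $f$ contains $\rb_+^\ast$.)

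\textbf{Value.} At the optimum $\theta_\rho$, we have $\int v_\rho\,dp+\int w_\rho\,dq=\theta_\rho^\top(\mu_p-\mu_q)-\frac12\theta_\rho^\top\Sigma_\rho\theta_\rho=\frac12(\mu_p-\mu_q)^\top\Sigma_\rho^{-1}(\mu_p-\mu_q)$. Integrating over $\rho$ yields $F(p\|q,\varphi)$ as in \eq{defF}.

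\textbf{Identifying $c$, $M$, $N$.} Let $\delta=\mu_p-\mu_q$ and write the generalized eigendecomposition as $\Sigma_\rho^{-1}=\sum_i (\rho\lambda_i+1-\rho)^{-1}v_iv_i^\top$, using $V^\top\Sigma_qV=I$ and $V^\top\Sigma_pV=\Lambda$. The key scalar identity is
\[
\int_0^1\frac{d\nu(\rho)}{\rho\lambda+1-\rho}=\frac{2f(\lambda)}{(\lambda-1)^2},
\]
obtained by dividing $f(\lambda)=\int\frac12\frac{(\lambda-1)^2}{\rho\lambda+1-\rho}\,d\nu$ by $(\lambda-1)^2/2$, with the limit $f''(1)=1$ at $\lambda=1$.
\begin{itemize}
\item The linear part of $v$ is $\int\theta_\rho\,d\nu=\sum_i\frac{2f(\lambda_i)}{(\lambda_i-1)^2}v_iv_i^\top\delta=2c$, and the linear part of $w$ is $-2c$. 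Plugging into the value formula immediately gives \eq{defFspec}.
\item The quadratic part of $v$ is $M=-\frac12\int\rho\,\Sigma_\rho^{-1}\delta\delta^\top\Sigma_\rho^{-1}\,d\nu$. Its $(i,j)$ coefficient is $-\frac12(\delta^\top v_i)(\delta^\top v_j)\int\frac{\rho\,d\nu}{(\rho\lambda_i+1-\rho)(\rho\lambda_j+1-\rho)}$. The partial-fraction identity
\[
\frac{\rho}{a_ia_j}=-\frac{1}{\lambda_i-\lambda_j}\Big(\frac1{a_i}-\frac1{a_j}\Big),\qquad a_k=\rho\lambda_k+1-\rho,
\]
holds because $a_i-a_j=\rho(\lambda_i-\lambda_j)$. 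With the scalar identity this gives the divided difference of $\lambda\mapsto f(\lambda)/(\lambda-1)^2$, which is the stated $M$.
\item For $N=-\frac12\int(1-\rho)\Sigma_\rho^{-1}\delta\delta^\top\Sigma_\rho^{-1}\,d\nu$, the analogous identity uses $\lambda_ja_i-\lambda_ia_j=(1-\rho)(\lambda_j-\lambda_i)$. This gives $\frac{1-\rho}{a_ia_j}=\frac{1}{\lambda_i-\lambda_j}\big(\frac{\lambda_i}{a_i}-\frac{\lambda_j}{a_j}\big)$, which produces the divided difference of $\lambda f(\lambda)/(\lambda-1)^2$ with the minus sign, as stated.
\item When $\lambda_i=\lambda_j$, the integrands become $\rho/a_i^2$ and $(1-\rho)/a_i^2$, and differentiating the scalar identity in $\lambda$ gives the derivative versions.
\end{itemize}

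\textbf{Main obstacle.} I expect the hardest step to be feasibility. One must check carefully that the pointwise bound $v_\rho t+w_\rho\le f_\rho(t)$ survives integration even though the $u_\rho$ are not individually optimal for each $x$; the sup representation handles this uniformly in $u$. One must also justify exchanging $\int d\nu$ with the moment expectations. This is easy in finite dimension since $\Sigma_\rho\succeq\min(1,\lambda_{\min})\Sigma_q$ is uniformly invertible, but it needs $\lambda_i>0$, which follows if $\Sigma_p$ is invertible or, more generally, by a continuity argument. The remaining bookkeeping is routine.
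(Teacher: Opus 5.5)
Your proposal is correct and follows essentially the paper's route: the paper's candidates $c=\int c_\rho\,d\nu$, $M=\int M_\rho\,d\nu$, $N=\int N_\rho\,d\nu$, with $c_\rho=\frac12\Sigma_\rho^{-1}(\mu_p-\mu_q)$, $M_\rho=-2\rho\, c_\rho c_\rho^\top$ and $N_\rho=-2(1-\rho)\, c_\rho c_\rho^\top$, are exactly your integrated per-$\rho$ potentials, and the paper derives the closed forms from the same partial-fraction identities. The differences are minor: the paper certifies feasibility through the semidefinite constraint of its auxiliary problem $G$, whose dual additionally shows these quadratic potentials are optimal, whereas your pointwise sup-representation argument is all the statement needs; and the condition $\lambda_i>0$ you flag needs no continuity argument, since the standing assumption $dp/dq\in[\alpha,1/\alpha]$ gives $\alpha\Sigma_q\preccurlyeq\Sigma_p\preccurlyeq\alpha^{-1}\Sigma_q$, hence $\lambda_i\in[\alpha,1/\alpha]$.
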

We can make the following observations:

\vspace*{-.2cm}

\begin{itemize}[itemsep=2pt, parsep=0pt, topsep=2pt]

\item The proof in App.~\ref{app:var} is based on providing a variational formulation of $F(p\|q,\varphi) $ adapted from~\cite{bach2022sum}, which is an explicit convex relaxation of \eq{varfdiv} using quadratic functions for $v$ and $w$ and semi-definite constraints to make them feasible, akin to sum-of-squares optimization~\cite{henrion2020moment}.  It turns out that $M,N,2c$ are the respective partial derivatives of $F(p\|q,\varphi)$ seen as a function of $\Sigma_p, \Sigma_q, \mu_p-\mu_q$. Prop.~\ref{prop:universality} gives sufficient conditions for the potentials given in Prop.~\ref{prop:closedform} to be equal to the optimal ones.
\item When $\Sigma_q$ is not invertible, in particular when using empirical estimates in \mysec{estimate}, we add to $\Sigma_q$ and $\Sigma_p$ the matrix $\lambda \idm$ (Prop.~\ref{prop:closedform} then also applies).  
 
\item For the Pearson divergence, i.e., $f(t) = \frac{1}{2} (t-1)^2$, $M$ in Prop.~\ref{prop:closedform} is equal to zero, and we obtain a linear form in $\varphi(x)$ that aims to estimate $dp/dq(x) -1$, exactly recovering existing work~\cite{eric2007testing,harchaoui2008testing,ribero2026regularized}. Note that when the relative density is close to zero, there is an issue with this estimate, which may be negative (when $v(x)$ is less than $-1$, see \myfig{examples_regular} in App.~\ref{app:experiments}).
\item For the KL divergence, i.e., $f(t) = t \log t - t + 1$, $v(x) = \log (dp/dq(x))$ is estimated by a quadratic form in $\varphi(x)$. This provides a new way to perform relative log-density estimation in closed form, illustrated in \myfig{examples_regular} in App.~\ref{app:experiments}.
\EIT

\vspace*{-.2cm}

Overall, we obtain a closed-form expression for an estimate of the relative log-density. As shown in \myfig{examples_regular} in App.~\ref{app:experiments}, for many observations, approaching the population limit, this estimator is close to the one obtained from the variational method as soon as the feature space is large enough.
 
\subsection{Regularized estimation and kernelization}
\label{sec:estimate}
We are given $n_p$ points $x_1,\dots,x_{n_p}$ and $n_q$ points $y_1,\dots,y_{n_q}$, sampled independently and identically distributed (i.i.d.) from $p$ and~$q$, respectively. Following a long series of work on kernel measures of dissimilarity between distributions~\cite{bach2002kernel,gretton2012kernel,harchaoui2008testing,bach2022information,bach2022sum}, a natural estimator for 
$F(p\|q,\varphi) $ is  obtained by  (1) adding a regularizer, which corresponds to replacing $ {\Sigma}_p$, $ {\Sigma}_q $ with $ {\Sigma}_p +\lambda \idm$, $ {\Sigma}_q+\lambda \idm$, where $\lambda > 0$ is a regularization parameter,
which leads to the definition of a regularized divergence (for which Prop.~\ref{prop:closedform} can be used to derive spectral estimates)
\BEQ
\label{eq:defFreg}
 F_\lambda(p\|q ,\varphi)  =   \frac{1}{2}
\int_0^1 ( \mu_p - \mu_q)^\top (\rho \Sigma_p +(1-\rho)\Sigma_q + \lambda \idm)^{-1} ( \mu_p - \mu_q) 
d\nu(\rho) ,
\EEQ
and
(2) considering the empirical moment matrices, which correspond to using the empirical distributions $\hat{p}$ and $\hat{q}$. This also corresponds to estimating each function $u_\rho: \X \to \rb$ in \eq{DpqUU} as $\theta_\rho^\top \varphi(x)$, and penalizing with $-\frac{\lambda}{2} \int_0^1 \| \theta_\rho\| ^2 d\nu(\rho)$.

When the $m$-dimensional feature map $\varphi$ is given explicitly, we can compute the data matrices $\Phi_p \in \rb^{n_p \times m}$ and $\Phi_q \in \rb^{n_q \times m}$ whose rows are feature vectors. We then have empirical moments $ 
\hat{\mu}_p = \frac{1}{n_p} \Phi_p^\top 1_{n_p}, \ \hat{\mu}_q = \frac{1}{n_q} \Phi_q^\top 1_{n_q}, \ \hat{\Sigma}_p = \frac{1}{n_p} \Phi_p^\top \Phi_p, \ \hat{\Sigma}_q = \frac{1}{n_q} \Phi_q^\top \Phi_q
$ (that can be computed in running time $O(m^2n)$, with $n = n_p+n_q$),
and we simply use \eq{defFspec}, with cost $O(m^3)$.

\textbf{Kernelization.} A first step towards infinite-dimensional spaces is to show that the estimator can be computed solely based on dot-products between features. This is possible because $F_\lambda$ is defined as an integral over $\rho$, and each summand is a classical least-squares problem where the usual representer theorem can be used~\cite{scholkopf2002learning}. See App.~\ref{sec:effcomp} for explicit expressions. In practice, to avoid computing the full kernel matrix, we can consider explicit feature maps constructed from the kernel functions, using random features~\cite{rahimi2008random,rudi2017generalization} or the Nystr\"om extension~\cite{mahoney2009cur,martinsson2020randomized,williams2000using,rudi2015less}. In \mysec{experiments}, we use random features at the interface between neural networks and multivariate splines~(see, e.g., \cite{bach2023relationship}).

\subsection{Mutual information}
\label{sec:mutual}
\label{sec:MI}

Given two generic spaces $\X_1$, $\X_2$ and a joint distribution $p$ on $\X_1\times \X_2$, let $q$  be the product of its marginal distributions. Then $D(p\|q)$, which we will refer to as $I(p)$, is exactly the mutual information (MI) when $D$ is the Kullback-Leibler divergence. This measure of statistical dependence extends to all $f$-divergences~\cite{csiszar1967information,letizia2024mutual}, and the variational formulation from \eq{varDPq} has been considered before~\cite{suzuki2010sufficient,belghazi2018mine,poole2019variational}.  This provides a special case of independent interest, in particular as it provides a generic way to obtain conditional densities using Prop.~\ref{prop:closedform} (such as for logistic or softmax regression, see App.~\ref{app:experiments}). In this section, we only consider two random variables $x_1$ and $x_2$, but the framework generalizes to mutual information between more than two random variables.

\textbf{Expression of $F$ using factorized moments.}
Given two variables $(x_1,x_2) \in \X_1 \times \X_2$, with feature map $\varphi_1:\X_1\to\H_1$ and $\varphi_2: \X_2 \to \mathcal{H}_2$, the first two moments for the specific feature map $\varphi_2(x_2) \otimes \varphi_1(x_1) $, which is the Kronecker product of $\varphi_1(x_1)$ and  $\varphi_2(x_2)$, can be computed as follows, with all expectations taken with respect to the joint distribution of $(x_1,x_2)$:
\BEAS
\!\!\!\!\!\!\!\!\!\!& & \mu_p = \E [ \varphi_2(x_2) \otimes \varphi_1(x_1) ] , \ \ \hspace*{3.05cm}
\mu_q =  \E[ \varphi_2(x_2) ] \otimes \E [ \varphi_1(x_1) ]    \\[-.075cm]
\!\!\!\!\! \!\!\!\!\!&&\Sigma_p = \E [  \varphi_2(x_2)\varphi_2(x_2)^\top \otimes   \varphi_1(x_1)\varphi_1(x_1)^\top   ] ,  \ \ \ \
\Sigma_q = \E [\varphi_2(x_2)\varphi_2(x_2)^\top ]   \otimes \E[ \varphi_1(x_1)\varphi_1(x_1) ^\top   ]  ,
\EEAS
with natural empirical estimates based on the empirical distribution of data $(x_{1i},x_{2i})$, $i=1,\dots,n$.

\textbf{Closed-form estimation.} For the KL divergence, the potential $v(x_1,x_2)$ that is the solution of the variational problem in \eq{varDPq} is  $f'(dp(x_1,x_2) / dp(x_1)dp(x_2))
= \log [ dp(x_1,x_2) / dp(x_1)dp(x_2)] = \log [ dp(x_2|x_1)/ dp(x_2)]$, where we use the usual graphical model convention that $p(x_i)$ is the marginal distribution on the variable $x_i$. If $m_1 = {\rm dim}(\H_1)$ and $m_2 = {\rm dim}(\H_2)$, the running-time complexity is $O(m_1^2m_2^2 n + m_1^3 m_2^3)$, which is not practical as soon as $m_1$ and $m_2$ exceed 100. This is where the feature learning algorithms from \mysec{featlearn} are crucial.

\textbf{Link with softmax regression.} For $\X_2$ finite with $m_2$ elements, where the marginal probability over $x_2$ can easily be estimated, and we use one-hot encodings for $x_2$, this leads to an estimate of the logarithm of the conditional probabilities of $x_2$ given~$x_1$, which we write $p(x_2|x_1)$ (more precisely, $v(x_1,x_2)$ estimates $\log p(x_2|x_1) - \log p(x_2)$). Our framework thus leads to an estimate of $\log p(x_2|x_1)$ that
is, for each value of $x_2$, a quadratic function of $\varphi(x_1)$, using only first- and second-order moments of class-conditional distributions, without the need to solve any optimization problem (as logistic or softmax regression would require); note that these estimates are not normalized, that is, $\sum_{x_2 \in \X_2} \exp( v(x_1,x_2) + \log p(x_2))$ may not be equal to $1$.

In terms of computations, for this special case, we only need, for each $j \in \{1,\dots,m_2\}$, class-conditional frequencies $\pi_j$, means $\mu_j$, and second-moment $\Sigma_j$, to compute (potentially in parallel) the $m_2$ generalized eigenvalue decompositions of the pairs $(\Sigma_j,\Sigma)$, and then apply Prop.~\ref{prop:closedform} (see App.~\ref{app:softmax} for explicit expressions). This can be done in running time $O(m_1^2n)$ to compute moments and $O(m_2m_1^3)$ for the eigenvalue decompositions (which is advantageous compared with softmax regression using Newton’s method, which is $O(m_2^2m_1^2n+m_2^3 m_1^3)$)---see \mysec{featlearn} below for an approximation algorithm with complexity $O(m_1m_2 n)$ per iteration. \myfig{examples_MI} in App.~\ref{app:experiments} illustrates how our estimator performs similarly to softmax regression while remaining closed form.

\section{Feature learning algorithms}

\label{sec:featlearn}
A key issue with kernel-based methods is the choice of kernel. In our context, as in~\cite{bach2022information,bach2022sum}, our divergences, which are lower bounds on the true value $D(p\|q)$, are concave in the kernel, making them natural candidates for various frameworks for kernel learning, such as multiple kernel learning~\cite{bach2004multiple,rakotomamonjy2008simplemkl,gonen2011multiple} or continuous spline learning~\cite{follain2025enhanced}. In this paper, we focus on a more versatile direct optimization method amenable to nonlinear extensions based on neural networks.

Given two distributions $p$ and $q$ available via samples, we aim to learn a parameterized feature map $\psi_\theta: \X\to \rb^m$, with any parameterized function $\psi_\theta$. If we manage to find a map such that the obtained $F$-divergence is a good approximation of $D(p\|q)$ so that the optimal $v(x)$ and $w(x)$ are close to quadratic functions of $\psi_\theta(x)$, then by looking at equality cases in the data-processing inequality~\cite{polyanskiy2025information}, we see that the relative density $dp/dq$ has to be a function of $\psi_\theta(x)$. Thus, we can view this feature learning as the task of identifying sufficient statistics for relative density estimation. For finite $\X$ and one-hot encodings, a single feature (which is $dp/dq(x)$) is enough for the exact divergence, but our relaxation may still require more; we use $r=4$ for our generic density-ratio experiments (\myfig{comparisonNN}, right), more features are required for the mutual information (see \myfig{comparisonNN}, left).

\textbf{Linear learning from a fixed feature map.} This corresponds to maximizing with respect to $\Gamma \in \rb^{m \times r}$ the quantity $F(\hat{p}\| \hat{q}, \Gamma^\top \varphi)$, with a regularization of second moments by $\lambda \Gamma^\top \Gamma$ rather than $\lambda \idm$, so that this exactly provides a low-rank approximation from below of $F(\hat{p}\| \hat{q}, \varphi)$.

Since $F$ is a convex function, we can use a ``minorization-maximization'' algorithm~\cite{hunter2004tutorial,mairal2013stochastic} where $F$ is locally approximated by its tangent as a function of the moments, with a linear term exactly equal
to (with $M,N,c$ defined in Prop.~\ref{prop:closedform}): 
\[
\tr\Big[ M \!\!\int_\X \! \Gamma^\top \varphi(x) \varphi(x)^\top \Gamma d\hat{p}(x) \Big]
+ \tr\Big[ N \!\!\int_\X \! \Gamma^\top \varphi(x) \varphi(x)^\top \Gamma d\hat{q}(x)  \Big] + 2c^\top \!\!
\int_\X \! \Gamma^\top \varphi(x) (d\hat{p}(x)-d\hat{q}(x)).\]
This quadratic lower bound can be maximized in closed form by solving a linear system, with complexity $O(m^3 r^3)$ once the full moments are available in $O(m^2n)$, but the system above can be solved approximately with a few iterations of conjugate gradient~\cite{golub83matrix} with complexity $O(rmn)$ to compute each gradient, and $O(r^3)$ to apply Prop.~\ref{prop:closedform}, which is an improvement over the full rank version in $O(m^3 + m^2n)$---we consider $r=4$ in experiments.   Overall, this is a monotonic algorithm with no convergence guarantees unless $\nu$ is a Dirac (i.e., a weighted chi-squared divergence), in which case it reduces to the power method for singular values~\cite{golub83matrix}.

\textbf{Gradient ascent and stochastic approximation.} We can consider a generic feature map $\psi_\Gamma: \X \to \rb^r$ and consider a lower bound on $F(\hat{p}\|\hat{q},\psi_\Gamma)$ of the form
\[
\tr\Big[ M \!\!\int_\X \!  \psi_\Gamma(x)  \psi_\Gamma(x)^\top \hat{p}(x) \Big]
+ \tr\Big[ N \!\!\int_\X \!  \psi_\Gamma(x)  \psi_\Gamma(x)^\top\hat{q}(x)  \Big] + 2c^\top \!\!
\int_\X \!  \psi_\Gamma(x) (d\hat{p}(x)-d\hat{q}(x)).\]
We make a few gradient ascent steps with respect to $\Gamma$, and then recompute the reduced moments and update $M,N,c$ with Prop.~\ref{prop:closedform}. For small step-sizes, this is an ascent algorithm that needs to recompute the reduced moments at every iteration and compute the gradient of $\psi_\theta$ at each observation. %When the number of observations is large, this is not feasible.

When the number $n$ of observations is large, if $M,N,c$ are fixed, we can use stochastic gradient ascent for $\Gamma$, with the update, after seeing $x,y$,
$
\Gamma \leftarrow \Gamma + \gamma \nabla_\Gamma \big[ \tr( \psi_\Gamma(x)^\top M   \psi_\Gamma(x))   +\tr( \psi_\Gamma(y)^\top N   \psi_\Gamma(y))   + 2c^\top(\psi_\Gamma(x) - \psi_\Gamma(y))\big]
$. Similarly to online expectation-maximization (EM)~\cite{cappe2009line}, we can accumulate/update reduced moments of size $r$ and $r \times r$ with, e.g., $\Sigma_p \leftarrow \Sigma_p ( 1 - \gamma) + \gamma\psi_\Gamma(x)  \psi_\Gamma(x)^\top $,
and update the matrices $M,N,c$ at every stochastic gradient step (or any batch) through Prop.~\ref{prop:closedform}. These algorithms currently come with no theoretical guarantees (although we conjecture that stationary points are the only possible limits). In our experiments, we use multiple-pass stochastic gradient ascent with re-evaluation of the moments every few passes over the data.

\textbf{Regularization.} We choose terms that correspond to regularizing each least-squares estimation of~$u$ for each $\rho \in [0,1]$ in \eq{DpqUU} with a penalty term that encourages the desired learning behaviors (e.g., learning of shared representations). For example, for $\psi_\Gamma(x) = \Gamma^\top \varphi(x)$ for $\Gamma \in \rb^{m \times r}$ and $\varphi(x)$ a fixed feature map in $\rb^m$,  we regularize $\Sigma_p$ by replacing it with $\Sigma_p + \lambda \Gamma^\top \Gamma$ (and similarly for $\Sigma_q$).

\textbf{Neural networks.} For $x \in \rb^d$, we consider the feature map
$ \Psi_{\{w,b\}}(x) =  ((w_j^\top x + b_j)_+)_{j \in \{1,\dots,m\}}$ with parameters $w_j \in \rb^d$ and $b_j \in \rb$, for $j \in \{1,\dots,m\}$, and maximize the regularized functional $F_\lambda(\hat{p}\| \hat{q},\Psi_{\{w,b\}}) - \frac{\lambda}{2} \sum_{j=1}^m \{ \| w_j\|^2 + b_j^2\}$. As shown in  App.~\ref{app:proofneuraldecay}, this corresponds to neural networks for each $\rho$-dependent problem in \eq{DpqUU}, with a shared input layer and weight decay. Theoretical results in \mysec{theoryNN} are provided for this architecture; however, it requires applying Prop.~\ref{prop:closedform} with $m$ that can be quite large. In our experiment, we consider the feature map $\Lambda^\top \Psi_{\{w,b\}}$ for $\Lambda \in \rb^{m \times r}$ (as done for linear learning earlier in this section), for $r=4$.

\textbf{Mutual information.} 
For the mutual information case from \mysec{MI}, we need to learn two feature maps $\psi_{\Gamma_1}^{(1)}(x_1)$ and $\psi_{\Gamma_2}^{(2)}(x_2)$, one for each variable, with essentially the same techniques (see App.~\ref{app:featlearnMI}, including, in particular, a link with canonical correlation analysis). A key difference from the generic case is that the number of required features is larger since the equality case in the data-processing inequality is that $x_1$ and $x_2$ are independent given $\psi_{\Gamma_1}^{(1)}(x_1)$ and $\psi_{\Gamma_2}^{(2)}(x_2)$, and there is, even in the simplest discrete situations, no simple one-dimensional feature that will capture the entire dependence. In the left plot of \myfig{comparisonNN}, we show how $r=24$ for both feature maps is enough in a simple case.

\section{Theoretical results}
\label{sec:theory}
We now provide convergence rates for the estimation of $f$-divergences in a canonical example:  $\X$ is the unit Euclidean ball in $\rb^d$, while $p$ and $q$  have strictly positive densities (with respect to the Lebesgue measure) with sufficiently many derivatives. While the results presented in this section can be made more general, this case allows comparison with existing results, and follows the framework of~\cite{bach2017breaking}, which tackles both kernel methods and neural networks. We consider the setup of \mysec{estimate}, with $n = \min \{ n_p, n_q\}$.

\subsection{Convergence rate for kernel methods}
\label{sec:infdim}
We assume that $\varphi: \X \to \H$ is the canonical feature map associated with the positive definite kernel from~\cite{bach2017breaking}, which is essentially the limit of random features of the form $(w^\top x + b)_+^\kappa$ for some $\kappa$, a non-negative integer (this is how we can implement it in practice, which is more efficient than using the kernel trick). This is a universal kernel that can potentially learn any function~\cite{sriperumbudur2011universality}.

\begin{proposition}
\label{prop:kernelrate}
Assume the densities of $p$ and $q$ are strictly positive and have $t \leqslant \kappa+\frac{d+1}{2}$ square-integrable derivatives. For the kernel defined from the $\kappa$-th power of the ReLU activation, we have, for a regularization parameter $\lambda \propto 1/ {n^{\frac{\kappa+{(d+1)}/{2}}{t+d/2}}}$, for a constant $C$ independent of $n$,
\BEQ
\E\big[ | F_\lambda(\hat{p},\hat{q}, \varphi )  - D(p\|q) | \big]  \leqslant C \big(  {n^{- \frac{t}{t+d/2}}} +  D(p\|q)   {n^{-\frac{t/2}{t+d/2}}} \sqrt{\log n} + n^{-1/2} \sqrt{ D(p\|q)}\big).
\EEQ
\end{proposition}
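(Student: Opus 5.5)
We split the error into a deterministic approximation (bias) term and a stochastic estimation (variance) term:
\[
|F_\lambda(\hat p\|\hat q,\varphi) - D(p\|q)| \leqslant |F_\lambda(p\|q,\varphi) - D(p\|q)| + |F_\lambda(\hat p\|\hat q,\varphi) - F_\lambda(p\|q,\varphi)|.
\]
Both terms are handled $\rho$ by $\rho$ through the integral representation in \eq{defFreg}. For each $\rho \in [0,1]$ we work with the least-squares problem \eq{DpqUU}, penalized by $\frac{\lambda}{2}\|\theta\|^2$. Its unpenalized population optimum is $u_\rho^\ast = (r-1)/(\rho r + 1-\rho)$, with $r = dp/dq \in [\alpha,1/\alpha]$. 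This function is smooth: since $r$ is bounded away from zero, the map $r \mapsto u_\rho^\ast$ is smooth on $[\alpha,1/\alpha]$ uniformly in $\rho$. By composition, $u_\rho^\ast$ therefore inherits $t$ square-integrable derivatives from the densities, with norms bounded uniformly in $\rho$.

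\textbf{Bias.} With $\Sigma_\rho = \rho\Sigma_p + (1-\rho)\Sigma_q$ (an operator on $\H$) and $g_\rho = \mu_p-\mu_q$, the gap is
\[
\tfrac12 g_\rho^\top\Sigma_\rho^{-1}g_\rho - \tfrac12 g_\rho^\top(\Sigma_\rho+\lambda I)^{-1}g_\rho ,
\]
which is exactly the regularized least-squares approximation error $\tfrac12\inf_{\theta}\{\|\theta^\top\varphi - u^\ast_\rho\|^2_{L_2(\rho p+(1-\rho)q)} + \lambda\|\theta\|^2\}$. Tightness is provided by Prop.~\ref{prop:universality} in its infinite-dimensional form, using universality of the kernel. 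We then invoke the approximation results of~\cite{bach2017breaking} for the ReLU$^\kappa$ kernel: the RKHS is essentially the Sobolev space of order $s = \kappa + \frac{d+1}{2}$ on the sphere/ball. A function with $t\leqslant s$ derivatives therefore has approximation error of order $\lambda^{t/s}$, i.e.\ source condition exponent $t/s$. Integrating over $\nu$ gives a bias of order $\lambda^{t/s}$. With $\lambda \propto n^{-s/(t+d/2)}$, this becomes $n^{-t/(t+d/2)}$, the first term of the bound.

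\textbf{Variance.} We write $F_\lambda$ as a smooth function of $(\mu_p-\mu_q,\Sigma_p,\Sigma_q)$ and linearize. The linear term involves $g_\rho^\top(\Sigma_\rho+\lambda)^{-1}(\hat g - g)$, i.e.\ an empirical average of $\hat u_\rho(x)$ under $\hat p-p$ and $\hat q - q$. Its standard deviation is $n^{-1/2}\|u_\rho\|_{L_2}$, and $\|u_\rho\|_{L_2}^2 \lesssim D(p\|q)$. This yields the $n^{-1/2}\sqrt{D(p\|q)}$ term. The covariance perturbation term is
\[
g^\top(\Sigma_\rho+\lambda)^{-1}(\hat\Sigma_\rho - \Sigma_\rho)(\Sigma_\rho+\lambda)^{-1} g .
\]
We bound it by $\|(\Sigma_\rho+\lambda)^{-1/2}(\hat\Sigma_\rho - \Sigma_\rho)(\Sigma_\rho+\lambda)^{-1/2}\|_{\rm op}$ times $g^\top(\Sigma_\rho+\lambda)^{-1}g \lesssim D(p\|q)$. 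A matrix Bernstein inequality for operators (as in~\cite{rudi2017generalization}) bounds this operator norm by $\sqrt{\mathcal{N}_\infty(\lambda)\log n/n}$, where $\mathcal{N}_\infty(\lambda) = \sup_x \|(\Sigma_\rho+\lambda)^{-1/2}\varphi(x)\|^2$. For this kernel, with eigenvalue decay of order $i^{-2s/d}$, we get $\mathcal{N}_\infty(\lambda)\lesssim \lambda^{-d/(2s)}$. Plugging in $\lambda$ gives $\sqrt{\log n}\, n^{-1/2} n^{\frac{d/4}{t+d/2}} = \sqrt{\log n}\, n^{-\frac{t/2}{t+d/2}}$, matching the second term. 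Higher-order remainders are controlled on the event that this operator norm is at most $1/2$, via a Neumann series; on the complement, we use the crude bound $F_\lambda\leqslant \max\{f(\alpha),f(1/\alpha)\}$ together with its exponentially small probability.

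\textbf{Main obstacle.} We expect the hardest step to be making the bias statement uniform in $\rho$ and rigorous. The kernel's RKHS norm is tied to the uniform measure on the ball, while the least-squares error is measured in $L_2(\rho p+(1-\rho)q)$. Since the densities are bounded above and below, these norms are equivalent up to constants, so we first try to transfer the interpolation-space characterization of~\cite{bach2017breaking} directly. Two further points need care. First, the smoothness of $u^\ast_\rho$ must be controlled uniformly as $\rho$ varies, which we expect from boundedness of $r$. Second, we must check that the constraint $t\leqslant s$ is exactly what prevents saturation of the Tikhonov bias rate.
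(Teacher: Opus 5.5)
\textbf{There is a genuine gap in your variance step.} Write $g=\mu_p-\mu_q$, $\delta=(\hat\mu_p-\hat\mu_q)-g$, $A_\rho=\Sigma_\rho+\lambda I$ and $\hat A_\rho=\hat\Sigma_\rho+\lambda I$. When you expand $(g+\delta)^\top\hat A_\rho^{-1}(g+\delta)$, you keep only the linear term $2g^\top A_\rho^{-1}\delta$ and the first-order covariance term. Everything else goes to ``higher-order remainders'' handled by a Neumann series.

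The Neumann series only controls powers of $\hat\Sigma_\rho-\Sigma_\rho$. The term $\delta^\top A_\rho^{-1}\delta$, quadratic in the empirical means, is not such a remainder. It is nonnegative and does not scale with $D(p\|q)$. Its expectation is $\tr[A_\rho^{-1}(C_p/n_p+C_q/n_q)]$, where $C_p,C_q$ are the centered covariances, which is of order $\mathrm{df}(\lambda)/n\asymp\lambda^{-d/(2s)}/n$.

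A sanity check shows it cannot be dropped. For $p=q$, your bias, linear and covariance terms all vanish, yet $\E[F_\lambda(\hat p\|\hat q,\varphi)]\asymp\mathrm{df}(\lambda)/n>0$. This is the V-statistic term the paper controls via $(\E\|S_\rho\|^4)^{1/2}\leqslant 4\,\mathrm{df}(\lambda)/n$. It is also exactly what the debiased estimator of \mysec{debias} subtracts, and removing it improves the additive rate, so it is not lower order.

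\textbf{Consequences and repair.} Your account of the first term of the bound is therefore incomplete. In the paper, $n^{-t/(t+d/2)}$ comes from balancing the bias $\lambda^{t/s}$ against this $\mathrm{df}(\lambda)/n$ term, and that balance is what dictates $\lambda\propto n^{-s/(t+d/2)}$. In your bookkeeping nothing penalizes small $\lambda$ when $D(p\|q)$ is small, so the prescribed $\lambda$ is left unexplained. To repair this:
\begin{itemize}
\item Bound $\E[\delta^\top A_\rho^{-1}\delta]\leqslant\frac{2}{\alpha n}\tr[A_\rho^{-1}\Sigma_\rho]$, using $\Sigma_p,\Sigma_q\preccurlyeq\alpha^{-1}\Sigma_\rho$. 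At the prescribed $\lambda$ this is $O(n^{-t/(t+d/2)})$.
\item Keep the empirical means inside the quantity you compare $\hat A_\rho^{-1}$ against, as the paper does with $\hat b_\lambda$. Then the relative-perturbation factor acts on the full V-statistic, this term included.
\end{itemize}

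\textbf{A smaller error on the complement event.} There you bound the estimate by $\max\{f(\alpha),f(1/\alpha)\}$. That holds for $F_\lambda(p\|q,\varphi)$, not for $F_\lambda(\hat p\|\hat q,\varphi)$, because $\hat p$ and $\hat q$ are mutually singular and no density-ratio bound is available. The only deterministic control is the ridge bound $F_\lambda(\hat p\|\hat q,\varphi)\leqslant\|\hat\mu_p-\hat\mu_q\|^2/(2\lambda)\leqslant 2R_\varphi^2/\lambda$. This grows polynomially in $n$, so the failure probability must decay faster than any power of $n$; your Bernstein bound does provide that.

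The remaining pieces match the paper's proof:
\begin{itemize}
\item the bias as the regularized approximation error of $u_\rho^\ast$, with rate $\lambda^{t/s}$ saturating at $t=s$;
\item the linear term giving $\sqrt{D(p\|q)/n}$, provided you apply Jensen in $\rho$, since for fixed $\rho$ the squared $L_2$ norm is a weighted $\chi^2$-divergence rather than $D(p\|q)$;
\item relative operator concentration giving $D(p\|q)\sqrt{\mathcal{N}_\infty(\lambda)\log n/n}$.
\end{itemize}
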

This proposition, proved in App.~\ref{app:proofkernel}, provides a rate with an additive term and two multiplicative terms. It slightly improves over the rate in the entropy-estimation setting of~\cite{bach2022information} (see App.~\ref{sec:tradeoffbound}) when $t=1$, but is still inferior to the optimal rate for the estimation of such functionals~\cite{laurent1996efficient}, which will be matched in \mysec{debias} for the additive term. Note that (1) as in supervised learning, we obtain adaptivity to smoothness, (2) we only escape the curse of dimensionality if the degree of smoothness~$t$ is proportional to $d$, and (3) the rates could also be extended to the estimation of mutual information.

\begin{figure}
\begin{center}
\includegraphics[width=5cm]{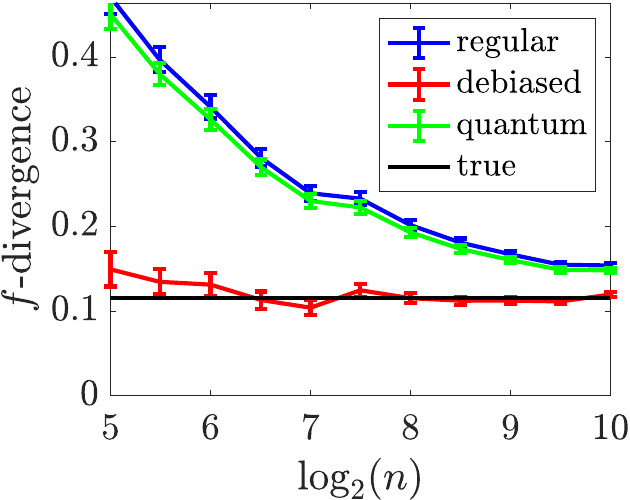} \hspace*{.0cm}
\includegraphics[width=5cm]{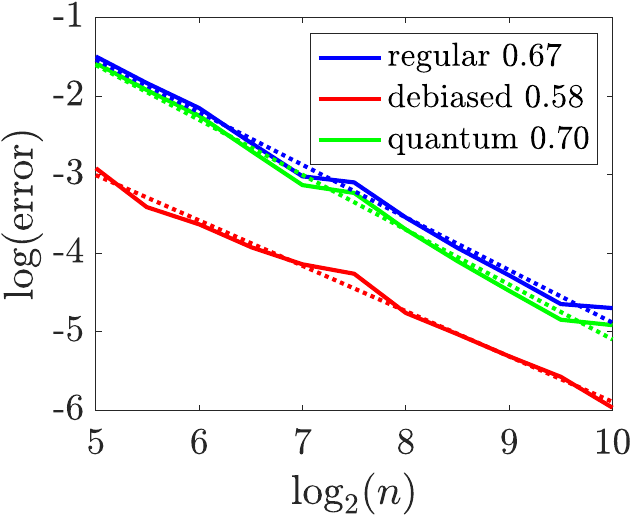}  \hspace*{.2cm} \includegraphics[width=6.5cm]{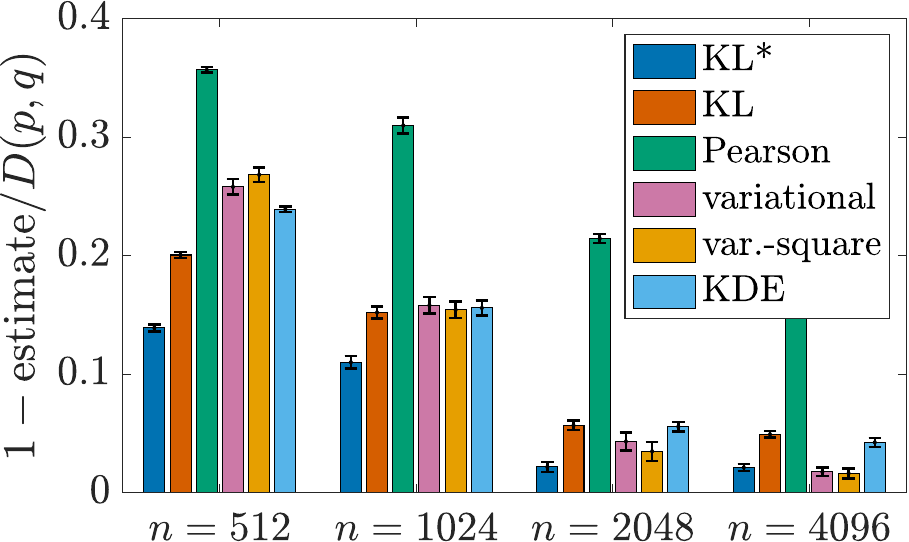}
\end{center}

\vspace*{-.2cm}

\caption{\normalsize \textbf{Left and Center:} Comparison of quantum, regular, and debiased estimators of the KL divergence for $q$ uniform on $[0,1]$ and $p$ with a density having one square-integrable derivative, with a kernel equivalent to $\kappa=0$ in Prop.~\ref{prop:kernelrate}, with a fixed $n$-dependent schedule for $\lambda$.  Left: comparison of estimators with increasing $n$ (mean and standard errors over 64 replications). Center: scaling laws with a power-law fit (decay proportional to  $n^{-\sigma}$ with $\sigma$ obtained from the data); our bound is $\sigma = 2/3$. \textbf{Right:} Comparison of estimators of $v$ and $w$ (lower is better), with periodic data in $[0,1]^2$ mapped to a product of unit circles, with $q$ uniform and $p$ with independent components such that $\log(dp/dq)$ is a sum of a few cosines. We consider the ReLU kernel of order 1~\cite{bach2023relationship}, with $m=512$ random features. We estimate from $n$ observations and test on 1024 observations (for each method, we select the value of the hyperparameter that makes the  error smallest on an independent validation set). ${\rm KL}$ corresponds to our new estimators of $v$ and~$w$, while ${\rm KL}^\ast$ corresponds to using $-f^\ast(v) = 1 - e^{v}$ instead of $w$, which is expected to be better. \label{fig:bias}}

%\vspace*{-.125cm}

\end{figure}

\subsection{Debiased estimation} 
\label{sec:debias}
The estimate above corresponds to estimating for each $\rho$, $( \mu_p - \mu_q) ^\top M(\rho)^{-1} ( \mu_p - \mu_q)  $ by
$ ( \hat{\mu}_p - \hat{\mu}_q)^\top \hat{M}(\rho)^{-1} ( \hat{\mu}_p - \hat{\mu}_q) $,
where $\hat{M}(\rho)$ tends to $M(\rho)$ in probability, which leads to an extra term in the analysis because quadratic functions of unbiased estimators are not generally unbiased. Bias removal is a classical tool when analyzing V-statistics~\cite{lee2019u} (which our estimate is an example of when $\hat{M}(\rho)$ is replaced by $M(\rho)$).  As shown in App.~\ref{app:debiased}, it corresponds to removing an estimate of the bias, that is, subtracting the term $  \frac{1}{2}\int_0^1 \tr [ \hat{C}  \hat{M}(\rho) ^{-1}] d\nu(\rho)$,  with $\hat{C} =  ( \hat{\Sigma}_p\! - \! \hat{\mu}_p   \hat{\mu}_p^\top )/(n_p\!-\!1) 
+  (  \hat{\Sigma}_q \! -  \!\hat{\mu}_q    \hat{\mu}_q^\top )/ (n_q\!-\!1)$ and $\hat{M}(\rho) = \rho \hat{\Sigma}_p + (1\!-\!\rho) \hat{\Sigma}_q + \lambda \idm$. This term is non-negative, has a spectral expression, and leads to an improved rate of convergence when $t > d/4$ (with a smaller $\lambda$ yielding an additive term proportional to $ {1}/{n^{\frac{t}{t+d/4} }}$, which is now optimal~\cite{laurent1996efficient}, instead of $  {1}/{n^{\frac{t}{t+d/2} }}$, and an additional multiplicative term).  We compare in \myfig{bias} the two estimators, showing how debiasing significantly improves estimation.

\subsection{One-hidden layer neural networks}
\label{sec:theoryNN}
Following~\cite{bach2017breaking}, we study the infinite-width limit of neural networks, where the functions $u_\rho$ in \eq{DpqUU} are of the form $\sum_{j=1}^m (\eta_\rho)_j ( w_j^\top x + b_j)_+$, corresponding to an $m$-dimensional feature map $\hat\varphi$ with components $( w_j^\top x + b_j)_+$. We consider a weight decay penalty with parameter $\lambda$,  and let $m$ go to infinity, assuming that we can find the empirical risk minimizer (results from~\cite{chizat2018global} qualitatively suggest that gradient ascent is then globally convergent). See proof in App.~\ref{app:proofneural}. 
\begin{proposition}
\label{prop:latent}
Assume $dp/dq$ is strictly positive, has $t$ square-integrable derivatives, and 
depends only on a projection onto a subspace of dimension $d_{\rm eff}$. When $t  > \frac{d_{\rm eff} +3}{2}$, for $\lambda \propto  {1}/{\sqrt{n}}$ and $n = O(m^2)$, we have $ | F_\lambda(\hat{p}\|\hat{q}, \hat\varphi )  - D(p\|q) | = O(n^{-1/2})$ with high probability.
\end{proposition}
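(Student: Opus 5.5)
The plan is to work through the variational representation of $F_\lambda$ as an integral over $\rho$ of regularized least-squares problems, as in \eq{DpqUU}, and to treat the infinite-width limit with weight decay as a variation-norm (Barron/$\mathcal{F}_1$) penalty, following~\cite{bach2017breaking}. Concretely, by App.~\ref{app:proofneuraldecay}, maximizing $F_\lambda(\hat p\|\hat q,\Psi_{\{w,b\}}) - \frac{\lambda}{2}\sum_j (\|w_j\|^2+b_j^2)$ over the shared first layer is equivalent to solving, for each $\rho$,
\[
\sup_{u_\rho}\ \int_\X \Big[u_\rho - \tfrac{\rho}{2}u_\rho^2\Big]d\hat p + \int_\X\Big[-u_\rho - \tfrac{1-\rho}{2}u_\rho^2\Big]d\hat q
\]
with an overall penalty $\lambda\,\gamma_1(\{u_\rho\}_\rho)$. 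Here $\gamma_1$ is a group variation norm in which all $u_\rho$ share the same neurons, with an $\ell_2$ norm across $\rho$ weighted by $\nu$. As $m\to\infty$, this becomes a convex problem over measures on the sphere of $(w,b)$. I will assume, as the statement does, that the empirical maximizer is found. The argument then splits the error into an approximation term $D(p\|q) - F_\lambda(p\|q,\cdot)$ at the population level and an estimation term $|F_\lambda(\hat p\|\hat q) - F_\lambda(p\|q)|$.

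For the approximation term, the target for each $\rho$ is $u^\ast_\rho = (r-1)/(\rho r + 1 - \rho)$ with $r = dp/dq$. By Prop.~\ref{prop:universality}, this gives $D(p\|q) = \int_0^1 \mathcal{L}_\rho(u^\ast_\rho)\,d\nu(\rho)$. Since $r$ is bounded in $[\alpha,1/\alpha]$, depends only on a projection $\Pi x$ of dimension $d_{\rm eff}$, and has $t$ derivatives, each $u^\ast_\rho$ is a smooth function of $r$ uniformly in $\rho$. Hence it inherits $t$ derivatives with uniformly bounded norms and still depends only on $\Pi x$. By the ridge-function approximation results of~\cite{bach2017breaking}, when $t > (d_{\rm eff}+3)/2$ such functions have finite variation norm $\gamma_1(u^\ast_\rho)\le C$ uniformly in $\rho$. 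The constant depends on $d_{\rm eff}$, not $d$, because the ReLU representation only uses directions in the subspace. Since the objective is concave quadratic and $u^\ast_\rho$ is feasible, the approximation term is at most $\lambda\,\gamma_1(\{u^\ast_\rho\}) = O(\lambda)$. Here I need $\gamma_1$ of the whole family with shared neurons to be bounded, which I would get by noting that $u^\ast_\rho = h_\rho(g(\Pi x))$. One can then use a common representing measure through the Fourier or spherical-harmonic construction of~\cite{bach2017breaking}, whose density is bounded uniformly in $\rho$.

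For the estimation term, I would use the uniform deviation of empirical processes over the ball $\{\gamma_1(u)\le R\}$. Its Rademacher complexity for ReLU units on the unit ball is $O(R/\sqrt n)$, and for quadratic terms $u^2$ with $\|u\|_\infty\le R$ it is $O(R^2/\sqrt n)$. A standard regularized M-estimation argument (compare the penalized empirical optimum with the population one, and use the penalty to localize $\gamma_1(\hat u_\rho)\lesssim 1 + \text{deviation}/\lambda$) gives an estimation error of $O(1/\sqrt n)$ times a polynomial in $R$. With $\lambda\propto 1/\sqrt n$, the localized radius $R$ stays $O(1)$ with high probability, so both the approximation and estimation terms are $O(n^{-1/2})$. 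Integrating over $\rho$ against $\nu$ preserves the bounds. The finite-width condition $n = O(m^2)$ enters only to show that an $m$-neuron network approximates the infinite-width optimum in variation norm to error $O(1/\sqrt m)=O(n^{-1/4})$ in the quadratic objective, hence $O(n^{-1/2})$ in value. I would argue this with a Maurey sampling argument on the shared measure.

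I expect the main obstacle to be the localization step. Because the quadratic term involves $-\frac{\rho}{2}u^2$ under $\hat p$, the objective is only strongly concave in $L_2$, while complexity is controlled in $\gamma_1$. Bounding $\gamma_1(\hat u_\rho)$ uniformly in $\rho$ without losing a $\log n$ factor or worsening the rate will require combining the penalty with the fact that $r$ is bounded away from $0$ and $\infty$, so that $\rho\hat p + (1-\rho)\hat q$ is uniformly comparable to $q$. I would try first the simple "basic inequality" argument without localization. That gives $O(\lambda + 1/(\lambda n))$, which is already $O(n^{-1/2})$ at $\lambda\propto n^{-1/2}$, and may avoid finer analysis entirely.
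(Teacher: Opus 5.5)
Your overall architecture matches the paper's. You use the shared-neuron group variation norm $\Omega$, an approximation error of order $\lambda\,\Omega(u^\ast)$ from the finite norm of the latent-variable targets $u^\ast_\rho$ when $t>\frac{d_{\rm eff}+3}{2}$, Rademacher bounds of order $R/\sqrt n$ and $R^2/\sqrt n$ on $\{\Omega\le R\}$, and an $O(\Omega(u^\ast)^2/m)$ finite-width step; your Maurey sampling is interchangeable with the paper's Frank--Wolfe argument. However, the step you flag as the main obstacle is a genuine gap, and your fallback does not close it.

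The basic inequality $\hat{\mathcal R}(\hat u)+\lambda\Omega(\hat u)\le\hat{\mathcal R}(u^\ast)+\lambda\Omega(u^\ast)$ gives nothing until you control $\mathcal R(\hat u)-\hat{\mathcal R}(\hat u)$, and that requires an a priori bound on $\Omega(\hat u)$. Deriving $\Omega(\hat u)\lesssim 1+\mathrm{dev}/\lambda$ is circular. Peeling with your bounds also fails: the deviation bound on $\{\Omega\le R\}$ grows like $R^2/\sqrt n$, while the penalty grows only like $\lambda R$ with $\lambda\asymp n^{-1/2}$. The rate $O(\lambda+1/(\lambda n))$ you hope for is the ridge/stability rate, which relies on strong convexity of a squared Hilbert-norm penalty. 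The degree-one homogeneous penalty $\lambda\Omega$ provides no such strong convexity.

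The missing idea is the paper's convexity argument, which needs uniform deviation only on a fixed ball and no strong concavity at all. Set $B=\Omega(u^\ast)$, let $u^\ast_\lambda$ minimize $\mathcal R+\lambda\Omega$, and let $\mathcal C$ be the convex set of its $\lambda B$-approximate minimizers. Since $\mathcal R\ge\mathcal R(u^\ast)$, we have $\mathcal C\subset\{\Omega\le 2B\}$. Suppose $\hat u\notin\mathcal C$, and take $v\in\partial\mathcal C$ on the segment $[u^\ast_\lambda,\hat u]$.
\begin{itemize}
\item By convexity of the empirical objective, $\hat{\mathcal R}(v)+\lambda\Omega(v)\le\hat{\mathcal R}(u^\ast_\lambda)+\lambda\Omega(u^\ast_\lambda)+\eta$.
\item Because $v$ lies on the boundary, $\mathcal R(v)+\lambda\Omega(v)=\mathcal R(u^\ast_\lambda)+\lambda\Omega(u^\ast_\lambda)+\lambda B$.
\end{itemize}
For $\eta\le\lambda B/2$, these two facts contradict the deviation bound on $\{\Omega\le 2B\}$ once $\lambda\gtrsim(1+B)\big(1+\sqrt{\log(1/\delta)}\big)/\sqrt n$. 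Hence $\Omega(\hat u)\le 2B$ and $\mathcal R(\hat u)\le\mathcal R(u^\ast)+2\lambda B$. The $m$-neuron solution enters this argument as an $\eta$-approximate minimizer of the infinite-width problem with $\eta\le\lambda B/4$. This is where $m\gtrsim B/\lambda$, i.e.\ $n=O(m^2)$, comes in.

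You need this bound on $\Omega(\hat u)$ a second time, in a step your decomposition omits. The statement concerns $F_\lambda(\hat p\|\hat q,\hat\varphi)$ without the hidden-layer weight decay. In the scale-balanced parametrization, $F_\lambda(\hat p\|\hat q,\hat\varphi)=\hat V_{\rm pen}+\frac{\lambda}{2}\Omega(\hat u)$, where $\hat V_{\rm pen}=-\hat{\mathcal R}(\hat u)-\lambda\Omega(\hat u)$ is what your approximation/estimation split actually controls.

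Finally, your per-$\rho$ language does not fit the group norm. You bound $\gamma_1(\hat u_\rho)$ uniformly in $\rho$, assume $\|u_\rho\|_\infty\le R$, and integrate per-$\rho$ bounds against $\nu$. The group norm couples the $\rho$'s and controls only $\nu$-averaged quantities. For example, $\big|\int_0^1 g(\rho)u_\rho(x)^2\,d\nu(\rho)\big|\le R^2\|g\|_\infty$ holds for normalized ReLU features with $\Omega(u)\le R$, but $\sup_\rho\|u_\rho\|_\infty$ is not controlled. So the quadratic Rademacher term cannot be handled by scalar contraction of $u\mapsto u^2$ for each $\rho$. The paper instead:
\begin{itemize}
\item writes $\int_0^1\rho\,u_\rho(x)^2\,d\nu(\rho)$ as a double integral over pairs of neurons;
\item uses Cauchy--Schwarz in $\rho$ to reduce to the product class $(w^\top x)_+(v^\top x)_+$;
\item bounds the complexity of that class with Maurer's vector contraction.
\end{itemize}
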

We see, as in supervised learning, adaptivity to linear latent variables, with a possibility of escaping the curse of dimensionality for small $d_{\rm eff}$~\cite{bach2017breaking}. Constants may be exponential in $d_{\rm eff}$ and rates can probably be improved (but our goal was to show some form of adaptivity for neural network learning).

\section{Experiments}
\label{sec:experiments}
We focus on the KL divergence, and thus the estimation of log-densities, as it encompasses most of the difficulties (see App.~\ref{app:experiments} for full experimental details; code is provided as supplementary material).

\textbf{Estimation of $D(p\|q)$.} In \myfig{bias} (left and center), we show scaling laws on a one-dimensional problem to compare regular and debiased versions and their respective rates, illustrating that our theoretical results are reasonably sharp and debiasing helps, while the quantum divergence is, as expected, similar  (but inapplicable when features are not normalized). In \myfig{bias} (right), we compare several estimators on a two-dimensional problem, showing that our new estimator outperforms the use of the Pearson divergence, direct variational formulations, and kernel density estimation.

\textbf{Comparison with softmax regression.} In \myfig{softmax}, we compare our closed-form estimator to the full optimization of softmax regression (an example of normalized models). In terms of training criterion (left), it matches closely and can sometimes be slightly 
higher (here better) because its induced potential is an integral of quadratic functions of linear least-squares estimators, and thus quadratic in the used features (but with a lower computational cost than linear estimators based on other losses), whereas the softmax baseline is linear in those features; a direct quadratic variational baseline would be more expensive. On test data (\myfig{softmax}, center), the performance is slightly worse, but with a smaller computational cost for our closed-form estimators.

\textbf{Estimation of mutual information.} For unnormalized models, we show in \myfig{softmax} (right) that our estimator outperforms the variational approach (we do not consider quadratic features for the variational baseline, because they lead to overly large matrices).

\textbf{Feature learning with neural networks.} In \myfig{comparisonNN} (right), we show how using neural networks allows us to scale to higher dimensions, preserve adaptivity (while fixed feature spaces, referred to as ``KL'' and ``variational'', do not), and outperform the variational baseline.

\begin{figure}
\begin{center}
\hspace*{-0.025cm}\includegraphics[width=5cm]{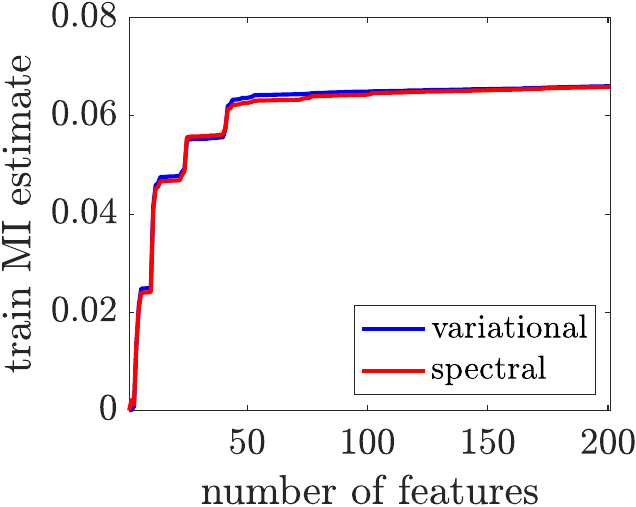}
\hspace*{0.1cm}\includegraphics[width=5cm]{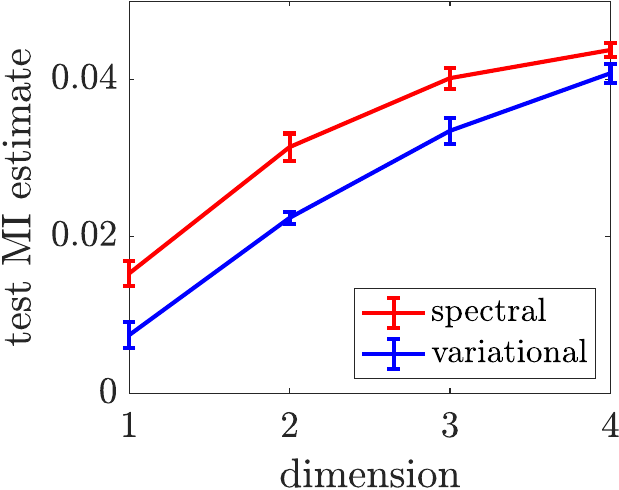}
\hspace*{0.5cm}\includegraphics[width=6.7cm]{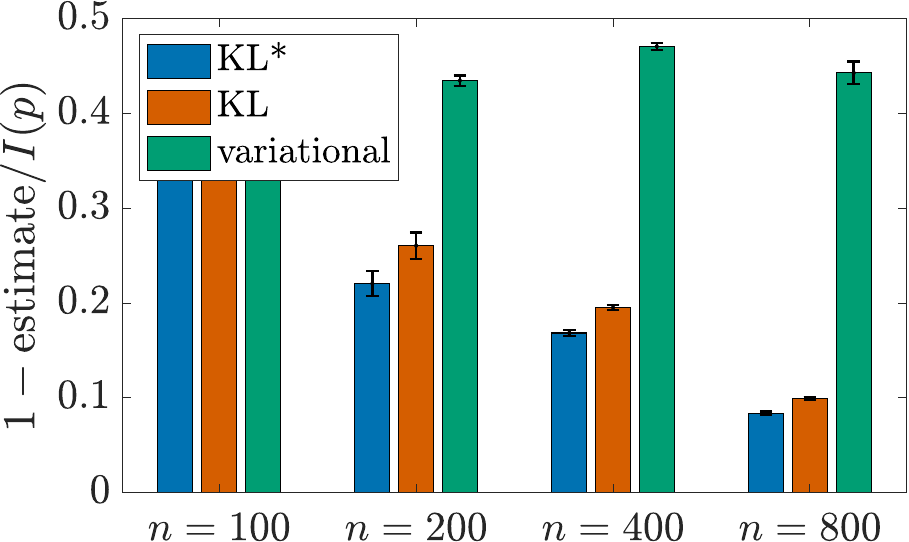}

\end{center}

\vspace*{-.35cm}

\caption{\textbf{Left and Center:} Comparison of softmax regression (equivalent here to the variational method, see App.~\ref{app:softmaxvar}) and the spectral estimator, with data in two dimensions and class-conditional distributions that are combinations of a few cosines. Left: we perform PCA on a large set of random ReLU features, and then learn with an increasing number of features, comparing softmax regression (solved with Newton's method) and our closed-form estimator, in terms of estimated mutual information (variational formulation on the training data). Center: comparison of test  MI estimates.
\textbf{Right:}  Comparison of estimators of $(v,w)$ through their normalized MI  estimates. \label{fig:softmax}
}

\vspace*{-.125cm}

\end{figure}

\begin{figure}
\begin{center}

\includegraphics[width=5cm]{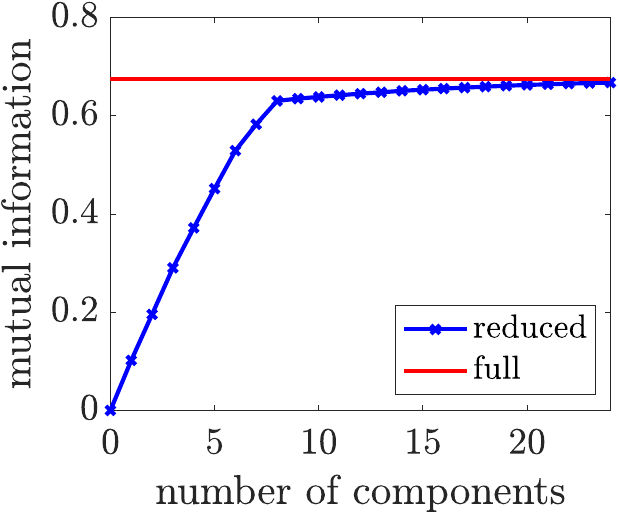}
\hspace*{0.5cm}\includegraphics[width=7.5cm]{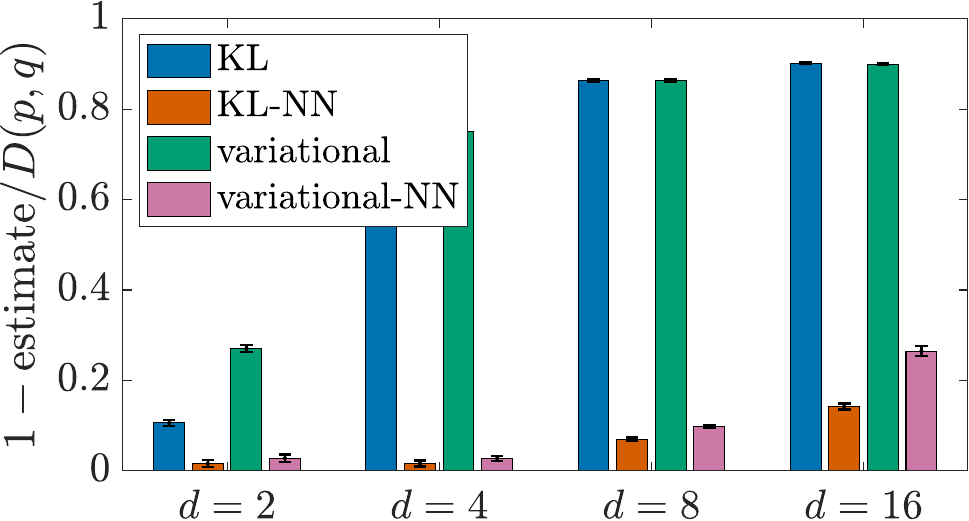}
\end{center}

\vspace*{-.35cm}

\caption{\textbf{Left:} Number of latent dimensions needed for estimating mutual information. \textbf{Right:} Comparison of estimators of $v$ and $w$ through their normalized divergence estimates (lower is better), by selecting the best hyperparameter on validation data over 4 replications ($\X = \rb^d$, with $d \in \{2,4,8,16\}$, $n=10 000$ training observations, and $m=50$ hidden neurons).
\label{fig:comparisonNN}}

\vspace*{-.125cm}

\end{figure}

\section{Conclusion}
In this paper, we have shown how various forms of log-density estimation can be obtained in closed form by viewing them as integrals of several least-squares estimators with an efficient spectral formulation. This leads to direct (though approximate) estimation without iterative algorithms for fixed features, and to iterative feature-learning variants when the representation is learned, in particular for classical cases such as softmax regression. In our synthetic experiments, our estimates are empirically more efficient for unnormalized models.

As shown in \mysec{featlearn}, this new cost function can be seen as replacing the last layer within a deep learning framework with added benefits compared to the usual softmax layers, with potential for 
parallelization~\cite{schubert2018numerically} or sketching~\cite{gribonval2021sketching}. Moreover, the use of the new divergences could prove useful in variational inference~\cite{wainwright2008graphical,bach2022information,bach2022sum}, sampling~\cite{ribero2026regularized} or independent component analysis~\cite{bach2002kernel,suzuki2011least}.

 \subsection*{Acknowledgements}
     This work has received support from the French government, managed by the National Research Agency,
under the France 2030 program with the reference ``PR[AI]RIE-PSAI'' (ANR-23-IACL-0008).

\bibliography{fdiv}

\begin{thebibliography}{10}

\bibitem{perez2008kullback}
Fernando P{\'e}rez-Cruz.
\newblock Kullback-{L}eibler divergence estimation of continuous distributions.
\newblock In {\em International Symposium on Information Theory}, pages
  1666--1670, 2008.

\bibitem{polyanskiy2025information}
Yury Polyanskiy and Yihong Wu.
\newblock {\em Information Theory: From Coding to Learning}.
\newblock Cambridge University Press, 2025.

\bibitem{broniatowski2006minimization}
Michel Broniatowski and Amor Keziou.
\newblock Minimization of $\varphi$-divergences on sets of signed measures.
\newblock {\em Studia Scientiarum Mathematicarum Hungarica}, 43(4):403--442,
  2006.

\bibitem{nguyen2010estimating}
XuanLong Nguyen, Martin~J. Wainwright, and Michael~I. Jordan.
\newblock Estimating divergence functionals and the likelihood ratio by convex
  risk minimization.
\newblock {\em IEEE Transactions on Information Theory}, 56(11):5847--5861,
  2010.

\bibitem{liu2015estimating}
Qiang Liu, Jian Peng, Alexander Ihler, and John~W. Fisher~III.
\newblock Estimating the partition function by discriminance sampling.
\newblock In {\em Conference on Uncertainty in Artificial Intelligence}, 2015.

\bibitem{chehab2023provable}
Omar Chehab, Aapo Hyvarinen, and Andrej Risteski.
\newblock Provable benefits of annealing for estimating normalizing constants:
  Importance sampling, noise-contrastive estimation, and beyond.
\newblock In {\em Advances in Neural Information Processing Systems}, 2023.

\bibitem{hastie1995penalized}
Trevor Hastie, Andreas Buja, and Robert Tibshirani.
\newblock Penalized discriminant analysis.
\newblock {\em The Annals of Statistics}, 23(1):73--102, 1995.

\bibitem{harchaoui2008testing}
Zaid Harchaoui, Francis Bach, and Eric Moulines.
\newblock Testing for homogeneity with kernel {F}isher discriminant analysis.
\newblock Technical Report 0804.1026, arXiv, 2008.

\bibitem{kanamori2009least}
Takafumi Kanamori, Shohei Hido, and Masashi Sugiyama.
\newblock A least-squares approach to direct importance estimation.
\newblock {\em Journal of Machine Learning Research}, 10:1391--1445, 2009.

\bibitem{sugiyama2012density}
Masashi Sugiyama, Taiji Suzuki, and Takafumi Kanamori.
\newblock {\em Density Ratio Estimation in Machine Learning}.
\newblock Cambridge University Press, 2012.

\bibitem{bach2024learning}
Francis Bach.
\newblock {\em Learning Theory from First Principles}.
\newblock MIT Press, 2024.

\bibitem{scholkopf2002learning}
Bernhard Sch{\"o}lkopf and Alexander~J. Smola.
\newblock {\em Learning with Kernels: Support Vector Machines, Regularization,
  Optimization, and Beyond}.
\newblock MIT Press, 2002.

\bibitem{mccullagh2019generalized}
Peter McCullagh and John~A. Nelder.
\newblock {\em Generalized Linear Models}.
\newblock Chapman \& Hall, 1989.

\bibitem{yamada2013relative}
Makoto Yamada, Taiji Suzuki, Takafumi Kanamori, Hirotaka Hachiya, and Masashi
  Sugiyama.
\newblock Relative density-ratio estimation for robust distribution comparison.
\newblock {\em Neural Computation}, 25(5):1324--1370, 2013.

\bibitem{lesniewski1999monotone}
Andrew Lesniewski and Mary~Beth Ruskai.
\newblock Monotone {R}iemannian metrics and relative entropy on noncommutative
  probability spaces.
\newblock {\em Journal of Mathematical Physics}, 40(11):5702--5724, 1999.

\bibitem{bach2022sum}
Francis Bach.
\newblock Sum-of-squares relaxations for information theory and variational
  inference.
\newblock {\em Foundations of Computational Mathematics}, 25(3):865--903, 2025.

\bibitem{bach2022information}
Francis Bach.
\newblock Information theory with kernel methods.
\newblock {\em IEEE Transactions on Information Theory}, 69(2):752--775, 2023.

\bibitem{cappe2009line}
Olivier Capp{\'e} and Eric Moulines.
\newblock On-line expectation--maximization algorithm for latent data models.
\newblock {\em Journal of the Royal Statistical Society Series B: Statistical
  Methodology}, 71(3):593--613, 2009.

\bibitem{bach2017breaking}
Francis Bach.
\newblock Breaking the curse of dimensionality with convex neural networks.
\newblock {\em Journal of Machine Learning Research}, 18(19):1--53, 2017.

\bibitem{hyvarinen2005score}
Aapo Hyv{\"a}rinen.
\newblock Estimation of non-normalized statistical models by score matching.
\newblock {\em Journal of Machine Learning Research}, 6(4), 2005.

\bibitem{gutmann2012noise}
Michael~U. Gutmann and Aapo Hyv{\"a}rinen.
\newblock Noise-contrastive estimation of unnormalized statistical models, with
  applications to natural image statistics.
\newblock {\em Journal of Machine Learning Research}, 13(11):307--361, 2012.

\bibitem{ribero2026regularized}
M{\'o}nica Ribero, Antonin Schrab, and Arthur Gretton.
\newblock Regularized $ f $-divergence kernel tests.
\newblock Technical Report 2601.19755, arXiv, 2026.

\bibitem{JMLR:v12:reid11a}
Mark~D. Reid and Robert~C. Williamson.
\newblock Information, divergence and risk for binary experiments.
\newblock {\em Journal of Machine Learning Research}, 12(22):731--817, 2011.

\bibitem{henrion2020moment}
Didier Henrion, Milan Korda, and Jean~Bernard Lasserre.
\newblock {\em The Moment-{SOS} Hierarchy: Lectures in Probability, Statistics,
  Computational Geometry, Control and Nonlinear PDEs}.
\newblock World Scientific, 2020.

\bibitem{eric2007testing}
Zaid Harchaoui, Francis Bach, and Eric Moulines.
\newblock Testing for homogeneity with kernel {F}isher discriminant analysis.
\newblock In {\em Advances in Neural Information Processing Systems}, 2007.

\bibitem{bach2002kernel}
Francis Bach and Michael~I. Jordan.
\newblock Kernel independent component analysis.
\newblock {\em Journal of Machine Learning Research}, 3(Jul):1--48, 2002.

\bibitem{gretton2012kernel}
Arthur Gretton, Karsten~M. Borgwardt, Malte~J. Rasch, Bernhard Sch{\"o}lkopf,
  and Alexander Smola.
\newblock A kernel two-sample test.
\newblock {\em Journal of Machine Learning Research}, 13(1):723--773, 2012.

\bibitem{rahimi2008random}
Ali Rahimi and Benjamin Recht.
\newblock Random features for large-scale kernel machines.
\newblock In {\em Advances in Neural Information Processing Systems}, 2008.

\bibitem{rudi2017generalization}
Alessandro Rudi and Lorenzo Rosasco.
\newblock Generalization properties of learning with random features.
\newblock In {\em Advances in Neural Information Processing Systems}, 2017.

\bibitem{mahoney2009cur}
Michael~W. Mahoney and Petros Drineas.
\newblock {CUR} matrix decompositions for improved data analysis.
\newblock {\em Proceedings of the National Academy of Sciences},
  106(3):697--702, 2009.

\bibitem{martinsson2020randomized}
Per-Gunnar Martinsson and Joel~A. Tropp.
\newblock Randomized numerical linear algebra: Foundations and algorithms.
\newblock {\em Acta Numerica}, 29:403--572, 2020.

\bibitem{williams2000using}
Christopher Williams and Matthias Seeger.
\newblock Using the {N}ystr{\"o}m method to speed up kernel machines.
\newblock In {\em Advances in Neural Information Processing Systems}, 2000.

\bibitem{rudi2015less}
Alessandro Rudi, Raffaello Camoriano, and Lorenzo Rosasco.
\newblock Less is more: {N}ystr{\"o}m computational regularization.
\newblock In {\em Advances in Neural Information Processing Systems}, 2015.

\bibitem{bach2023relationship}
Francis Bach.
\newblock On the relationship between multivariate splines and infinitely-wide
  neural networks.
\newblock Technical Report 2302.03459, arXiv, 2023.

\bibitem{csiszar1967information}
Imre Csisz{\'a}r.
\newblock Information-type measures of difference of probability distributions
  and indirect observation.
\newblock {\em Studia Scientiarum Mathematicarum Hungarica}, 2:229--318, 1967.

\bibitem{letizia2024mutual}
Nunzio~A. Letizia, Nicola Novello, and Andrea~M. Tonello.
\newblock Mutual information estimation via $f$-divergence and data
  derangements.
\newblock In {\em Advances in Neural Information Processing Systems}, 2024.

\bibitem{suzuki2010sufficient}
Taiji Suzuki and Masashi Sugiyama.
\newblock Sufficient dimension reduction via squared-loss mutual information
  estimation.
\newblock In {\em International Conference on Artificial Intelligence and
  Statistics}, 2010.

\bibitem{belghazi2018mine}
Mohamed~Ishmael Belghazi, Aristide Baratin, Sai Rajeshwar, Sherjil Ozair,
  Yoshua Bengio, Aaron Courville, and Devon Hjelm.
\newblock Mutual information neural estimation.
\newblock In {\em International Conference on Machine Learning}, 2018.

\bibitem{poole2019variational}
Ben Poole, Sherjil Ozair, Aaron van~den Oord, Alex Alemi, and George Tucker.
\newblock On variational bounds of mutual information.
\newblock In {\em International Conference on Machine Learning}, 2019.

\bibitem{bach2004multiple}
Francis Bach, Gert R.~G. Lanckriet, and Michael~I. Jordan.
\newblock Multiple kernel learning, conic duality, and the {SMO} algorithm.
\newblock In {\em International Conference on Machine Learning}, 2004.

\bibitem{rakotomamonjy2008simplemkl}
Alain Rakotomamonjy, Francis Bach, St{\'e}phane Canu, and Yves Grandvalet.
\newblock Simple{MKL}.
\newblock {\em Journal of Machine Learning Research}, 9:2491--2521, 2008.

\bibitem{gonen2011multiple}
Mehmet G{\"o}nen and Ethem Alpayd{\i}n.
\newblock Multiple kernel learning algorithms.
\newblock {\em Journal of Machine Learning Research}, 12:2211--2268, 2011.

\bibitem{follain2025enhanced}
Bertille Follain and Francis Bach.
\newblock Enhanced feature learning via regularisation: Integrating neural
  networks and kernel methods.
\newblock {\em Journal of Machine Learning Research}, 26(172):1--56, 2025.

\bibitem{hunter2004tutorial}
David~R Hunter and Kenneth Lange.
\newblock A tutorial on {MM} algorithms.
\newblock {\em The American Statistician}, 58(1):30--37, 2004.

\bibitem{mairal2013stochastic}
Julien Mairal.
\newblock Stochastic majorization-minimization algorithms for large-scale
  optimization.
\newblock In {\em Advances in Neural Information Processing Systems}, 2013.

\bibitem{golub83matrix}
Gene~H. Golub and Charles F.~Van Loan.
\newblock {\em Matrix Computations}.
\newblock Johns Hopkins University Press, 1996.

\bibitem{sriperumbudur2011universality}
Bharath~K. Sriperumbudur, Kenji Fukumizu, and Gert R.~G. Lanckriet.
\newblock Universality, characteristic kernels and {RKHS} embedding of
  measures.
\newblock {\em Journal of Machine Learning Research}, 12(70), 2011.

\bibitem{laurent1996efficient}
B{\'e}atrice Laurent.
\newblock Efficient estimation of integral functionals of a density.
\newblock {\em The Annals of Statistics}, 24(2):659--681, 1996.

\bibitem{lee2019u}
Alan~J. Lee.
\newblock {\em U-Statistics: Theory and Practice}.
\newblock Routledge, 2019.

\bibitem{chizat2018global}
Lenaic Chizat and Francis Bach.
\newblock On the global convergence of gradient descent for over-parameterized
  models using optimal transport.
\newblock In {\em Advances in Neural Information Processing Systems}, 2018.

\bibitem{schubert2018numerically}
Erich Schubert and Michael Gertz.
\newblock Numerically stable parallel computation of (co-)variance.
\newblock In {\em International Conference on Scientific and Statistical
  Database Management}, 2018.

\bibitem{gribonval2021sketching}
R{\'e}mi Gribonval, Antoine Chatalic, Nicolas Keriven, Vincent Schellekens,
  Laurent Jacques, and Philip Schniter.
\newblock Sketching data sets for large-scale learning: Keeping only what you
  need.
\newblock {\em IEEE Signal Processing Magazine}, 38(5):12--36, 2021.

\bibitem{wainwright2008graphical}
Martin~J. Wainwright and Michael~I. Jordan.
\newblock Graphical models, exponential families, and variational inference.
\newblock {\em Foundations and Trends in Machine Learning}, 1(1-2):1--305,
  2008.

\bibitem{suzuki2011least}
Taiji Suzuki and Masashi Sugiyama.
\newblock Least-squares independent component analysis.
\newblock {\em Neural Computation}, 23(1):284--301, 2011.

\bibitem{sugiyama2007direct}
Masashi Sugiyama, Shinichi Nakajima, Hisashi Kashima, Paul von B\"{u}nau, and
  Motoaki Kawanabe.
\newblock Direct importance estimation with model selection and its application
  to covariate shift adaptation.
\newblock In {\em Advances in Neural Information Processing Systems}, 2007.

\bibitem{nowozin2016f}
Sebastian Nowozin, Botond Cseke, and Ryota Tomioka.
\newblock f-{GAN}: Training generative neural samplers using variational
  divergence minimization.
\newblock In {\em Advances in Neural Information Processing Systems}, 2016.

\bibitem{zellingerbinary}
Werner Zellinger.
\newblock Binary losses for density ratio estimation.
\newblock In {\em International Conference on Learning Representations}, 2025.

\bibitem{zellinger2023adaptive}
Werner Zellinger, Stefan Kindermann, and Sergei~V. Pereverzyev.
\newblock Adaptive learning of density ratios in {RKHS}.
\newblock {\em Journal of Machine Learning Research}, 24(395):1--28, 2023.

\bibitem{menon2016linking}
Aditya Menon and Cheng~Soon Ong.
\newblock Linking losses for density ratio and class-probability estimation.
\newblock In {\em International Conference on Machine Learning}, 2016.

\bibitem{Boyd2004Convex}
Stephen Boyd and Lieven Vandenberghe.
\newblock {\em Convex Optimization}.
\newblock Cambridge {U}niversity {P}ress, 2004.

\bibitem{ando1979concavity}
Tsuyoshi Ando.
\newblock Concavity of certain maps on positive definite matrices and
  applications to {H}adamard products.
\newblock {\em Linear Algebra and its Applications}, 26:203--241, 1979.

\bibitem{lewis1996derivatives}
Adrian~S. Lewis.
\newblock Derivatives of spectral functions.
\newblock {\em Mathematics of Operations Research}, 21(3):576--588, 1996.

\bibitem{cap2007optimal}
Andrea {Caponnetto} and Ernesto {De Vito}.
\newblock {Optimal rates for the regularized least-squares algorithm}.
\newblock {\em Foundations of Computational Mathematics}, 7(3):331--368, 2007.

\bibitem{lin2020optimal}
Junhong Lin, Alessandro Rudi, Lorenzo Rosasco, and Volkan Cevher.
\newblock Optimal rates for spectral algorithms with least-squares regression
  over {H}ilbert spaces.
\newblock {\em Applied and Computational Harmonic Analysis}, 48(3):868--890,
  2020.

\bibitem{bach2013sharp}
Francis Bach.
\newblock Sharp analysis of low-rank kernel matrix approximations.
\newblock In {\em Conference on Learning Theory}, 2013.

\bibitem{alaoui2015fast}
Ahmed El~Alaoui and Michael~W. Mahoney.
\newblock Fast randomized kernel ridge regression with statistical guarantees.
\newblock In {\em Advances in Neural Information Processing Systems}, 2015.

\bibitem{tropp2015introduction}
Joel~A. Tropp.
\newblock An introduction to matrix concentration inequalities.
\newblock {\em Foundations and Trends in Machine Learning}, 8(1-2):1--230,
  2015.

\bibitem{neubauer1997converse}
Andreas Neubauer.
\newblock On converse and saturation results for {T}ikhonov regularization of
  linear ill-posed problems.
\newblock {\em SIAM Journal on Numerical Analysis}, 34(2):517--527, 1997.

\bibitem{fischer2020sobolev}
Simon Fischer and Ingo Steinwart.
\newblock Sobolev norm learning rates for regularized least-squares algorithms.
\newblock {\em Journal of Machine Learning Research}, 21(205):1--38, 2020.

\bibitem{kurkova2001bounds}
Vera Kurkov{\'a} and Marcello Sanguineti.
\newblock Bounds on rates of variable-basis and neural-network approximation.
\newblock {\em IEEE Transactions on Information Theory}, 47(6):2659--2665,
  2001.

\bibitem{maurer2016vector}
Andreas Maurer.
\newblock A vector-contraction inequality for {R}ademacher complexities.
\newblock In {\em International Conference on Algorithmic Learning Theory},
  2016.

\bibitem{jaggi2013revisiting}
Martin Jaggi.
\newblock Revisiting {F}rank-{W}olfe: Projection-free sparse convex
  optimization.
\newblock In {\em International Conference on Machine Learning}, 2013.

\bibitem{anderson2003introduction}
Theodore~W. Anderson.
\newblock {\em An Introduction to Multivariate Statistical Analysis}.
\newblock Wiley, 2003.

\bibitem{joe1989estimation}
Harry Joe.
\newblock Estimation of entropy and other functionals of a multivariate
  density.
\newblock {\em Annals of the Institute of Statistical Mathematics},
  41(4):683--697, 1989.

\bibitem{wang2005divergence}
Qing Wang, Sanjeev~R. Kulkarni, and Sergio Verd{\'u}.
\newblock Divergence estimation of continuous distributions based on
  data-dependent partitions.
\newblock {\em IEEE Transactions on Information Theory}, 51(9):3064--3074,
  2005.

\bibitem{kandasamy2015nonparametric}
Kirthevasan Kandasamy, Akshay Krishnamurthy, Barnabas Poczos, Larry Wasserman,
  and James~M. Robins.
\newblock Nonparametric von {M}ises estimators for entropies, divergences and
  mutual informations.
\newblock In {\em Advances in Neural Information Processing Systems}, 2015.

\end{thebibliography}

%%%%%%%%%%%%%%%%%%%%%%%%%%%%%%%%%%%%%%%%%%%%%%%%%%%%%%%%%%%%
 
\appendix

\section{Additional examples of $f$-divergences}
We provide in Table~\ref{table:fdivcomplete} below a list of operator-convex functions $f$, their associated $f$-divergences, and measures $\nu$ on $[0,1]$. Some important cases of $f$-divergences, such as the total variation corresponding to $f(t)=|t-1|$, are not part of our framework. Note that there is a slight ambiguity in defining the Fenchel conjugate, because, when the function $f$ can be defined on $\rb$, the Fenchel conjugate depends on the chosen domain. We consider the largest possible domain (for the Pearson divergence, if we choose to restrict to $\rb_+$, then the Fenchel conjugate would be $u \mapsto  \frac{1}{2} (u+1)_+^2 - \frac{1}{2}$).
\label{app:fdiv}

\begin{table}[h]
\caption{Examples of functions $f$ (with domain that includes $\rb_+^\ast$), their Fenchel-conjugates $f^\ast$ (which typically do not have full domain), and the associated probability measure $d\nu$ on $[0,1]$. \label{table:fdivcomplete}}

%\vspace*{-.5cm}

\begin{center}\hspace*{-.3cm}
\begin{tabular}{|l|l|l|l|l|}
\hline
Divergence & $f(t)$ & $f^\ast(u)$   & $d\nu(\rho) $  \\
\hline
$\alpha$-divergence, $\alpha \!\in\! [-1,2]\!\!$ & $\textcolor{white}{\Big|}  \!\!\!\!\frac{ t^\alpha - \alpha t  +  (\alpha-1)}{\alpha(\alpha-1)} \!\!\!$
& $  \!\!\frac{\!\! \textcolor{white}{\big|}- 1 + ( 1 + (\alpha-1) u )^{\alpha/(\alpha-1)}\!}{\alpha}  \!\!$  
& $\!\!\!\frac{2}{\alpha} \frac{ \sin(\alpha-1)\pi}{(\alpha-1)\pi} (1\!-\!\rho)^\alpha \rho^{1-\alpha}  d\rho \!\!  $  \\
Kullback-Leibler, $\alpha\!=\! 1$ & $ t \log t -t + 1 $  & $ e^{u}-1$ & $ 2(1-\rho) d\rho  $   \\
Reverse KL, $\alpha\!=\! 0$ & $- \log t + t - 1$ & $  - \log(1-u)$ & $2\rho d\rho  $    \\
squared Hellinger, $\alpha\!=\! \frac{1}{2}\!\!$ &  $2 (\sqrt{t}-1)^2$  & $\frac{u}{1-u/2}$
& $\frac{8}{\pi} \sqrt{ \rho(1-\rho)}  d\rho$  \\[-.1cm]
Pearson $\chi^2$, $\alpha\!=\!2$ & $\textcolor{white}{\Big|}\!\! \frac{1}{2} (t-1)^2$  & $\frac{u^2}{2} + u$
& $\delta_0(\rho)$ \\[-.15cm]
Reverse Pearson, $\alpha\!=\!-\!\,1\!\!$ & $\textcolor{white}{\Big|} \!\!\frac{1}{2t } (t-1)^2$ & $1- \sqrt{1-2u}$ 
&  $\delta_1(\rho)$ \\[-.1cm]
Le Cam & $ \frac{(t-1)^2}{t+1} $ & $4-u - 4\sqrt{1-u}$ &    $\delta_{1/2} (\rho) $  \\[-.1cm]
 Jensen-Shannon & $ \textcolor{white}{\Big|}\!\!\! \! 2 t \log \frac{2t}{t+1} \!+\!  2\log \frac{2}{t+1}\!\!\!$ & $-2 \log(2-e^{u/2})$    & $(2 - 2|1-2\rho|) d\rho$\\
\hline
\end{tabular}
\end{center}
\end{table}

 \section{Comparisons to related work}

  \subsection{Comparison with prior quantum-divergence work}
\label{app:comparison}
The divergence proposed in this paper is inspired by \cite{bach2022sum}, with both similarities and differences. 

\textbf{Properties of the quantum divergence.}
To relate the two divergences, we present various related results from~\cite{bach2022sum} for the quantum divergence, which we denote by $Q(p\|q,\varphi)$.
\BIT
\item 
\textbf{Integral formula:} it is equal to 
\[ Q(p\|q,\varphi) = \frac{1}{2}
\int_0^1 \tr \big[ ( \Sigma_p - \Sigma_q) (\rho \Sigma_p +(1-\rho)\Sigma_q  )^{-1} ( \Sigma_p - \Sigma_q) 
\big]
d\nu(\rho),\]
which leads to similar linear algebra, but without using first-order moments $\mu_p$ and $\mu_q$.

\item \textbf{Spectral formula:} 
We have:
\[ Q(p\|q,\varphi) =  \tr \big[
 \Sigma_q^{1/2}  f ( \Sigma_q^{-1/2} \Sigma_p \Sigma_q^{-1/2} )  \Sigma_q^{1/2}  \big], \]
 which is to be related to \eq{Fspec}.

\item \textbf{Variational formulation:} The key difference with our framework is the need for feature normalization to obtain a lower bound on $D(p\|q)$, that is, $\varphi(x)^\top \varphi(x) = 1$ for all $x \in \X$, which leads to using only quadratic forms for $v(x)$ and $w(x)$, and with the constraint $\lambda M + N - f(\lambda) \idm \preccurlyeq 0$ for all $\lambda >0$, instead of $\lambda \bimat{M}{c}{c^\top}{0}  + 
 \bimat{N}{-c}{-c^\top}{0} + \bimat{0}{0}{0}{-f(\lambda) } \preccurlyeq 0$ in \eq{DDD}.

\item \textbf{Properties:} They are mostly the same, except, as already mentioned, the need for normalization, which makes it hard to maximize the lower bound because it must be done with the constraint of feature normalization. In \cite{bach2022sum}, this is done using sum-of-squares techniques, which are not easy to extend to high dimensions.
\EIT

\textbf{Differences.}
Our construction is closely related to the covariance-operator, or
quantum-divergence, approach of
\cite{bach2022information,bach2022sum}. These works already provide spectral
formulas and kernelized or sum-of-squares relaxations for
information-theoretic quantities, including entropy, relative entropy,
mutual information, and more general \(f\)-divergences. The novelty of
the present paper is therefore not the use of spectral information
quantities per se, but a different moment relaxation tailored to
unnormalized feature maps and explicit density-ratio estimation.

In the covariance-operator formulation, lower bounds on classical
divergences are obtained from non-centered second-moment operators, and
the associated variational lower bounds typically require a pointwise
feature normalization, for instance
$
    \varphi(x)^\top \varphi(x)=1 .
$

Such constraints are natural in that framework, but they complicate
feature learning. Our lifted relaxation includes first-moment terms in
addition to second moments, leading to ``quadratic+linear'' potentials in lifted
features and to a valid lower bound without imposing pointwise feature
normalization. This makes linear feature learning, stochastic updates, and
neural-network parameterizations more direct.

Thus, our contribution should be seen as complementary to
\cite{bach2022information,bach2022sum}. The kernel convergence analysis is
largely parallel to the covariance-operator setting, while the main
practical difference is the combination of unnormalized-feature moment
relaxations, closed-form spectral formulas for divergence and potential
estimation, and feature-learning algorithms that exploit the resulting
unconstrained parameterization.

  \subsection{Comparison to direct density-ratio estimation}
Closely related are direct density-ratio estimators, which estimate
\(u = dp/dq\) without separately estimating the densities \(p\) and \(q\).
The Kullback-Leibler importance estimation procedure (KLIEP)
\cite{sugiyama2007direct} fits a nonnegative ratio model
\(u\) by maximizing an empirical log-likelihood,
\[
     {\mathbb{E}}_{\hat p}[\log u(x)],
\]
subject to the normalization constraint
\[
     {\mathbb{E}}_{\hat q}[u(x)] = 1,
\]
thereby targeting a KL discrepancy between \(p\) and the tilted reference
measure \(u q\). Least-squares importance fitting (LSIF)
\cite{kanamori2009least} instead estimates \(u\) by minimizing a
squared-loss criterion, leading to a convex quadratic program and, in
unconstrained variants, closed-form ridge-type estimators under fixed
basis functions. Our approach is complementary: rather than fitting a
nonnegative, normalized ratio model directly, we construct spectral lower
bounds for operator-convex \(f\)-divergences from first- and second-order
feature moments. For fixed features, this yields closed-form eigenvalue
estimators for divergences and associated potentials, replacing
constrained ratio fitting by a moment-based spectral relaxation.

\subsection{Link between binary classification and $f$-divergence}
\label{app:binary}
Following~\cite{JMLR:v12:reid11a}, for any $\sigma \in (0,1)$, for the binary classification problem of $y\in \{0,1\}$ given $x$, with class-conditional distributions $p(x|y=1) = p(x)$ and $p(x|y=0) = q(x)$, marginal probabilities $\P(y=1) = \sigma$ and $\P(y=0) = 1-\sigma$, we can consider the loss function
\[
\ell(y,h(x) )= a(h(x)) - (2y-1) h(x),
\]
for a convex function $a$ such that $a'(0) = 2\sigma - 1$, and the closure of the domain of $a^\ast$ is $[-1,1]$. We then have 
\BEAS
& & - a^\ast(2\sigma-1) - \inf_{h: \X \to \rb} \E[ \ell(y,h(x)) ] \\
& = &   \int_\X ( \sigma p(x) + (1-\sigma) q(x) ) \Big[ a^\ast \Big(
\frac{\sigma p(x) - (1-\sigma) q(x)}{\sigma p(x) + (1-\sigma) q(x)}
\Big) - a^\ast(2\sigma-1) \Big],
\EEAS
which is an $f$-divergence for
\[
f(t) = ( \sigma t +  1-\sigma ) \Big[ a^\ast \Big(
\frac{\sigma t- (1-\sigma)  }{\sigma t + 1-\sigma  }
\Big) - a^\ast(2\sigma-1) \Big],
\]
or equivalently
\[
a^\ast(u) - a^\ast(2\sigma-1) = f\Big(
\frac{1-\sigma}{\sigma} \frac{1+u}{1-u} 
\Big) \frac{1-u}{2} \frac{1}{1-\sigma}.
\]
This allows us to go from the binary classification formulation to the $f$-divergence, yielding a variational formulation in terms of binary classification
\[
D(p\|q) = \sup_{h: \X \to \rb} - \sigma \int_\X [ a(h(x))-h(x) ] dp(x) 
- (1-\sigma) \int_\X [ a(h(x))+h(x) ] dq(x) - a^\ast(2\sigma-1).
\]
There is thus a link between computing $f$-divergences and binary classification~\cite{JMLR:v12:reid11a}. This link is explicit for the Jensen-Shannon divergence and logistic regression~\cite{nowozin2016f}, by writing 
\[
  f^*(v)=-2\log(2-e^{v/2}).
\]
Setting
\[
  v_g(x)=-2\log\frac{1+e^{g(x)}}{2}
        =2\log\frac{2}{1+e^{g(x)}}
\]
gives
\[
  -f^*(v_g(x))=2\log\frac{2}{1+e^{-g(x)}}.
\]
Therefore, the variational formulation becomes (with a constant factor difference with the usual convention for the Jensen-Shannon divergence)
\[
D_{\rm JS}(p\|q)
=\sup_{g : \X \to \rb} \Big\{
2\mathbb E_p\log\frac{2}{1+e^{g(x)}}
+2\mathbb E_q\log\frac{2}{1+e^{-g(x)}}
\Big\},
\]
which is the equal-prior logistic classification objective, up to a sign convention for $g$, additive constants, and an overall positive multiplicative factor.

 Unfortunately, there is no such simple behavior for the KL divergence, for which the same calculation formally defines a binary-classification loss, but the resulting loss is not the standard logistic loss and is less useful computationally. Indeed, applying the relation above with
\(f(t)=t\log t-t+1\) gives, on \(u\in(-1,1)\),
\[
  a^*_{\rm KL}(u)-a^*_{\rm KL}(2\sigma-1)
  =\frac{1-u}{2(1-\sigma)}
    \left[\tau(u)\log\tau(u)-\tau(u)+1\right],
  \qquad
  \tau(u)=\frac{1-\sigma}{\sigma}\frac{1+u}{1-u}.
\]
For example, when \(\sigma=1/2\), this becomes
\[
  a^*_{\rm KL}(u)=(1+u)\log\frac{1+u}{1-u}-2u
\]
(up to an additive constant), and the primal loss involves the inverse of the function
\(u\mapsto \log\frac{1+u}{1-u}+\frac{2u}{1-u}\), which is not easy to handle computationally, and does not lead to a simple closed-form expression like our estimator.

Recent work \cite{zellingerbinary,zellinger2023adaptive} studies density-ratio estimation through binary classification. In particular, \cite{menon2016linking,zellingerbinary} characterizes binary losses whose excess classification risk controls a prescribed Bregman error for the density ratio, while~\cite{zellinger2023adaptive} gives adaptive RKHS rates for the corresponding regularized estimators. Our approach is complementary: instead of choosing a classification loss and solving the resulting empirical-risk problem, we integrate relative-Pearson least-squares objectives and obtain a closed-form spectral estimator from first and second moments. Thus, for the KL divergence, \cite{zellingerbinary,zellinger2023adaptive} provide a principled classifier-based route to density-ratio estimation, whereas our contribution is a moment-based spectral route that avoids the nonstandard KL-induced binary loss above.

\section{Proof of main properties from \mysec{prop}}
\label{app:proof_main}

 \subsection{Sufficient condition for finiteness}
   The definition in \eq{defF} through an integral as 
  \[
 F(p\|q ,\varphi)  =   \frac{1}{2}
\int_0^1 ( \mu_p - \mu_q)^\top (\rho \Sigma_p +(1-\rho)\Sigma_q)^{-1} ( \mu_p - \mu_q) 
d\nu(\rho)\]
can be made less ambiguous, in particular, when second-order moments $\Sigma_p$ and $\Sigma_q$ are not invertible. 
  Even with non-invertible (noncentered) covariance matrices/operators (which is always the case in infinite dimension), we can define  $ F(p\|q ,\varphi)  $ instead as
   \BEQ
\label{eq:official}    \inf_{t: (0,1) \to \rb} \frac{1}{2}\int_0^1 t(\rho) d\nu(\rho), \mbox{ such that } \forall \rho \in (0,1), \ \bimat{\rho \Sigma_p + (1-\rho)\Sigma_q}{(\mu_p-\mu_q)}{(\mu_p-\mu_q)^\top}{t(\rho)} \succcurlyeq  0
,\EEQ
which is finite as soon as $\mu_p - \mu_q$ belongs to the range of $\rho \Sigma_p + (1-\rho)\Sigma_q$, which is the case for all $\rho \in (0,1)$ since $\Sigma_p \succcurlyeq \mu_p  \mu_p^\top$ and 
 $\Sigma_q \succcurlyeq \mu_q  \mu_q^\top$. Thus, $F(p\|q ,\varphi) $ is the integral over $(0,1)$ of a finite function (which is continuous in $\rho$). It may, however, not be integrable on $[0,1]$, and it is then considered infinite. Since by construction, we always have $F(p\|q ,\varphi) \leqslant D(p\|q)$, it is always finite when $D(p\|q)$ is, and the condition that $dp/dq$ exists everywhere and lies in an interval $[\alpha,1/\alpha]$, for $\alpha \leqslant 1$, is sufficient. Then $D(p\|q)$ is well-defined and bounded by $\max\{ f(\alpha), f(1/\alpha)\}$.

Note that when \(\nu\) has atoms at the endpoints, as for the Pearson and reverse-Pearson cases, the constraint in \eq{official} is understood in the closed sense for all \(\rho\in\operatorname{supp}(\nu)\subset[0,1]\), with the cases \(\rho=0\) and \(\rho=1\) obtained by replacing \(\rho\Sigma_p+(1-\rho)\Sigma_q\) by \(\Sigma_q\) and \(\Sigma_p\), respectively.

 \subsection{Convexity}
 Each term $( \mu_p - \mu_q)^\top (\rho \Sigma_p +(1-\rho)\Sigma_q)^{-1} ( \mu_p - \mu_q) 
$ is jointly convex in $\mu_p, \mu_q, \Sigma_p,\Sigma_q$~\cite{Boyd2004Convex}, and since these moments are linear in $p$ and $q$, $F(p\|q ,\varphi)$ is jointly convex in $(p,q)$.

  \subsection{Monotonicity}
  The variational formulation for each $\rho$ through the optimization of a function $u$ that is linear in the feature map shows that if we augment the feature map, the value can only be higher.
  
  \subsection{Linear invariance}
 
  If $\varphi(x)$ is  replaced by $V\varphi(x)$ for an \emph{invertible} map $V$, 
  then $\mu_p,\mu_q$ are replaced with $V\mu_p$ and $V\mu_q$, and $\Sigma_p,\Sigma_q$ are replaced with $V\Sigma_p V^\top $ and  $V\Sigma_q V^\top $. The term $( \mu_p - \mu_q)^\top (\rho \Sigma_p +(1-\rho)\Sigma_q)^{-1} ( \mu_p - \mu_q) 
$ then remains the same. If $V$ is only injective, then the formulation in \eq{official} can be used. If $V$ is surjective, the monotonicity result above applies. For the regularized version $F_\lambda$, the invariance only applies to orthogonal matrices $V$.

  \subsection{Concavity in the kernel}
  We only need to show it for $f(t) = \frac{1}{2} \frac{(t-1)^2}{\rho t + 1-\rho} $; we follow the same proof technique as~\cite[Sec.~7.2]{bach2022information} and consider two feature maps $\varphi_1: \X \to \rb^{m_1} $ and $\varphi_2: \X \to \rb^{m_2} $, as well as the feature map $\varphi = 	( \sqrt{\eta_1} \varphi_1^\top, \sqrt{\eta_2} \varphi_2^\top)^\top) \in \rb^{m_1 + m_2}$, with $\eta_1,\eta_2 > 0$ summing to one, corresponding to a kernel
  \[
  \varphi^\top(x) \varphi(y) =    \eta_1\varphi_1^\top(x) \varphi_1(y) + \eta_2  \varphi_2^\top(x) \varphi_2(y),
  \]
  which is the convex combination of the kernels for $\varphi_1$ and $\varphi_2$. We will show concavity for the regularized version, and thus consider
  \[
\!\! \!\!\!\!{  \bivec{\sqrt{\eta_1} \delta_1}{\sqrt{\eta_2} \delta_2}
  \!} ^\top \!\!\bimat{ \eta_1 M_{11} + \lambda \idm}{\sqrt{\eta_1\eta_2}M_{12}}
  {\sqrt{\eta_1\eta_2}M_{21}}{\eta_2 M_{22} + \lambda \idm}^{-1} \!\! \bivec{\sqrt{\eta_1} \delta_1}{\sqrt{\eta_2} \delta_2}
  =  {\bivec{  \delta_1}{  \delta_2}\!\!}
  ^\top \!\! \bimat{  M_{11} \!+\! \lambda \eta_1^{-1}  \idm}{ M_{12}}
  { M_{21}}{  M_{22} \!+\! \lambda \eta_2^{-1}  \idm}^{-1}\!\!  \bivec{ \delta_1}{ \delta_2}
  \]
  for $\delta_i = (\E_p - \E_q) \varphi_i$, $M_{ij} = \E_{\rho p + (1-\rho) q} \varphi_i \varphi_j^\top$ (for $i,j \in \{1,2\}$). It is of the form
  \[
  v^\top ( NN^\top +  D^{-1})^{-1} v,
  \]
  with $D$ block-diagonal with blocks $\lambda^{-1} \eta_1  \idm $ and $\lambda^{-1} \eta_2\idm$. By the matrix inversion lemma, it is of the form
  \[
  v^\top D v - v^\top D N ( \idm + N^\top D N)^{-1} N^\top D v ,
  \]
  which is concave in $D$~\cite{ando1979concavity}, hence the result.

  \subsection{Link with $f$-divergence (proof of Prop.~\ref{prop:universality})}
  We only need to prove the result for $f(t) = \frac{1}{2} \frac{(t-1)^2}{\rho t + 1-\rho} $, and then proceed by integration. The lower-bound result is a direct consequence of the variational formulation in \eq{DpqUU} in \mysec{introduction} with $u$ being linear in $\varphi$.
  
  The tightness condition is simply a consequence of the fact that the maximizer of \eq{DpqUU} is exactly the function $ x \mapsto \big( \frac{dp}{dq}(x) - 1 \big) / \big( \rho \frac{dp}{dq}(x) + 1-\rho\big) $.

  \subsection{Gaussian case}
   When $\X = \rb^d$ and $\varphi(x) = x$, then our estimate for $D(p\|q)$ depends only on the first two moments and has to be strictly lower than the value of $D(p\|q)$ for Gaussians (except in degenerate cases). In other words, as opposed to~\cite[Sec.~3.4]{bach2002kernel}, our new divergence does not rely on any Gaussian analogy or assumption, which can never be true unless $\varphi$ has special structure.

\section{Proof of variational formulation (Prop.~\ref{prop:closedform})}
\label{app:var}
Throughout this section, we assume that $\H = \rb^m$ is finite-dimensional and $\Sigma_q$ is invertible. We denote by $\mathcal{A}$ the set of $m\times m$ symmetric matrices.
We first define a new divergence $G$ and show by convex duality that it is equal to $F$.
 
\begin{proposition}
\label{prop:G}
Assume $\Sigma_q$ invertible. Define:
\BEAS
G(\mu_p,\Sigma_p\|\mu_q,\Sigma_q)
& = & 
\sup_{M,N \in \mathcal{A}, \ c \in \H}
\tr[ M\Sigma_p] + 2  c^\top(\mu_p-\mu_q) + \tr[ N\Sigma_q]   \\
& & \mbox{ such that } \forall \lambda >0, \bimat{\lambda M+N}{(\lambda-1)c}{(\lambda-1)c ^\top}{-f(\lambda)} \preccurlyeq 0.
\EEAS

\BIT
\item If $f$ is twice continuously differentiable, then $G(\mu_p,\Sigma_p\|\mu_q,\Sigma_q) \leqslant F(p\|q,\varphi)$.  \item If $f$ is operator-convex, then $G(\mu_p,\Sigma_p\|\mu_q,\Sigma_q)\geqslant  F(p\|q,\varphi)$, and thus the two values are equal.
\EIT
\end{proposition}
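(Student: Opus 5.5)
The plan is to treat the two inequalities separately. For the upper bound I test the semidefinite constraint on well-chosen vectors in the generalized eigenbasis of $(\Sigma_p,\Sigma_q)$. For the lower bound I build an explicit feasible $(M,N,c)$ by integrating, over $\nu$, the quadratic potentials of the weighted chi-squared problems in \eq{DpqUU}. Throughout, write $\delta = \mu_p-\mu_q$ and $M(\rho)=\rho\Sigma_p+(1-\rho)\Sigma_q$. In the first item, $F$ is understood through the spectral expression \eq{defFspec}, which makes sense for any twice-differentiable $f$ with $f(1)=f'(1)=0$.

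\textbf{Step 1: $G\leqslant F$.} Fix $(M,N,c)$ feasible and let $(v_i)$ be the generalized eigenbasis, so $V^\top\Sigma_q V=\idm$ and $V^\top\Sigma_p V=\Diag(\lambda)$. Then the objective equals $\sum_i \big[\lambda_i v_i^\top M v_i + v_i^\top N v_i + 2 (c^\top v_i)(\delta^\top v_i)\big]$. For each $i$, apply the constraint at $\lambda=\lambda_i$ to the vector $(v_i, b_i)$ with $b_i=\delta^\top v_i/(\lambda_i-1)$. This gives
\[\lambda_i v_i^\top M v_i+v_i^\top Nv_i+2(c^\top v_i)(\delta^\top v_i)\leqslant \frac{f(\lambda_i)}{(\lambda_i-1)^2}(\delta^\top v_i)^2 .\]
Summing over $i$ yields exactly \eq{defFspec}. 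When $\lambda_i=1$, I instead use the constraint at $\lambda\neq 1$ with $b=\delta^\top v_i/(\lambda-1)$ and let $\lambda\to 1$. The right-hand side becomes $\frac{f(\lambda)}{(\lambda-1)^2}(\delta^\top v_i)^2\to \frac{f''(1)}{2}(\delta^\top v_i)^2$; this limit is the only place where twice continuous differentiability and $f(1)=f'(1)=0$ are used.

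\textbf{Step 2: $G\geqslant F$ for operator-convex $f$.} Here $f(\lambda)=\int_0^1 f_\rho(\lambda)\,d\nu(\rho)$ with $f_\rho(\lambda)=\frac{(\lambda-1)^2}{2 s_\rho(\lambda)}$ and $s_\rho(\lambda)=\rho\lambda+1-\rho$.
\begin{itemize}
\item For each $\rho$, set $\theta_\rho=M(\rho)^{-1}\delta$. Reading off the potentials $v=u-\frac{\rho}{2}u^2$ and $w=-u-\frac{1-\rho}{2}u^2$ with $u=\theta_\rho^\top\varphi$, take $M_\rho=-\frac{\rho}{2}\theta_\rho\theta_\rho^\top$, $N_\rho=-\frac{1-\rho}{2}\theta_\rho\theta_\rho^\top$ and $c_\rho=\frac12\theta_\rho$.
\item Feasibility for $f_\rho$ is a rank-one identity: the constraint matrix equals $-\frac12 zz^\top$ with $z=(\sqrt{s_\rho(\lambda)}\,\theta_\rho,\,-(\lambda-1)/\sqrt{s_\rho(\lambda)})$, so it is $\preccurlyeq 0$ for every $\lambda>0$.
\item The objective equals $-\frac12\theta_\rho^\top M(\rho)\theta_\rho+\theta_\rho^\top\delta=\frac12\delta^\top M(\rho)^{-1}\delta$.
\end{itemize}
Now set $M=\int M_\rho\, d\nu$, $N=\int N_\rho\, d\nu$, $c=\int c_\rho\, d\nu$. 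The constraint matrix is affine in $(M,N,c,f)$, so the integrated matrix is an integral of negative semidefinite matrices and remains feasible for $f$. The objective is linear, so it integrates to \eq{defF}, which proves $G\geqslant F$. Together with Step 1 (operator-convex $f$ are smooth), this gives equality.

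\textbf{Main obstacle.} The delicate point is integrability and the endpoints of $\nu$. At $\rho=1$, $M(1)=\Sigma_p$ may be singular, so $\theta_\rho$ can blow up. I would first treat $\Sigma_p$ invertible, or restrict to $\rho\in[0,1-\epsilon]$, and then pass to the limit using monotone convergence in $\rho\mapsto\frac12\delta^\top M(\rho)^{-1}\delta$. For the upper bound, the sup is approached by feasible points with finite value. The case where $F$ is infinite is trivial for the inequality $G\leqslant F$. I also need to check that the spectral formula \eq{defFspec} coincides with the integral \eq{defF}. This follows by diagonalizing $M(\rho)$ in the same basis: $\frac12\int (\rho\lambda_i+1-\rho)^{-1}d\nu=f(\lambda_i)/(\lambda_i-1)^2$.
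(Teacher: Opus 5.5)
Step 1 rests on a false expansion of the objective. From $V^\top\Sigma_q V=\idm$ and $V^\top\Sigma_p V=\Diag(\lambda)$ you get $\Sigma_q V=V^{-\top}$. So, with $w_i=\Sigma_q v_i$,
\[
\Sigma_q=\sum_i w_i w_i^\top,\qquad \Sigma_p=\sum_i \lambda_i w_i w_i^\top,\qquad \delta=\sum_i (v_i^\top\delta)\, w_i .
\]
The $v_i$ themselves diagonalize only the inverses, as in $M(\rho)^{-1}=\sum_i(\rho\lambda_i+1-\rho)^{-1}v_iv_i^\top$, which you use correctly at the end. Hence the objective equals $\sum_i\big[\lambda_i w_i^\top M w_i+w_i^\top N w_i+2(c^\top w_i)(\delta^\top v_i)\big]$. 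Your formula with $v_i$ in every slot is valid only when $V$ is orthogonal, e.g.\ $\Sigma_q=\idm$. Already for $m=1$, $\Sigma_q=s$, $v_1=s^{-1/2}$, your expression is $(\lambda_1M+N+2c\delta)/s$, while the objective is $s(\lambda_1M+N)+2c\delta$. Your per-$i$ inequalities are true, but they do not sum to the objective, so $G\leqslant F$ does not follow as written.

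The repair is to test the constraint at $\lambda=\lambda_i$ on the vector $(\Sigma_q v_i,b_i)$, with the same $b_i=\delta^\top v_i/(\lambda_i-1)$. This gives $\lambda_i w_i^\top M w_i+w_i^\top N w_i+2(c^\top w_i)(\delta^\top v_i)\leqslant \frac{f(\lambda_i)}{(\lambda_i-1)^2}(\delta^\top v_i)^2$, which sums correctly. Equivalently, first reduce to $\Sigma_q=\idm$ through the congruence $(\Sigma_p,\Sigma_q,\delta)\mapsto(A\Sigma_pA^\top,A\Sigma_qA^\top,A\delta)$, $(M,N,c)\mapsto(A^{-\top}MA^{-1},A^{-\top}NA^{-1},A^{-\top}c)$, under which both $G$ and $F$ are unchanged. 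Your limiting argument at $\lambda_i=1$ carries over verbatim. The same device handles a possible $\lambda_i=0$, where the constraint is not imposed.

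With this correction, Step 1 is exactly the paper's argument in primal form. The paper writes the conic dual over PSD-valued measures and exhibits a certificate of value $F$ whose rank-one atoms $(\Sigma_q v_i,b_i)(\Sigma_q v_i,b_i)^\top$ sit at the points $\lambda_i$. Pairing the primal constraint with these atoms, as you do, is weak duality done by hand and spares you from deriving the dual.

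Step 2 coincides with the paper's construction ($c_\rho=\frac12M(\rho)^{-1}\delta$, $M_\rho=-2\rho c_\rho c_\rho^\top$, $N_\rho=-2(1-\rho)c_\rho c_\rho^\top$). Your factorization $-\frac12zz^\top$ makes feasibility explicit, and the truncation to $\rho\leqslant1-\epsilon$ addresses an integrability issue the paper's proof leaves open. When writing the truncation, say why the truncated certificate is feasible for $f$ itself: since $f_\rho\geqslant0$, we have $\int_{[0,1-\epsilon]}f_\rho\,d\nu\leqslant f$, and lowering the bottom-right entry preserves negative semidefiniteness. Monotone convergence then gives $G\geqslant F$, including when $F=+\infty$.
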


\begin{proof} 
By construction, $G$ is convex in $\Sigma_p, \Sigma_q, \mu_p - \mu_q$, and any maximizers $M,N,2c$ are respective (sub)gradients with respect to these moments.

Following~\cite{bach2022sum}, the dual problem to the one defining $G(\mu_p,\Sigma_p\|\mu_q,\Sigma_q)$ is
\BEAS
 \!\!\!\!\!\!\! \!\!\!\!\!\!\!& & \inf_{\Lambda, \xi, \zeta} \int_0^{+\infty}\!\!\! f(\lambda) d\zeta(\lambda) \mbox{ such that } \Sigma_p \!=\! \int_0^{+\infty} \!\!\!\! \lambda d\Lambda(\lambda), \ 
\Sigma_q \!=\! \int_0^{+\infty} \!\!\!\! d\Lambda(\lambda), \ 
\mu_p-\mu_q \!=\! \int_0^{+\infty} \!\! \! ( \lambda -1)d\xi(\lambda) \\
  \!\!\!\!\!\!\!\!\!\!\!\!\!\!& & \hspace*{4.4cm} \bimat{\Lambda}{\xi}{\xi^\top}{\zeta} \mbox{ is a PSD-valued measure}.  
\EEAS
Let $\Sigma_q^{-1/2} \Sigma_p \Sigma_q^{-1/2} = \sum_{i =1}^m \lambda_i u_i u_i^\top$ be the eigenvalue decomposition of  $\Sigma_q^{-1/2} \Sigma_p \Sigma_q^{-1/2}$, thus with  $v_i = \Sigma_q^{-1/2} u_i$ defining the generalized eigenvalue decomposition of the pair $(\Sigma_p,\Sigma_q)$. Define
\[
\bimat{\Lambda}{\xi}{\xi^\top}{\zeta}
= \sum_{i =1}^m 
\bimat{\Sigma_q v_i v_i^\top \Sigma_q }{ \Sigma_q v_i v_i^\top  (\mu_p - \mu_q)  /(\lambda_i-1)}
{  (\mu_p - \mu_q) v_i v_i^\top \Sigma_q  /(\lambda_i-1)}{( v_i^\top  (\mu_p-\mu_q ) /(\lambda_i-1))^2}
\delta_{\lambda_i} ,
\]
which is dual-feasible since
$\sum_{i=1}^m v_i v_i^\top = \Sigma_q^{-1}$, $\sum_{i=1}^m \lambda_i v_i v_i^\top = \Sigma_q^{-1} \Sigma_p \Sigma_q^{-1}$. We then have by construction:
\BEAS
 \int_0^{+\infty} \!\! f(\lambda) d\zeta(\lambda)
 & = & F(p\|q,\varphi).
 \EEAS
 Thus, $F(p\|q,\varphi)$ is a dual value, leading to the first result (note that the case $\lambda_i=1$ can be treated by a limiting argument because we have assumed that when $t$ tends to $1$, $f(t)/(t-1)^2$ tends to $ f''(1)/2$).

In order to prove the second result, it suffices to find feasible $M,N,c$ so that $\tr[ M\Sigma_p] + 2  c^\top(\mu_p-\mu_q) + \tr[ N\Sigma_q]  $ is equal to $F(p\|q,\varphi)$. Following the proof of the operator Jensen's inequality~\cite{lesniewski1999monotone},
  it suffices to do so for $f(t) = \frac{1}{2} \frac{ (t-1)^2}{\rho t + 1-\rho}$ for $\rho \in [0,1]$ (as all properly rescaled operator-convex functions are positive linear combinations of these). We can then consider candidates $M,N,2c$ from derivatives of
$
\frac{1}{2} (\mu_p-\mu_q)^\top ( \rho \Sigma_p + (1-\rho)\Sigma_q )^{-1} (\mu_p-\mu_q)
$ with respect to $\Sigma_p,\Sigma_q,\mu_p-\mu_q$, that is,
\BEAS
c_\rho & \!\!\!\!= \!\!\!\!&\frac{1}{2} ( \rho \Sigma_p + (1-\rho)\Sigma_q )^{-1} (\mu_p-\mu_q) \\
M_\rho & \!\!\!\!= \!\!\!\!&-\frac{1}{2} \rho  ( \rho \Sigma_p + (1-\rho)\Sigma_q )^{-1} (\mu_p-\mu_q)   (\mu_p-\mu_q)^\top
 ( \rho \Sigma_p + (1-\rho)\Sigma_q )^{-1} =- 2\rho c_\rho c_\rho^\top \\
N_\rho &\!\!\!\! = \!\!\!\!& -\frac{1}{2} (1-\rho)  ( \rho \Sigma_p + (1-\rho)\Sigma_q )^{-1} (\mu_p-\mu_q)   (\mu_p-\mu_q)^\top
 ( \rho \Sigma_p + (1-\rho)\Sigma_q )^{-1} \\
 &  = & - 2(1-\rho)  c_\rho  c_\rho^\top.
\EEAS
For these, for any $\lambda>0$, we have:
\[
\bimat{\lambda M_\rho+N_\rho}{(\lambda-1)c_\rho}{(\lambda-1)c_\rho ^\top}{-f(\lambda)} 
=  
\bimat{- 2 ( \lambda \rho + 1-\rho)  c_\rho  c_\rho^\top}{(\lambda-1)c_\rho}{(\lambda-1)c_\rho ^\top}{- \frac{1}{2} \frac{(\lambda-1)^2}{\rho \lambda + 1-\rho}} ,
\]
which is indeed negative semi-definite.
 These candidates are thus feasible for the optimization problem with the value $F(p\|q,\varphi)$ for that particular $f(t) = \frac{1}{2} \frac{ (t-1)^2}{\rho t + 1-\rho}$. For a generic $f$ that is the integral over the probability distribution $\nu$ of that particular $f$, we can take
 \BEQ
 \label{eq:candrho}
  M = \int_0^1 M_\rho d\nu(\rho), \ \ N = \int_0^1 N_\rho d\nu(\rho), \ \ c = \int_0^1 c_\rho d\nu(\rho),
  \EEQ
   which are feasible (by convexity) for the problem defining $G$. These candidates lead to  $\tr[ M\Sigma_p] + 2   c^\top(\mu_p-\mu_q) + \tr[ N\Sigma_q]  $ that is equal to $F(p\|q,\varphi),$ hence the second result.
 \end{proof}

\textbf{Link with variational formulation of $D(p\|q)$.}
We can rewrite $G$ as
\BEAS
\!\!\!\!\!\!\!\! G(\mu_p,\Sigma_p \| \mu_q, \Sigma_q)
& \!\!\!\!= \!\!\!\!&  \sup_{ v,w : \X \to \rb } \!
  \int_\X\! v(x) dp(x)  + \!\!\int_\X\! w(x) dq(x)  \\
  & & \hspace*{3cm} \mbox{ such that }  \ \forall x \in \X, \ \forall \lambda \geqslant 0, \  \lambda v(x)  + w(x) \leqslant f(\lambda) ,
\EEAS
 with the additional constraint that $v(x) = \bivec{\varphi(x)}{1} ^\top\bimat{M}{c}{c^\top}{0} \bivec{\varphi(x)}{1} $ and $w(x) = \bivec{\varphi(x)}{1} ^\top\bimat{N}{-c}{-c^\top}{0} \bivec{\varphi(x)}{1} $, with
 \BEQ
 \label{eq:DDD}
\lambda \bimat{M}{c}{c^\top}{0}  + 
 \bimat{N}{-c}{-c^\top}{0} + \bimat{0}{0}{0}{-f(\lambda) } \preccurlyeq 0,
 \EEQ
 which implies that $\lambda v(x) + w(x) \leqslant f(\lambda)$ for all $x \in \X$, which is exactly the constraint described in \mysec{introduction}.
 
 \textbf{Proof  of Prop.~\ref{prop:closedform}.} Given Prop.~\ref{prop:G} and the developments above regarding expressions of potentials $v$ and $w$, we only need to find closed-form expressions for $M$, $N$, and $c$ above. This can be obtained by looking at gradients of $F$ or $G$ with respect to $\Sigma_p, \Sigma_q$, and $\mu_p - \mu_q$. This can be done in two ways that lead to the same result.
 
 \BIT
 \item \textbf{Integration with respect to $\rho$}: we consider the candidates defined in \eq{candrho}, and notice that we have 
 \BEAS c_\rho 
 & = & 
 \frac{1}{2} ( \rho \Sigma_p + (1-\rho)\Sigma_q )^{-1} (\mu_p-\mu_q)
\\
& = &  \frac{1}{2} \Sigma_q^{-1/2} ( \rho \Sigma_q^{-1/2}\Sigma_p \Sigma_q^{-1/2} + (1-\rho )\idm  )^{-1} \Sigma_q^{-1/2} (\mu_p-\mu_q) \\
& = &  \frac{1}{2} \sum_{i=1}^m ( \rho \lambda_i + 1-\rho   )^{-1} v_i v_i^\top (\mu_p-\mu_q),
  \EEAS
  leading to, using $f(t) =\frac{1}{2} \int_0^1 \frac{(t-1)^2}{\rho t + 1-\rho} d\nu(\rho)$, $c = \int_0^1 c_\rho d\nu(\rho) = 
\sum_{i=1}^m \frac{f(\lambda_i)}{(\lambda_i-1)^2} v_i v_i^\top (\mu_p-\mu_q)$. Similarly, we have:
\BEAS
\!M_\rho &\!\! \!\!=\!\!\! \!& - 2 \rho c_\rho c_\rho^\top 
= -    \frac{\rho}{2} \sum_{i,j=1}^m ( \rho \lambda_i + 1-\rho   )^{-1} ( \rho \lambda_j + 1-\rho   )^{-1} v_i v_i^\top (\mu_p-\mu_q)(\mu_p-\mu_q)^\top v_j v_j^\top.
\EEAS
We thus need to compute the integral
$\displaystyle
\int_0^1
\frac{\rho}{2}  ( \rho \lambda_i + 1-\rho   )^{-1} ( \rho \lambda_j + 1-\rho   )^{-1} d\nu(\rho)
$, and notice that for $\lambda_i \neq \lambda_j$ (with the equality case obtained by a limiting argument):
\BEAS
&&\frac{f(\lambda_i)/(\lambda_i-1)^2-f(\lambda_j)/(\lambda_j-1)^2}{\lambda_i - \lambda_j} \\
& = & 
\frac{1}{2(\lambda_i - \lambda_j) } \int_0^1
\Big( \frac{1}{\rho (\lambda_i-1) + 1}
-  \frac{1}{\rho (\lambda_j - 1) + 1}
\Big)d\nu(\rho)\\
& = & 
\frac{1}{2(\lambda_i - \lambda_j) } \int_0^1
\frac{\rho(\lambda_j - \lambda_i)}{ (\rho (\lambda_i-1) + 1)(\rho (\lambda_j - 1) + 1)}
d\nu(\rho) \\
& = &  -\int_0^1
\frac{\rho}{2}  ( \rho \lambda_i + 1-\rho   )^{-1} ( \rho \lambda_j + 1-\rho   )^{-1} d\nu(\rho),
\EEAS
  which shows the expression for $M$. A similar argument leads to the expression for $N$.
  
  \item \textbf{Derivatives of spectral functions}: We can first express $F(p\|q,\varphi)$ as a spectral function, as
  \BEAS
  F(p\|q ,\varphi) &  \!\!\!= \!\!\!&    \frac{1}{2}
\int_0^1 ( \mu_p - \mu_q)^\top (\rho \Sigma_p +(1-\rho)\Sigma_q  )^{-1} ( \mu_p - \mu_q) 
d\nu(\rho)  \\
& \!\!\! = \!\!\!&    \frac{1}{2}
\int_0^1 ( \mu_p - \mu_q)^\top \Sigma_q^{-1/2} (\rho \Sigma_q^{-1/2} \Sigma_p \Sigma_q^{-1/2} +(1-\rho) \idm)^{-1} \Sigma_q^{-1/2}  ( \mu_p - \mu_q) 
d\nu(\rho)  
\\
& \!\!\! = \!\!\!&    \frac{1}{2}
\int_0^1 ( \mu_p - \mu_q)^\top  
\sum_{i=1}^m  (\rho\lambda_i + 1-\rho )^{-1} v_i v_i^\top  ( \mu_p - \mu_q) 
d\nu(\rho)  \\
& \!\!\! = \!\!\!&  \sum_{i=1}^m \frac{f(\lambda_i)}{(\lambda_i-1)^2}  ( v_i^\top  ( \mu_p - \mu_q) )^2,
\EEAS
which is exactly  \eq{defFspec}. Moreover,  we have
 \BEQ
 \label{eq:Fspec}
 \!\!\! F(p\|q ,\varphi) 
=
  ( \mu_p - \mu_q)^\top \Sigma_q^{-1/2}  f ( \Sigma_q^{-1/2} \Sigma_p \Sigma_q^{-1/2} )  ( \Sigma_q^{-1/2} \Sigma_p \Sigma_q^{-1/2} - \idm)^{-2} \Sigma_q^{-1/2}   ( \mu_p - \mu_q) 
  ,
  \EEQ
  which is obtained from a spectral function of $\Sigma_q^{-1/2} \Sigma_p \Sigma_q^{-1/2}$. We can then use derivatives of spectral functions~\cite{lewis1996derivatives} to obtain the same result.
  
  \EIT

  \textbf{Additional observation.}
  The formulation in Prop.~\ref{prop:closedform} is not true for infinite-dimensional $\H$, as the pair $(\Sigma_p,\Sigma_q)$ typically does not have a discrete generalized spectrum (see App.~\ref{app:noeig}). This is not a problem, however, in practice, as, for the natural estimator defined in \mysec{estimate}, the representer theorem allows a representation in a finite vector space, and thus closed-form spectral estimates.

The results from \mysec{theory} could be extended to assess the performance of our functions $v(x)$ and $w(x)$ at test time (for the test variational formulation).

\section{Expression in terms of kernel matrix}
\label{app:kerdir}
\label{sec:effcomp}
 Writing $\Psi = {  \Phi_p \choose   \Phi_q} \in \rb^{(n_p+n_q)\times m}$, for  empirical feature matrices $\Phi_p \in \rb^{n_p \times m}$ and
 $\Phi_q \in \rb^{n_q \times m}$, 
 based on a possibly thin Cholesky factorization
 of
 \[
 K = \bimat{ K_{pp}}{  K_{pq}}{  K_{qp}}{ K_{qq}}
 = \Psi \Psi^\top \in \rb^{ (n_p + n_q) \times (n_p + n_q) },
 \]
 we
 have, up to a subspace projection, 
 \[
 \hat{\Sigma}_p = \Psi^\top \bimat{\frac{1}{n_p} \idm}{0}{0}{0} \Psi
 \ \ \  \hat{\Sigma}_q = \Psi^\top \bimat{0}{0}{0}{\frac{1}{n_q}\idm} \Psi, \ \ \ \hat{\mu}_p =  \Psi^\top { \frac{1}{ {n_p}} 1_{n_p} \choose 0}, \ \ \  \mbox{ and } \hat{\mu}_q = \Psi^\top { 0 \choose \frac{1}{ {n_q}} 1_{n_q} }.
 \]
We then consider the estimator
$
F_\lambda(\hat{p}\| \hat{q},\varphi),
$ using the spectral formulation in \eq{defFspec},
which can be computed with complexity $O(m^3)$. If only kernel matrices are given, we can compute $\Psi$ as a Cholesky decomposition of $K$, leading to a complexity of $O(n^3)$ and $m\leqslant n$.

We now provide an expression for the estimate of our divergence $F$ based solely on kernel matrices. This could be useful for computing partial derivatives. 

\textbf{Special case: single $\rho \in (0,1)$.}
For the particular case $f(t) = \frac{1}{2} \frac{(t-1)^2}{\rho t + 1-\rho}$,
we can express the estimators using the kernel matrix $\Psi \Psi^\top$ directly, as:  
  \BEAS
 F_\lambda(\hat{p}\|\hat{q},\varphi) 
 & \!\!\!= \!\!\!& \frac{1}{2} ( \hat\mu_p - \hat\mu_q)^\top   (  \rho \hat{\Sigma}_p +(1-\rho) \hat{\Sigma}_q + \lambda \idm ) ^{-1} ( \hat\mu_p - \hat\mu_q)  \\
 &\!\!\! = \!\!\!&  \frac{1}{2} { \frac{1}{ {n_p}} 1_{n_p} \choose -\frac{1}{ {n_q}} 1_{n_q}}^\top \Psi
 \bigg(  \Psi^\top \bimat{\frac{\rho}{n_p}\idm}{0}{0}{\frac{1-\rho}{n_q}\idm} \Psi + \lambda \idm\bigg)^{-1}
 \Psi^\top { \frac{1}{ {n_p}} 1_{n_p} \choose -\frac{1}{ {n_q}} 1_{n_q}}
 \\
& \!\!\!= \!\!\!& \frac{1}{2}\frac{1}{\rho(1-\rho)}
   -  \frac{1}{2} \lambda { \frac{1}{\rho} 1_{n_p} \choose \frac{-1}{1-\rho}1_{n_q} }
   \bigg( \Psi \Psi^\top + \lambda \bimat{\frac{n_p}{\rho} \idm}{0}{0}{\frac{n_q} {1-\rho} \idm} \bigg)^{-1}
    { \frac{1}{\rho} 1_{n_p} \choose \frac{-1}{1-\rho} 1_{n_q}}.
    \\
\EEAS
 
 \textbf{Expression for invertible $K = \Psi \Psi^\top$.} We must then have $m \geqslant n$, and we can simply redefine the feature map by projection onto the span of the data to reduce to the case $m=n$, and thus $\Psi$ invertible.
 Then for $D(\rho) = \bimat{\frac{\rho}{n_p} \idm}{0}{0}{\frac {1-\rho}{n_q} \idm}$ (which is affine in $\rho$), and
 $z =  { \frac{1}{ {n_p}} 1_{n_p} \choose -\frac{1}{ {n_q}} 1_{n_q}}$, we get the expression for a single $\rho$:
\BEAS
  F_\lambda(\hat{p}\|\hat{q},\varphi) 
 & \!\!\!= \!\!\!&\frac{1}{2} z^\top \Psi ( \Psi^\top D(\rho) \Psi + \lambda \idm)^{-1} \Psi^\top z  =\frac{1}{2}  z^\top  (  D(\rho)   + \lambda (\Psi \Psi^\top)^{-1})^{-1}   z 
 \EEAS
 Since $D(\rho)$ is affine in $\rho$, we can integrate with respect to $d\nu(\rho)$ in closed form using values of $f$.

\section{Dealing with an unregularized constant feature}
\label{app:constant}
In order to go beyond linear invariance towards affine invariance, we look at the effect of adding a constant feature (which, by monotonicity, can only lead to larger values). 

\BIT

 \item \emph{Constant feature and affine invariance:}  Adding a constant to $\varphi$, that is, using $\bivec{\varphi}{1}$, we get from \eq{official}:
\[ \inf_{t: (0,1) \to \rb} \frac{1}{2}\int_0^1 t(\rho) d\nu(\rho) \mbox{ such that } \left( \begin{array}{ccc} \rho \Sigma_p + (1-\rho)\Sigma_q& \rho \mu_p + (1-\rho)\mu_q & \mu_p-\mu_q\\
\rho \mu_p^\top + (1-\rho)\mu_q^\top & 1 & 0 \\ 
(\mu_p-\mu_q)^\top & 0 & t(\rho)
\end{array} \right)
\succcurlyeq 0\]
with a constraint that is equivalent to
\[
\rho \Sigma_p + (1-\rho)\Sigma_q - ( \rho \mu_p + (1-\rho)\mu_q )   (\rho \mu_p + (1-\rho)\mu_q )^\top \succcurlyeq \frac{1}{t(\rho)} ( 
\mu_p-\mu_q)   (\mu_p-\mu_q)^\top.
\]
This leads to $F(p\|q,{\varphi \choose 1})$ being equal to
\[
 \frac{1}{2}
\int_0^1 \!\! ( \mu_p-\mu_q)^\top (\rho \Sigma_p+(1-\rho)\Sigma_q  - ( \rho \mu_p + (1-\rho)\mu_q  )   (\rho \mu_p + (1-\rho)\mu_q )^\top )^{-1} (\mu_p-\mu_q) 
d\nu(\rho),  \]
with 
\BEAS
& & \rho \Sigma_p+(1-\rho)\Sigma_q  - ( \rho \mu_p + (1-\rho) \mu_q  )   (\rho \mu_p + (1-\rho)\mu_q )^\top \\
& = &  \rho( \Sigma_p - \mu_p  \mu_p^\top) + (1-\rho) ( \Sigma_q - \mu_q  \mu_q^\top) + \rho(1-\rho) (\mu_p-\mu_q)  (\mu_p-\mu_q)^\top,
\EEAS
which is the (centered) covariance matrix $C_{r(\rho)}$ for the distribution $\rho p + (1-\rho) q$. We thus have the expression
\[
F\Big(p\|q,{\varphi \choose 1}\Big) = \frac{1}{2} \int_0^1 ( \mu_p - \mu_q)^\top C_{r(\rho)}^{-1} (\mu_p -\mu_q)   d\nu(\rho).
\]
This makes the divergence invariant under any invertible \emph{affine} mapping.

In terms of regularization, in order to avoid penalizing constant functions, we can consider
\[\!\! \inf_{t: (0,1) \to \rb} \frac{1}{2}\int_0^1\!\! t(\rho) d\nu(\rho) \mbox{ such that } \left( \begin{array}{ccc} \!\rho \Sigma_p + (1-\rho)\Sigma_q \textcolor{red}{+ \lambda \idm}\!\!\!& \rho \mu_p + (1-\rho)\mu_q & \mu_p-\mu_q\\
\rho \mu_p^\top + (1-\rho)\mu_q^\top & 1 & 0 \\ 
(\mu_p-\mu_q)^\top & 0 & t(\rho)
\end{array} \right)
\succcurlyeq 0,\]
which we use in experiments. From the variational representations in terms of matrices $M = \bimat{\bar{M}}{\bar{c}}{\bar{c}^\top}{\bar{e}}$
and $N = \bimat{\bar{N}}{\bar{d}}{(\bar{d})^\top}{\bar{f}}$, this corresponds to gradients with respect to $\Sigma_p$, $\Sigma_q$, $\mu_p$ and $\mu_q$ equal to $\bar{M}$, $\bar{N}$,  $2(c+\bar{c})$, and  $2(-c +\bar{d})$.

\item \emph{Existence of constant feature:}  A special formulation applies when there exists $u \in \H$ such that for all $x\in \X$, $  u^\top \varphi(x) = 1$. We get a dependence only on $\Sigma_p$ and $\Sigma_q$:	
\[
F(p\|q,\varphi) =  u^\top \Sigma_q^{1/2} f\big(
 \Sigma_q^{-1/2} \Sigma_p \Sigma_q^{-1/2}
 \big)  \Sigma_q^{1/2} u.\]

\item \emph{Variational formulation:}
When there exists a $u$ to represent constants, Prop.~\ref{prop:G} can be modified into the following result.
\begin{proposition}
 Define:
\BEAS
H(\mu_p,\Sigma_p\|\mu_q,\Sigma_q)
& = & 
\sup_{M,N \in \mathcal{A} }
\tr[ M\Sigma_p] +   \tr[ N\Sigma_q]    \\
& & \mbox{ such that } \forall \lambda >0, \ \lambda M + N  - f(\lambda) uu^\top \preccurlyeq 0.
\EEAS

\BIT
\item If $f$ is twice continuously differentiable, then $H(\mu_p,\Sigma_p\|\mu_q,\Sigma_q) \leqslant F(p\|q,\varphi)$. 
\item If $f$ is operator-convex, then $H(\mu_p,\Sigma_p\|\mu_q,\Sigma_q) \geqslant F(p\|q,\varphi)$, and thus the values are equal.
\EIT
\end{proposition}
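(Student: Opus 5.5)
We reduce this proposition to Prop.~\ref{prop:G}, using the constant-representing vector $u$ to rewrite the lifted feature $\bivec{\varphi}{1}$ in terms of $\varphi$ alone. Since $u^\top\varphi(x)=1$ for all $x$, we have $u^\top\mu_p = u^\top\mu_q = 1$, $\Sigma_p u = \mu_p$ and $\Sigma_q u = \mu_q$. Hence $\mu_p-\mu_q = (\Sigma_p-\Sigma_q)u$, and any term $2c^\top(\mu_p-\mu_q)$ can be absorbed into trace terms via $c u^\top + u c^\top$. Conversely, any quadratic+linear potential $\varphi^\top M\varphi + 2c^\top\varphi$ equals the purely quadratic form $\varphi^\top(M + cu^\top + uc^\top)\varphi$ on $\X$.

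\textbf{Upper bound ($H\leqslant F$).} I would mirror the duality argument in the proof of Prop.~\ref{prop:G}. The dual of the problem defining $H$ is
\[
\inf_{\Lambda} \int_0^{+\infty} f(\lambda)\, u^\top d\Lambda(\lambda)\, u
\quad \mbox{such that} \quad
\Sigma_p=\int\lambda\, d\Lambda(\lambda), \ \ \Sigma_q=\int d\Lambda(\lambda),
\]
with $\Lambda$ a PSD-valued measure. Weak duality follows by integrating the constraint $\lambda M+N-f(\lambda)uu^\top\preccurlyeq 0$ against $d\Lambda$ and taking traces. As a feasible measure I would take $\Lambda=\sum_i \Sigma_q v_iv_i^\top\Sigma_q\,\delta_{\lambda_i}$, built from the generalized eigenvectors of $(\Sigma_p,\Sigma_q)$. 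Its dual value is
\[
\sum_i f(\lambda_i)(v_i^\top\Sigma_q u)^2 = u^\top\Sigma_q^{1/2} f(\Sigma_q^{-1/2}\Sigma_p\Sigma_q^{-1/2})\Sigma_q^{1/2}u,
\]
which is the stated expression for $F$. This requires checking that $F$ equals this formula. Starting from \eq{defFspec}, we have $v_i^\top(\mu_p-\mu_q) = v_i^\top(\Sigma_p-\Sigma_q)u = (\lambda_i-1)v_i^\top\Sigma_q u$. Therefore $\frac{f(\lambda_i)}{(\lambda_i-1)^2}(v_i^\top(\mu_p-\mu_q))^2 = f(\lambda_i)(v_i^\top\Sigma_qu)^2$, and the case $\lambda_i=1$ is consistent since both sides vanish when $f(1)=0$. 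Only $\Sigma_q$ invertible and the twice-differentiability needed for \eq{defFspec} are used here.

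\textbf{Lower bound ($H\geqslant F$ for operator-convex $f$).} The cleanest route is to transfer the primal feasible point of Prop.~\ref{prop:G}. Take its optimal $(M,N,c)$ and set $\tilde M = M + cu^\top + uc^\top$ and $\tilde N = N - cu^\top - uc^\top$. The objective is preserved:
\[
\tr[\tilde M\Sigma_p]+\tr[\tilde N\Sigma_q] = \tr[M\Sigma_p]+\tr[N\Sigma_q] + 2c^\top(\mu_p-\mu_q) = F.
\]
For feasibility, consider the congruence by $T=\bigl(\idm \;\; u\bigr)$, i.e.\ $T z = z + u\, z_{m+1}$. We have
\[
T\bimat{\lambda M+N}{(\lambda-1)c}{(\lambda-1)c^\top}{-f(\lambda)}T^\top
= \lambda M+N+(\lambda-1)(cu^\top+uc^\top)-f(\lambda)uu^\top
= \lambda\tilde M+\tilde N - f(\lambda)uu^\top .
\]
This is $\preccurlyeq 0$ because congruence preserves negative semidefiniteness. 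Feasibility transfers directly, so $H\geqslant G = F$ by Prop.~\ref{prop:G}.

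\textbf{Main obstacle.} The delicate step is the identification in the upper bound, namely that \eq{defFspec} collapses to $u^\top\Sigma_q^{1/2}f(\cdot)\Sigma_q^{1/2}u$ through $\mu_p-\mu_q=(\Sigma_p-\Sigma_q)u$. Once that identity holds, both directions are short. The congruence trick in the lower bound avoids re-deriving the operator-convex decomposition: it reuses the per-$\rho$ candidates $M_\rho,N_\rho,c_\rho$ from Prop.~\ref{prop:G}.
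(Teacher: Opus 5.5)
Your proof is correct. It follows the route the paper has in mind: the paper gives no separate proof and only says Prop.~\ref{prop:G} ``can be modified''. You do exactly that, with weak duality against $\Lambda=\sum_i \Sigma_q v_i v_i^\top \Sigma_q\,\delta_{\lambda_i}$ for $H\leqslant F$ (matched to \eq{defFspec} via $\mu_p-\mu_q=(\Sigma_p-\Sigma_q)u$), and the primal candidates of Prop.~\ref{prop:G} for $H\geqslant F$. Your congruence by $\bigl(\idm \;\; u\bigr)$ is a clean way to transfer those candidates, and it shows $H\geqslant G$ in general. Atoms of $\Lambda$ at $\lambda_i=0$, where the constraint is not imposed, cannot occur under the standing assumption $dp/dq\geqslant\alpha$, since then $\Sigma_p\succcurlyeq\alpha\Sigma_q$.
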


\EIT

\section{Proof of convergence rate for kernel methods}
\label{app:proofkernel}
Throughout this section, we will use matrix notation for simplicity for operators between separable Hilbert spaces (i.e., which have a countable basis), rather than tensor products. For $u,v \in \H$, $uv^\top$ is the operator from $\H$ to $\H$ that maps $w$ to $u ( v^\top w)$, where $v^\top w$ denotes the Hilbert space dot-product. This makes the notation simpler, in particular when we consider Cartesian products in \mysec{mutual}. Note that because of the ``kernel trick'', all computations will be performed on well-defined finite-dimensional spaces (spanned by the observed data).

\subsection{Impossibility of generalized eigenvalue decomposition}
\label{app:noeig}
We first note that the spectral formula cannot hold for infinite-dimensional spaces by replacing a sum with an infinite sum (which is usual in the analysis of kernel methods), as there is no discrete generalized spectrum in general (we still provide an integral with respect to $\rho$ in \eq{defF}, which is valid, and all algorithms end up operating in finite dimension, so that the spectral formula holds).

 In the simple case $\X=[0,1]$, with a translation-invariant kernel $k(x-y)$ where $k$ is $1$-periodic, we can choose $\varphi(x)_\omega = \hat{k}_\omega^{1/2} e^{2 i\pi \omega x}$ for $\omega \in \mathbb{Z}$, with $(\Sigma_p)_{\omega\omega'} = \hat{p}(\omega-\omega') \hat{k}_\omega^{1/2}\hat{k}_{\omega'}^{1/2}$ and $(\Sigma_q)_{\omega\omega'} = \hat{q}(\omega-\omega') \hat{k}_\omega^{1/2}\hat{k}_{\omega'}^{1/2}$, leading to, for \emph{any} $x \in [0,1]$, $\Sigma_p u(x) = [ p(x)/q(x)] \Sigma_q u(x)$ for $u(x)_\omega = \hat{k}_{\omega}^{-1/2} e^{2i\pi\omega x}$, and thus potentially with a continuous spectrum.

\subsection{Generic ``abstract'' result}
We start with a result that can be applied to other situations, with classical notions of covariance and integral operators. We consider a standard set of assumptions used in the analysis of kernel-based divergences and more generally supervised learning~\cite{cap2007optimal,harchaoui2008testing,ribero2026regularized,bach2022information,lin2020optimal}.
We make the following assumptions (denoted \textbf{H}):
\BIT
\item \textbf{Strictly positive and bounded relative density}: we assume that the relative density $dp/dq$ exists and is always in $[\alpha,1/\alpha]$. This will lead to terms in $1/\alpha$ that can be avoided if finer assumptions on $p$ and $q$ are used (see proof). We will consider the classical notions of degrees of freedom leading to the ``capacity condition'' (see below).

\item \textbf{``Source condition''}: we assume that for all $\rho \in [0,1]$,
$\big( \frac{dp}{dq}(x) - 1 \big) / \big( \rho \frac{dp}{dq}(x) + 1-\rho\big) = \theta(\rho)^\top 
 \Sigma_{\rho p + (1-\rho) q}^{r-1/2}  \varphi(x) $, where $\Sigma_{(\rho)}$ is the integral operator extension (that can be applied to all square-integrable functions) of the covariance operator~\cite{cap2007optimal,lin2020optimal}, and $r \in (0,1/2]$. \EIT

Throughout the proof, we will need several quantities that are classical in the analysis of kernel methods, the ``degrees of freedom'' and their ``maximal'' versions~\cite{bach2013sharp,alaoui2015fast}:
\BEAS
{\rm df}_p(\lambda) & = & \tr[ \Sigma_p (\Sigma_p + \lambda \idm)^{-1} ] \\
{\rm df}_p^{\max} (\lambda) & = & \sup_{x \in \X} \    \varphi(x)^\top (\Sigma_p + \lambda \idm)^{-1} \varphi(x)   \geqslant {\rm df}_p(\lambda) \\
{\rm df}_q(\lambda) & = & \tr[ \Sigma_q (\Sigma_q + \lambda \idm)^{-1} ] \\
{\rm df}_q^{\max} (\lambda) & = & \sup_{x \in \X}   \varphi(x)^\top (\Sigma_q + \lambda \idm)^{-1} \varphi(x)   \geqslant {\rm df}_q(\lambda) .
\EEAS
We also consider the maximal version over $p$ and $q$, and over cross-products, that is,
\BEAS
{\rm df}(\lambda) & = &  \max\{ {\rm df}_p(\lambda), {\rm df}_q(\lambda), \tr[ \Sigma_q (\Sigma_p + \lambda \idm)^{-1} ],\tr[ \Sigma_p (\Sigma_q + \lambda \idm)^{-1} ], \\
& & \hspace*{1cm}
\tr[ \Sigma_q (\Sigma_p + \lambda \idm)^{-1} \Sigma_q (\Sigma_p + \lambda \idm)^{-1} ],  
\tr[ \Sigma_p (\Sigma_q + \lambda \idm)^{-1} \Sigma_p (\Sigma_q + \lambda \idm)^{-1} ]\},
\EEAS
and ${\rm df}^{\max}(\lambda) = \max\{ {\rm df}_p^{\max}(\lambda), {\rm df}_q^{\max}(\lambda)\}$, and we consider $n = \min \{n_p,n_q\}$. See \mysec{tradeoffbound} for examples of scalings of ${\rm df}(\lambda)$ and ${\rm df}^{\max}(\lambda)$.  

We also assume that the feature map is uniformly bounded, and write
\[
    R_\varphi^2 =\sup_{x\in\mathcal X}\|\varphi(x)\|_{\mathcal H}^2<\infty .
\]

Given Assumption \textbf{H}, we have the following bound
\BEAS
{\rm df}(\lambda) & \leqslant &  \max\Big\{ {\rm df}_p(\lambda), {\rm df}_q(\lambda), \frac{1}{\alpha^2}  {\rm df}_p(\lambda/\alpha), \frac{1}{\alpha^2}  {\rm df}_q(\lambda/\alpha)\Big\}.
\EEAS

The main fully non-asymptotic technical result is the following.

\begin{proposition}
\label{prop:rate}
Assume \textbf{H}, and, for $\xi \geqslant 1$, assume $\lambda \leqslant \min\{ \| \Sigma_p\|_{\rm op}, \| \Sigma_q\|_{\rm op}\}$,  
$ \sqrt{\xi} \sqrt{{\rm df}^{\max}(\lambda) / n}  \sqrt{\log (2n)} \leqslant  \frac{3}{16}$. We have:
\BEA
\label{eq:bound}
 \Big(\E \big[  ( F_\lambda(\hat{p}\| \hat{q},\varphi)  - D(p\|q) )^2 \big] \Big)^{1/2} 
 & \leqslant  &   32 \frac{ {\rm df}(\lambda)}{n} +
    \frac{8R_\varphi^2}{\lambda}
  \frac{\sqrt{8}}{n^{2\xi}} + 
\lambda^{2r} \!\!\!\int_0^1  \!\!\| \theta_\rho \|^2 d\nu(\rho) \\
\notag & & \hspace*{.5cm} +16\sqrt{\xi}\sqrt{\frac{{\rm df}^{\max}(\lambda) }{n} }  \sqrt{\log (n)} \cdot  D(p\|q)
+  
8\sqrt{\frac{2D(p\|q)}{\alpha n}}
.
\EEA
\end{proposition}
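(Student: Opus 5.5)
We reduce to a single $\rho$ and integrate at the end. Fix $\rho\in[0,1]$ and write $\delta=\mu_p-\mu_q$, $\hat\delta=\hat\mu_p-\hat\mu_q$, $\Sigma_\rho=\rho\Sigma_p+(1-\rho)\Sigma_q$, with empirical counterpart $\hat\Sigma_\rho$. Define
\[
F^\rho=\tfrac12\delta^\top(\Sigma_\rho+\lambda\idm)^{-1}\delta,\qquad
\hat F^\rho=\tfrac12\hat\delta^\top(\hat\Sigma_\rho+\lambda\idm)^{-1}\hat\delta,\qquad
D^\rho=\tfrac12\delta^\top\Sigma_\rho^{-1}\delta,
\]
where $D^\rho$ equals the weighted chi-squared divergence by Prop.~\ref{prop:universality} and the source condition. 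The error splits as $(\hat F^\rho-F^\rho)+(F^\rho-D^\rho)$. Integrating against $\nu$ and using Minkowski's inequality in $L^2$ reduces the claim to bounding each piece uniformly in $\rho$. Throughout, we use $\int_0^1 D^\rho\,d\nu(\rho)=D(p\|q)$.

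\textbf{Bias (approximation) term.} The source condition gives $u_\rho=\theta_\rho^\top\Sigma_\rho^{r-1/2}\varphi$, where $u_\rho$ is the optimal function in \eq{DpqUU}. We can then write
\[
D^\rho-F^\rho=\tfrac{\lambda}{2}\,\delta^\top\Sigma_\rho^{-1}(\Sigma_\rho+\lambda\idm)^{-1}\delta .
\]
Expressing $\delta=\Sigma_\rho\Sigma_\rho^{r-1/2}\theta_\rho$ turns this into $\tfrac{\lambda}{2}\theta_\rho^\top\Sigma_\rho^{2r}(\Sigma_\rho+\lambda\idm)^{-1}\theta_\rho$, which is at most $\lambda^{2r}\|\theta_\rho\|^2$ since $r\le1/2$. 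This produces the term $\lambda^{2r}\int\|\theta_\rho\|^2d\nu$.

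\textbf{Variance term.} On a good event $\mathcal E$, a matrix Bernstein inequality [tropp2015introduction] applied to $(\Sigma_\rho+\lambda\idm)^{-1/2}(\hat\Sigma_\rho-\Sigma_\rho)(\Sigma_\rho+\lambda\idm)^{-1/2}$ gives
\[
\|\cdot\|_{\rm op}\le \eta:=4\sqrt{\xi\,{\rm df}^{\max}(\lambda)\log(2n)/n}\le 3/4,
\]
with failure probability at most $n^{-2\xi}$; the assumption $\lambda\le\|\Sigma\|_{\rm op}$ controls the dimension factor. We use $\rm df^{\max}$ of $p$ and $q$ separately, since $\Sigma_\rho+\lambda\idm\succcurlyeq\rho(\Sigma_p+\lambda\idm)$ and similarly for $q$. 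Off $\mathcal E$, we use the crude bound
\[
\hat F^\rho\le \|\hat\delta\|^2/(2\lambda)\le 2R_\varphi^2/\lambda,
\]
and Cauchy–Schwarz then yields the $\frac{8R_\varphi^2}{\lambda}\frac{\sqrt8}{n^{2\xi}}$ term. On $\mathcal E$, we split $\hat F^\rho-F^\rho$ into two parts:
\begin{itemize}
\item $\tfrac12\hat\delta^\top[(\hat\Sigma_\rho+\lambda\idm)^{-1}-(\Sigma_\rho+\lambda\idm)^{-1}]\hat\delta$, bounded via $(1-\eta)^{-1}\eta$ times $\tfrac12\hat\delta^\top(\Sigma_\rho+\lambda\idm)^{-1}\hat\delta$;
\item $\tfrac12[\hat\delta^\top A\hat\delta-\delta^\top A\delta]$ with $A=(\Sigma_\rho+\lambda\idm)^{-1}$, which expands as $(\hat\delta-\delta)^\top A\delta+\tfrac12(\hat\delta-\delta)^\top A(\hat\delta-\delta)$.
\end{itemize}
The cross term has variance at most $\frac1n\delta^\top A(\Sigma_p+\Sigma_q)A\delta$. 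Using $\Sigma_p,\Sigma_q\preccurlyeq\alpha^{-1}\Sigma_\rho$, this is bounded by $\frac{2}{\alpha n}F^\rho$, giving the $\sqrt{D/(\alpha n)}$ term. The quadratic term has expectation at most $\frac1n\tr[A(\Sigma_p+\Sigma_q)]$ and second moment controlled by cross-trace quantities, so it is bounded by ${\rm df}(\lambda)/n$ up to constants. Combining the first part with $F^\rho\le D^\rho$ gives the multiplicative term $\eta\, D(p\|q)$, plus another ${\rm df}/n$ contribution.

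\textbf{Main obstacle.} The hardest step is the perturbation part on $\mathcal E$. Specifically, we must get a clean relative bound $|\hat\delta^\top(\hat A-A)\hat\delta|\lesssim\eta\,\hat\delta^\top A\hat\delta$, which requires $\hat A^{-1/2}$-type sandwiching, and then control the second moment of $\hat\delta^\top A\hat\delta$ rather than only its mean. Fourth moments of $\hat\delta$ appear here. I would handle them with the explicit V-statistic expansion, bounding each resulting trace by ${\rm df}(\lambda)$ (this is why ${\rm df}$ includes the products $\tr[\Sigma_q(\Sigma_p+\lambda\idm)^{-1}\Sigma_q(\Sigma_p+\lambda\idm)^{-1}]$). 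Tracking the constants $32$, $16$, and $8$ should then be routine.
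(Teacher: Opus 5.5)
Your proposal is correct and follows the paper's proof essentially step by step. It uses the same bias computation from the source condition, the same relative perturbation bound $|\hat\delta^\top(\hat A-A)\hat\delta|\le \frac{\eta}{1-\eta}\,\hat\delta^\top A\hat\delta$ on a concentration event (the paper's $|\hat a_\lambda-\hat b_\lambda|\le 4t\,\hat b_\lambda$), and the same linear-plus-quadratic V-statistic expansion with $\Sigma_p,\Sigma_q\preccurlyeq\alpha^{-1}\Sigma_{(\rho)}$ and trace/leverage bounds.

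Two differences. First, the paper concentrates $\hat\Sigma_p$ and $\hat\Sigma_q$ separately, so a single event works for all $\rho$; your per-$\rho$ event also works, since you integrate $L^2$ bounds via Minkowski. Second, to get the $n^{-2\xi}$ term after Cauchy--Schwarz you need a failure probability of order $n^{-4\xi}$, not $n^{-2\xi}$ as you wrote, which would only give $n^{-\xi}$. The same Bernstein bound with your $\eta$ does deliver $n^{-4\xi}$, as in Lemma~\ref{lemma:conc}.
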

It has both multiplicative and additive terms.

Throughout the proof, we will use the notation $\Sigma_{(\rho)} = \rho \Sigma_p + (1-\rho) \Sigma_q$, as well
as its empirical counterpart $\hat{\Sigma}_{(\rho)} = \rho \hat{\Sigma}_p + (1-\rho) \hat{\Sigma}_q$. We will also use $M(\rho) = \Sigma_{(\rho)} +\lambda \idm$ and $\hat{M}(\rho) = \hat{\Sigma}_{(\rho)} +\lambda \idm$.

The proof is organized as follows: (1) we prove a variance bound in \mysec{varbound},  (2) prove a bias bound based on assumptions on $p$ and $q$ in \mysec{biasbound}, and (3) combine them together and study trade-offs in \mysec{tradeoffbound}.

\subsection{Variance bound for $F_\lambda(\hat{p}\| \hat{q},\varphi)$ }
\label{sec:varbound}
We consider $x_1,\dots,x_{n_p} $ i.i.d. from $p$, and $y_1,\dots,y_{n_q}$ i.i.d. from $q$, and the quantities
\BEAS
\hat{a}_\lambda & = & \int_0^1
( \hat\mu_p - \hat\mu_q)^\top (    \hat{ {\Sigma}}_{(\rho)}  +\lambda \idm) ^{-1} ( \hat\mu_p - \hat\mu_q)   d\nu(\rho)
\\
 \hat{b}_\lambda  & = & \int_0^1
( \hat\mu_p - \hat\mu_q)^\top  (   {\Sigma}_{(\rho)} +\lambda \idm ) ^{-1} ( \hat\mu_p - \hat\mu_q)  d\nu(\rho)
\\
b_\lambda & = & \int_0^1 
(  \mu_p -  \mu_q)^\top  (  {\Sigma}_{(\rho)} +\lambda \idm) ^{-1} (  \mu_p -  \mu_q)   
d\nu(\rho).
\EEAS
The quantity $\hat{a}_\lambda$ is (twice, since the factor $1/2$ has been removed throughout App.~\ref{app:proofkernel}) our estimate, while $\hat{b}_\lambda$ is the random quantity with $\hat{ {\Sigma}}_{(\rho)}$ replaced by ${ {\Sigma}}_{(\rho)}$, and $b_\lambda$ is the same quantity with all empirical estimates replaced by their population quantities.

We start with bounding $|\hat{a}_\lambda - \hat{b}_\lambda|$ and then bound $|\hat{b}_\lambda - b_\lambda|$.

\textbf{Bounding $|\hat{a}_\lambda - \hat{b}_\lambda|$.} We follow the standard analysis of kernel methods (see, e.g.,~\cite{lin2020optimal,bach2024learning}).
For $t \in [0,\frac{3}{4}]$, we consider the event $\mathcal{A}$, where for both distributions:
\[
\big\| ( \Sigma_p + \lambda \idm)^{-1/2} ( \Sigma_p - \hat{\Sigma}_p) 
( \Sigma_p + \lambda \idm)^{-1/2} \big\|_{\rm op} , 
\big\| ( \Sigma_q + \lambda \idm)^{-1/2} ( \Sigma_q - \hat{\Sigma}_q) 
( \Sigma_q + \lambda \idm)^{-1/2} \big\|_{\rm op} \leqslant t.
\]
The following lemma shows that this is achieved with high probability.

\begin{lemma}
\label{lemma:conc}
Assume $\lambda \leqslant \min\{ \| \Sigma_p\|_{\rm op}, \| \Sigma_q\|_{\rm op}\}$, and 
$ \sqrt{\xi}\sqrt{{\rm df}^{\max}(\lambda) / n}  \sqrt{\log (2n)} \leqslant  \frac{3}{16}$. Then, with $t = 4\sqrt{\xi} \sqrt{{\rm df}^{\max}(\lambda) / n}  \sqrt{\log (2n)} \in (0,3/4]$, we have
$\P(\mathcal{A}^c)  \leqslant \frac{8}{n^{4\xi}}$.
\end{lemma}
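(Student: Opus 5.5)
The plan is to apply a matrix Bernstein inequality for sums of independent self-adjoint operators (in the intrinsic-dimension form of \cite{tropp2015introduction}), separately to $p$ and to $q$, and then take a union bound. The argument for $p$ is given below; the one for $q$ is identical.

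\textbf{Step 1: reduce to a sum of centered operators.} Write
\[
( \Sigma_p + \lambda \idm)^{-1/2} ( \hat{\Sigma}_p - \Sigma_p) ( \Sigma_p + \lambda \idm)^{-1/2} = \frac{1}{n_p}\sum_{i=1}^{n_p} (Z_i - \E Z_i),
\]
with $Z_i = ( \Sigma_p + \lambda \idm)^{-1/2}\varphi(x_i)\varphi(x_i)^\top ( \Sigma_p + \lambda \idm)^{-1/2}$. Each $Z_i$ is rank one and positive semidefinite, and its norm is $\varphi(x_i)^\top(\Sigma_p+\lambda\idm)^{-1}\varphi(x_i) \leqslant {\rm df}_p^{\max}(\lambda)$. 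This gives an almost-sure bound of $2\,{\rm df}^{\max}(\lambda)$ on the centered summands, with one factor ${\rm df}^{\max}(\lambda)$ coming from $Z_i$ and another from $\E Z_i$.

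\textbf{Step 2: bound the variance proxy.} We have $\E (Z_i-\E Z_i)^2 \preccurlyeq \E Z_i^2 \preccurlyeq {\rm df}_p^{\max}(\lambda)\, \E Z_i$. Moreover $\E Z_i = \Sigma_p(\Sigma_p+\lambda\idm)^{-1}$, whose operator norm is at most $1$ and whose trace is ${\rm df}_p(\lambda)$. The intrinsic-dimension matrix Bernstein bound then gives
\[
\P\big(\|\cdot\|_{\rm op}\geqslant t\big) \leqslant 4\,\frac{{\rm df}_p(\lambda)}{\|\E Z\|_{\rm op}}\exp\Big(-\frac{n_p t^2/2}{{\rm df}^{\max}(\lambda)(1+2t/3)}\Big).
\]
The assumption $\lambda\leqslant\|\Sigma_p\|_{\rm op}$ ensures $\|\E Z\|_{\rm op} = \|\Sigma_p\|/(\|\Sigma_p\|+\lambda)\geqslant 1/2$. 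Together with ${\rm df}_p(\lambda)\leqslant {\rm df}^{\max}(\lambda)$, this bounds the prefactor by $8\,{\rm df}^{\max}(\lambda)$.

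\textbf{Step 3: plug in $t$ and absorb the prefactor.} With $t\leqslant 3/4$ we get $1+2t/3\leqslant 3/2$, so the exponent is at least $n t^2/(3\,{\rm df}^{\max}(\lambda))$, using $n_p \geqslant n$. Substituting $t^2 = 16\,\xi\,{\rm df}^{\max}(\lambda)\log(2n)/n$ makes the exponent at least $(16/3)\,\xi\log(2n)$. The failure probability for $p$ is therefore at most
\[
8\,{\rm df}^{\max}(\lambda)\,(2n)^{-16\xi/3}.
\]
The constraint $t\leqslant 3/4$ is exactly the stated assumption $\sqrt{\xi}\sqrt{{\rm df}^{\max}(\lambda)/n}\sqrt{\log(2n)}\leqslant 3/16$. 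That same assumption also gives ${\rm df}^{\max}(\lambda)\leqslant n$ (since $\log(2n)\geqslant\log 2$ and $\xi\geqslant 1$). The prefactor ${\rm df}^{\max}(\lambda)$ is then absorbed by the surplus power: $n\,(2n)^{-16\xi/3}\leqslant \tfrac12 (2n)^{-4\xi}$ because $16\xi/3 - 4\xi \geqslant 4/3 > 1$. This yields at most $4/n^{4\xi}$ per distribution, and the union bound over $p$ and $q$ gives $\P(\mathcal{A}^c)\leqslant 8/n^{4\xi}$.

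\textbf{Main obstacle.} The delicate part is the constant bookkeeping in Step 3: making the intrinsic-dimension prefactor and the Bernstein denominator fit the stated $8/n^{4\xi}$. If the version of Bernstein used carries a slightly worse prefactor, the fallback is to use the slack between $16\xi/3$ and $4\xi$ in the exponent to absorb it.
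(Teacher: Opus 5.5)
Your proposal is correct and follows essentially the paper's route: intrinsic-dimension matrix Bernstein applied to the whitened rank-one summands, with the prefactor ${\rm df}_p(\lambda)/\|\E Z\|_{\rm op}$ controlled via $\lambda\leqslant\|\Sigma_p\|_{\rm op}$. The ${\rm df}^{\max}(\lambda)$ factor is then absorbed by the surplus in the exponent, followed by a union bound over $p$ and $q$. The differences are only in the constants. The paper uses the sharper range bound ${\rm df}^{\max}(\lambda)/n_p$, which gives exponent $\frac{32}{5}\xi\log(2n)$ and the bound ${\rm df}^{\max}(\lambda)\leqslant n/4$. It also uses the two-sided prefactor $8\,{\rm intdim}$, whereas your $4\,{\rm intdim}$ is the one-sided constant. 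The slack you identified absorbs that factor of two with room to spare.
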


\begin{proof}
We prove the bound for \(p\); the proof for \(q\) is identical. Let
\[
    \mathcal{A}
    =
    \left\{
    \left\|
    (\Sigma_p+\lambda I)^{-1/2}
    (\Sigma_p-\hat\Sigma_p)
    (\Sigma_p+\lambda I)^{-1/2}
    \right\|_{\rm op}
    \le t
    \right\}.
\]
By the intrinsic-dimension matrix Bernstein bound \cite[Theorem 7.7.1]{tropp2015introduction} applied to
\[
    (\Sigma_p+\lambda I)^{-1/2}
    \bigl(\varphi(x)\varphi(x)^\top-\Sigma_p\bigr)
    (\Sigma_p+\lambda I)^{-1/2},
\]
provided
\[
    t
    \ge
    \sqrt{\frac{\operatorname{df}^{\max}_p(\lambda)}{n_p}}
    +
    \frac{\operatorname{df}^{\max}_p(\lambda)}{3n_p},
\]
we have
\[
\mathbb P( \mathcal{A}^c)
\leqslant
8
\frac{\operatorname{df}_p(\lambda)}
     {\|\Sigma_p\|_{\rm op}/(\|\Sigma_p\|_{\rm op}+\lambda)}
\exp\left(
    -
    \frac{n_p}{2\operatorname{df}^{\max}_p(\lambda)}
    \frac{t^2}{1+t/3}
\right).
\]
Since \(\lambda\leqslant \|\Sigma_p\|_{\rm op}\),
\[
    \frac{\|\Sigma_p\|_{\rm op}}
         {\|\Sigma_p\|_{\rm op}+\lambda}
    \geqslant \frac12,
\]
and since
\[
    \operatorname{df}_p(\lambda)\leqslant \operatorname{df}^{\max}_p(\lambda),
\]
we get
\[
\mathbb P( \mathcal{A}^c)
\leqslant
16\operatorname{df}^{\max}_p(\lambda)
\exp\left(
    -
    \frac{n_p}{2\operatorname{df}^{\max}_p(\lambda)}
    \frac{t^2}{1+t/3}
\right).
\]
The assumption of the lemma gives \(t\leqslant 3/4\) and
\[
    \frac{\operatorname{df}^{\max}(\lambda)}{n}
    \leqslant
    \frac{9}{256\,\xi\log(2n)}
    \leqslant
    \frac14 .
\]
Thus
\[
    \frac{\operatorname{df}^{\max}_p(\lambda)}{n_p}
    \le \frac14,
\]
and hence
\[
    \sqrt{\frac{\operatorname{df}^{\max}_p(\lambda)}{n_p}}
    +
    \frac{\operatorname{df}^{\max}_p(\lambda)}{3n_p}
    \leqslant
    \frac76
    \sqrt{\frac{\operatorname{df}^{\max}_p(\lambda)}{n_p}}
    \leqslant t,
\]
so the preceding Bernstein bound applies.

Since \(t\leqslant 3/4\),
\[
    \frac{1}{2(1+t/3)}\geqslant \frac25.
\]
Therefore
\[
\mathbb P( \mathcal{A}^c)
\leqslant
16\operatorname{df}^{\max}_p(\lambda)
\exp\left(
    -
    \frac{2n_p t^2}{5\operatorname{df}^{\max}_p(\lambda)}
\right).
\]
The same bound holds for \(q\). Taking a union bound and using
\(n=\min\{n_p,n_q\}\) gives
\[
\mathbb P( \mathcal{A}^c)
\leqslant
32\operatorname{df}^{\max}(\lambda)
\exp\left(
    -
    \frac{2n t^2}{5\operatorname{df}^{\max}(\lambda)}
\right).
\]
Substituting
\[
    t^2
    =
    16\xi
    \frac{\operatorname{df}^{\max}(\lambda)}{n}
    \log(2n)
\]
yields
\[
\mathbb P( \mathcal{A}^c)
\leqslant
32\operatorname{df}^{\max}(\lambda)
\exp\left(
    -\frac{32}{5}\xi\log(2n)
\right).
\]
Finally, the assumption above implies
\(\operatorname{df}^{\max}(\lambda)\le n/4\), and hence
\[
\mathbb P( \mathcal{A}^c)
\leqslant
8n(2n)^{-32\xi/5}
\leqslant
\frac{8}{n^{4\xi}},
\]
for \(\xi\geqslant 1\). This proves the result.
\end{proof}

Using Lemma~\ref{lemma:conc} above, under the event $\mathcal{A}$, we have $(1-t) (\Sigma_{(\rho)} + \lambda \idm) \preccurlyeq
(\hat{\Sigma}_{(\rho)} + \lambda \idm) \preccurlyeq (1+t) (\Sigma_{(\rho)} + \lambda \idm)$.
This implies 
\[
- \frac{t}{1+t}
 (\Sigma_{(\rho)} + \lambda \idm)^{-1} \preccurlyeq (\hat{\Sigma}_{(\rho)} + \lambda \idm)^{-1} - (\Sigma_{(\rho)} + \lambda \idm)^{-1} \preccurlyeq \frac{t}{1-t}  (\Sigma_{(\rho)} + \lambda \idm)^{-1}.
\]
This leads to, after integrating with respect to $\rho \in (0,1)$, since $t \leqslant 3/4$,
\[
| \hat{a}_\lambda  - \hat{b}_\lambda  |  \leqslant \max \Big\{ \frac{t}{1+t},\frac{t}{1-t} \Big\}\cdot \hat{b}_\lambda \leqslant 4 t \cdot \hat{b}_\lambda.
\]

This leads to, on the event \(\mathcal{A}\),
\[
    |\hat a_\lambda-\hat b_\lambda|
    \leqslant 4t\,\hat b_\lambda .
\]
On \(\mathcal{A}^c\), we use a deterministic ridge bound. Since
\(\hat M(\rho)\succcurlyeq \lambda I\), and since the kernel is bounded,
\[
    \hat a_\lambda
    \le
    \int_0^1
    \frac{\|\hat\mu_p-\hat\mu_q\|^2}{\lambda}
    \,d\nu(\rho)
    \leqslant
    \frac{4R_\varphi^2}{\lambda},
    \qquad
   R_\varphi^2 = \sup_{x\in\mathcal X}\|\varphi(x)\|^2,
\]
with a similar bound for $\hat{b}_\lambda$. 
Hence
\[
\begin{aligned}
\Big(\mathbb E|\hat a_\lambda-\hat b_\lambda|^2\Big)^{1/2}
&\le
4t\Big(\mathbb E|\hat b_\lambda|^2\Big)^{1/2}
+
    \frac{8R_\varphi^2}{\lambda}
\mathbb P(\mathcal{A}^c)^{1/2}
\\
&\le
4t\Big(\mathbb E|\hat b_\lambda-b_\lambda|^2\Big)^{1/2}
+
4tb_\lambda
+
    \frac{8R_\varphi^2}{\lambda}
  \frac{\sqrt{8}}{n^{2\xi}}.
\end{aligned}
\]

We thus only need to understand the behavior of $\hat{b}_\lambda$, which is much easier, as this is the squared norm of a weighted sum of independent random variables. Since we consider only convergence in mean, we only need to compute and bound moments.

\textbf{Studying $| \hat{b}_\lambda - b_\lambda|$ using V-statistics moment computations.}
 We consider a fixed $\rho$ and study 
 \[
 t_\rho = ( \hat{\mu}_p- \hat{\mu}_q)^\top M(\rho)^{-1}( \hat{\mu}_p- \hat{\mu}_q )   
 = \Big\|
 \sum_{i=1}^{n_p + n_q} w_i u_i
 \Big\|^2 ,
 \]
 with $w = { 1_{n_p} / n_p \choose -1_{n_q} / n_q}$, and $u_i = M(\rho)^{-1/2}\varphi(x_i) \in \H$ for $i \in \{1,\dots,n_p\}$ 
and $u_i = M(\rho)^{-1/2}\varphi(y_{i-n_p})\in \H$ for $i \in \{n_p+1,\dots,n_p+n_q\}$,
with $\mu_i = \E [u_i]$ and $C_i = \E[ u_i  u_i^\top] - \mu_i  \mu_i^\top$.

Define
\[
m_\rho=\sum_i w_i\mu_i,
\qquad
S_\rho=\sum_i w_i\bar u_i,
\qquad
\Gamma_\rho=\sum_i w_i^2 C_i .
\]
Then
$
u_\rho
=
(\mu_p-\mu_q)^\top M(\rho)^{-1}(\mu_p-\mu_q)
=
\|m_\rho\|^2,
$
and
\[
t_\rho-u_\rho
=
\|m_\rho+S_\rho\|^2-\|m_\rho\|^2
=
2m_\rho^\top S_\rho+\|S_\rho\|^2 .
\]
Therefore, by Minkowski's inequality,
\[
\left(
\mathbb E[(t_\rho-u_\rho)^2]
\right)^{1/2}
\le
2\left(m_\rho^\top\Gamma_\rho m_\rho\right)^{1/2}
+
\left(
\mathbb E\|S_\rho\|^4
\right)^{1/2}.
\]

We first bound the linear term. We have
\[
\Gamma_\rho
\preccurlyeq
M(\rho)^{-1/2}
\left(
\frac{1}{n_p}\Sigma_p+\frac{1}{n_q}\Sigma_q
\right)
M(\rho)^{-1/2},
\]
and Assumption \textbf{H} implies
\[
\Sigma(\rho)\succcurlyeq \alpha\Sigma_p,
\qquad
\Sigma(\rho)\succcurlyeq \alpha\Sigma_q,
\]
so that
\[
M(\rho)^{-1/2}\Sigma_pM(\rho)^{-1/2}\preccurlyeq \alpha^{-1}I,
\qquad
M(\rho)^{-1/2}\Sigma_qM(\rho)^{-1/2}\preccurlyeq \alpha^{-1}I.
\]
Thus, using $n=\min\{n_p,n_q\}$,
\[
\Gamma_\rho
\preccurlyeq
\frac{1}{\alpha}
\left(
\frac{1}{n_p}+\frac{1}{n_q}
\right)I
\preccurlyeq
\frac{2}{\alpha n}I.
\]
Hence
\[
2\left(m_\rho^\top\Gamma_\rho m_\rho\right)^{1/2}
\le
2\sqrt{\frac{2u_\rho}{\alpha n}} .
\]

We now bound the fourth moment of the centered quadratic term. Since the
$\bar u_i$'s are independent and centered,
\[
\begin{aligned}
\mathbb E\|S_\rho\|^4
&=
\sum_i w_i^4\mathbb E\|\bar u_i\|^4
+
2\sum_{i<j}w_i^2w_j^2
\operatorname{tr}(C_i)\operatorname{tr}(C_j)  +
4\sum_{i<j}w_i^2w_j^2
\operatorname{tr}(C_iC_j)  \\
&\leqslant
\sum_i w_i^4\mathbb E\|\bar u_i\|^4
+
(\operatorname{tr}\Gamma_\rho)^2
+
2\operatorname{tr}(\Gamma_\rho^2) \leqslant
\sum_i w_i^4\mathbb E\|\bar u_i\|^4
+
3(\operatorname{tr}\Gamma_\rho)^2 .
\end{aligned}
\]
Moreover,
\[
\operatorname{tr}\Gamma_\rho
\leqslant
\operatorname{tr}\left[
\left(
\frac{1}{n_p}\Sigma_p+\frac{1}{n_q}\Sigma_q
\right)M(\rho)^{-1}
\right]
\leqslant
\frac{2\operatorname{df}(\lambda)}{n},
\]
where we used the convexity bound
\[
M(\rho)^{-1}
\preccurlyeq
\rho(\Sigma_p+\lambda I)^{-1}
+
(1-\rho)(\Sigma_q+\lambda I)^{-1}
\]
and the definition of $\operatorname{df}(\lambda)$.

Using also the leverage bound
$
\|u_i\|^2\leqslant \operatorname{df}^{\max}(\lambda),
$
we have
$
\|\bar u_i\|^2
\leqslant
4\operatorname{df}^{\max}(\lambda),
$
and therefore
\[
\mathbb E\|\bar u_i\|^4
\leqslant
4\operatorname{df}^{\max}(\lambda)\operatorname{tr}(C_i).
\]
Consequently,
\[
\begin{aligned}
\sum_i w_i^4\mathbb E\|\bar u_i\|^4
&\leqslant
4\operatorname{df}^{\max}(\lambda)
\sum_i w_i^4\operatorname{tr}(C_i)  \leqslant
\frac{4\operatorname{df}^{\max}(\lambda)}{n^2}
\sum_i w_i^2\operatorname{tr}(C_i)  =
\frac{4\operatorname{df}^{\max}(\lambda)}{n^2}
\operatorname{tr}\Gamma_\rho \\
&\leqslant
8
\frac{\operatorname{df}^{\max}(\lambda)\operatorname{df}(\lambda)}{n^3}.
\end{aligned}
\]
Since $\operatorname{df}^{\max}(\lambda)\leqslant n/4$ and
$\operatorname{df}(\lambda)\geqslant 1/2$, this gives
\[
\sum_i w_i^4\mathbb E\|\bar u_i\|^4
\leqslant
4\frac{\operatorname{df}(\lambda)^2}{n^2}.
\]
Combining the last bounds,
\[
\mathbb E\|S_\rho\|^4
\leqslant
16\frac{\operatorname{df}(\lambda)^2}{n^2}.
\]
Thus, for every fixed $\rho$,
\[
\left(
\mathbb E[(t_\rho-u_\rho)^2]
\right)^{1/2}
\leqslant
4\frac{\operatorname{df}(\lambda)}{n}
+
2\sqrt{\frac{2u_\rho}{\alpha n}} .
\]

Since
\[
\hat b_\lambda-b_\lambda
=
\int_0^1(t_\rho-u_\rho)\,d\nu(\rho),
\qquad
b_\lambda=\int_0^1u_\rho\,d\nu(\rho),
\]
Minkowski's inequality and Jensen's inequality yield
\[
\begin{aligned}
\left(
\mathbb E[|\hat b_\lambda-b_\lambda|^2]
\right)^{1/2}
&\leqslant
\int_0^1
\left(
\mathbb E[(t_\rho-u_\rho)^2]
\right)^{1/2}
d\nu(\rho) \\
&\leqslant
4\frac{\operatorname{df}(\lambda)}{n}
+
2\sqrt{\frac{2}{\alpha n}}
\int_0^1\sqrt{u_\rho}\,d\nu(\rho) \leqslant
4\frac{\operatorname{df}(\lambda)}{n}
+
2\sqrt{\frac{2b_\lambda}{\alpha n}} .
\end{aligned}
\]

Combining this bound with the previous bound on
\(|\hat b_\lambda-b_\lambda|\), and using \(t\leqslant 3/4\), gives
\[
\begin{aligned}
\Big(\mathbb E[|\hat a_\lambda-b_\lambda|^2]\Big)^{1/2}
&\leqslant
(1+4t)
\Big(\mathbb E[|\hat b_\lambda-b_\lambda|^2]\Big)^{1/2}
+
4tb_\lambda
+
\frac{8\sqrt 8\,R_\varphi^2}{\lambda n^{2\xi}}
\\
&\leqslant
4\left(
    4\frac{\operatorname{df}(\lambda)}{n}
    +
    2\sqrt{\frac{2b_\lambda}{\alpha n}}
\right)
+
4tb_\lambda
+
\frac{8\sqrt 8\,R_\varphi^2}{\lambda n^{2\xi}} .
\end{aligned}
\]

With
\[
t=4\sqrt{\xi}\sqrt{\frac{\operatorname{df}^{\max}(\lambda)}{n}\log(2n)}
\]
and $\operatorname{df}(\lambda)\geqslant 1/2$, we finally obtain
\[
 {
\left(
\mathbb E[|\hat a_\lambda-b_\lambda|^2]
\right)^{1/2}
\leqslant
32\frac{\operatorname{df}(\lambda)}{n}
+
8\sqrt{\frac{2b_\lambda}{\alpha n}}
+
16\sqrt{\xi}b_\lambda
\sqrt{
\frac{\operatorname{df}^{\max}(\lambda)}{n}\log(2n)
} +
    \frac{4R_\varphi^2}{\lambda}
      \frac{8\sqrt{8}}{n^{2\xi}},
}
\]
which leads to the desired result.

\subsection{Bias bound for $F_\lambda(\hat{p}\| \hat{q},\varphi)$ }
\label{sec:biasbound}

From the source condition, we have
\BEAS
\mu_p -\mu_q & = & 
\int_\X \varphi(x) ( dp(x) - dq(x))
=\int_\X \varphi(x) \varphi(x)^\top \Sigma_{(\rho)}^{r-1/2}   \theta_\rho \big[ \rho {dp} (x) + (1-\rho) dq(x) \big] \\
& = & 
  \Sigma_{(\rho)} ^{r+1/2} \theta_\rho.
\EEAS
This leads to, since $r \in (0,1/2]$,
\BEAS
\big|  b_\lambda - 2 D(p\|q) \big| & \leqslant &\int_0^1 \big|
(  \mu_p -  \mu_q)^\top  \big[(     {\Sigma}_{(\rho)}  ) ^{-1}
-  (  {\Sigma}_{(\rho)}  +\lambda \idm) ^{-1} \big]  (  \mu_p -  \mu_q)   \big| d\nu(\rho) \\
& = & \int_0^1 \big|
\lambda (  \mu_p -  \mu_q)^\top   (  {\Sigma}_{(\rho)} +\lambda \idm) ^{-1}
   (  {\Sigma}_{(\rho)}   ) ^{-1}    (  \mu_p -  \mu_q)   \big| d\nu(\rho)  \\
& = & \int_0^1  
\lambda \theta_\rho ^\top   (  {\Sigma}_{(\rho)} +\lambda \idm) ^{-1}
   (  {\Sigma}_{(\rho)}   ) ^{-1}    \Sigma_{(\rho)} ^{2r+1}\theta_\rho  d\nu(\rho)  \\
& = & \int_0^1  
\lambda \theta_\rho ^\top   (  {\Sigma}_{(\rho)} +\lambda \idm) ^{-1+2r}
  (  {\Sigma}_{(\rho)} +\lambda \idm) ^{-2r}
   \Sigma_{(\rho)} ^{2r}\theta_\rho  d\nu(\rho)  \\
    & \leqslant & 
     \lambda^{2r} \int_0^1  \| \theta_\rho \|^2 d\nu(\rho). \EEAS 
Note that we see a saturation effect at $r=1/2$ instead of at $r=1$ for regular supervised learning~\cite{neubauer1997converse}. 

\subsection{Bias-variance trade-off for $F_\lambda(\hat{p}\| \hat{q},\varphi)$}
\label{sec:tradeoffbound}
Starting with the bound in \eq{bound}, we can now specify the various terms for the simple setup described in \mysec{infdim}, using the tools from \cite{bach2017breaking}.

In \cite{bach2017breaking}, the problem on the Euclidean ball of dimension $d$ with a kernel corresponding to random features with the $\kappa$-th power of the ReLU activation and both a weight vector and a constant term is transformed into a problem on the unit sphere embedded in $\rb^{d+1}$ without any intercept term, with a direct link with spherical harmonics.

Given the invariance of $D(p\|q)$ under changes of variables, we can place ourselves in this setup, and  both the smoothness and the bounds  (in $\alpha$) transfer. In order to have the same degrees of freedom (up to constants) as the uniform distribution on the sphere, we simply need that both densities are bounded and there is an open set where these densities are strictly positive.\footnote{The sphere can be covered by rotated copies of this open set, with a finite overlap, and by invariance under rotation, all the maximal degrees of freedom for the uniform distribution on these rotated copies are the same, and by convexity, we overall get a (large) constant times the maximal degrees of freedom for the uniform distribution on the sphere, which is by symmetry exactly equal to the usual degrees of freedom.}

The eigenvalues of the covariance operator for the uniform density on the sphere in dimension $d+1$ are proportional to $O( 1/ i^{1+(2\kappa+1)/d})$, which is the usual scaling for the Sobolev space of order $s = \kappa+\frac{d+1}{2}$~\cite{fischer2020sobolev}, which this kernel generates. This implies that all degrees of freedom are $O(\lambda^{-d/(2s)})$.

Given the regularity assumption, the source condition is satisfied with $r = t/(2s)$, with the condition that $t \leqslant s$ (to preserve $r \leqslant 1/2$). Since we have assumed a finite $L_2$-norm on the $t$-th order derivatives and densities that are finite and bounded away from zero, there is a uniform bound on all $\|\theta_\rho \|^2$, for $\rho \in [0,1]$, and thus on  $\int_0^1  \!\| \theta_\rho \|^2 d\nu(\rho)$.  

 Thus, we have a bound on the error, up to constants:
 \[
 \frac{1}{n} \lambda^{-d/(2s)}+ 
\lambda^{t/s}  + \frac{1}{\lambda n^{2\xi}} +  \sqrt{ \frac{\log n}{n} \lambda^{-d/(2s)}}\cdot  D(p\|q) + \sqrt{ \frac{D(p\|q)}{n}} .\]

If we optimize $\lambda$ for the additive term, we get, if $\xi$ is taken large enough (i.e., $\xi>\frac{s+t}{2t+d}$),
\[
\lambda \propto \frac{1}{n^{\frac{s}{t+d/2}}},
\]
for a rate bounded by
\[
\frac{1}{n^{\frac{t}{t+d/2}}} + \sqrt{\frac{\log n}{n^{\frac{t}{t+d/2}}}}  D(p\|q) + \sqrt{ \frac{D(p\|q)}{n}} .
\]
Note that if we want to optimize $\lambda$ for the first multiplicative term, we obtain an overall bound in
\[
\big( \frac{\log n}{n} \big)^{\frac{t}{2t+d/2}},
\]
which is, up to a logarithmic term, equal to the one from~\cite{bach2022information} (which only applies to $t=1$). When optimized for the additive term (e.g., when $D(p\|q)$ is small), this is an improvement.

\label{app:scaling}

\subsection{Bias removal}
\label{app:bias}
\label{app:debiased}
The dominant term in the variance bound at the end of \mysec{varbound} comes from the square of  $   \sum_i w_i^2 \tr(C_i)
= \frac{1}{n_p} \tr M(\rho)^{-1} ( \Sigma_p - \mu_p  \mu_p^\top )
+
\frac{1}{n_q} \tr M(\rho)^{-1}  ( \Sigma_q - \mu_q   \mu_q^\top )
  $; it can be removed by subtracting an unbiased estimate from $t_\rho$ (with the proper renormalization by $n_p-1$ and $n_q-1$ instead of $n_p$ and $n_q$), a standard technique in V-statistics~\cite{lee2019u}, as suggested by~\cite{laurent1996efficient} in a similar context as ours:
 \BEQ
 \label{eq:biascorr}
  \frac{1}{n_p-1} \tr M(\rho)^{-1} ( \hat{\Sigma}_p -  \hat{\mu}_p    \hat{\mu}_p^\top )
+
\frac{1}{n_q-1} \tr M(\rho)^{-1}  (  \hat{\Sigma}_q -  \hat{\mu}_q    \hat{\mu}_q^\top ).
\EEQ

For a single $\rho$, this corresponds exactly to changing the value of our current estimate
\[
( \hat{\mu}_p - \hat{\mu}_q)^\top ( \hat{\Sigma}_{(\rho)} + \lambda \idm)^{-1} ( \hat{\mu}_p - \hat{\mu}_q )  
= \sum_{i,j=1}^{n_p+n_q}
w_i w_j  \hat{u}_i^\top \hat{u}_j ,
\]
where $w = { 1_{n_p} / n_p \choose -1_{n_q} / n_q} \in \rb^{n_p+n_q}$, and $\hat{u}_i = \hat{M}(\rho)^{-1/2} \varphi(x_i) = \hat{M}(\rho)^{-1/2} \varphi(z_i) \in \H$ for $i \in \{1,\dots,n_p\}$ 
and $\hat{u}_i = \hat{M}(\rho)^{-1/2} \varphi(y_{i-n_p})= \hat{M}(\rho)^{-1/2} \varphi(z_i)\in \H$ for $i \in \{n_p+1,\dots,n_p+n_q\}$.

The subtraction of the term in \eq{biascorr} is equivalent to replacing $w_i w_j$ by $W_{ij}$,
with the matrix $W$ defined as:
\BEAS
\!W & \!\!\!= \!\!\!&  \bimat{\frac{1}{n_p(n_p-1)} ( 1_{n_p} 1_{n_p}^\top -  \idm)  }{-\frac{1}{n_p n_q} 1_{n_p}1_{n_q}^\top }{-\frac{1}{n_p n_q} 1_{n_q}1_{n_p}^\top }{\frac{1}{n_q(n_q-1)} ( 1_{n_q} 1_{n_q}^\top -  \idm) }
\\ 
& \!\!\!= \!\!\!& 
\bimat{\frac{1}{n_p^2}   1_{n_p} 1_{n_p}^\top  }{-\frac{1}{n_p n_q} 1_{n_p}1_{n_q}^\top }{-\frac{1}{n_p n_q} 1_{n_q}1_{n_p}^\top }{\frac{1}{n_q^2}   1_{n_q} 1_{n_q}^\top  }
- \bimat{\!\frac{1}{n_p(n_p-1)} ( \idm - \frac{1}{n_p} 1_{n_p}1_{n_p}^\top) \!}{0}{0}{\!\frac{1}{n_q(n_q-1)} ( \idm - \frac{1}{n_q} 1_{n_q}1_{n_q}^\top) \!},
\EEAS
which has \emph{zero diagonal} and satisfies $W_{ij}^2 \leqslant \big(\frac{n}{n-1}\big)^2 w_i^2 w_j^2$.
The estimate is then
\[
\hat{c}_\lambda^{(\rho)} = \sum_{i,j=1}^{n_p+n_q}
W_{ij}   \hat{u}_i^\top \hat{u}_j 
= \sum_{i,j=1}^{n_p+n_q}
W_{ij}   \varphi(z_i)^\top \hat{M}(\rho) ^{-1}  \varphi(z_j) .
\]
As in \mysec{varbound}, we first show that we can replace it with
\[
\hat{d}_\lambda^{(\rho)} = \sum_{i,j=1}^{n_p+n_q}
W_{ij}    {u}_i^\top {u}_j 
= \sum_{i,j=1}^{n_p+n_q}
W_{ij}  \varphi(z_i)^\top   M(\rho)^{-1}  \varphi(z_j) .
\]
where the inverse matrix is replaced by the inverse of its expectation.

\textbf{Bounding $|\hat{c}_\lambda^{(\rho)}  - \hat{d}_\lambda^{(\rho)} |$.}  
Let
\[
    \varepsilon_\lambda
    =
    4\sqrt{
        \xi
        \frac{\operatorname{df}^{\max}(\lambda)}{n}
        \log(2n)
    } .
\]
On the event \(\mathcal{A}\), uniformly in \(\rho\),
\[
    -\frac{\varepsilon_\lambda}{1+\varepsilon_\lambda}M(\rho)^{-1}
    \preccurlyeq
    \hat M(\rho)^{-1}-M(\rho)^{-1}
    \preccurlyeq
    \frac{\varepsilon_\lambda}{1-\varepsilon_\lambda}M(\rho)^{-1}.
\]
Since \(\varepsilon_\lambda\leq 3/4\), this implies
\[
    -C\varepsilon_\lambda M(\rho)^{-1}
    \preccurlyeq
    \hat M(\rho)^{-1}-M(\rho)^{-1}
    \preccurlyeq
    C\varepsilon_\lambda M(\rho)^{-1}.
\]

The rank-one part of \(W\) gives the same contribution as in
\mysec{varbound}, namely the inverse-perturbation term
$
    C\varepsilon_\lambda b_\lambda^{(\rho)}
$
up to the same negligible complement-event contribution.

It remains to control the diagonal correction. On \(\mathcal{A}\), its contribution is bounded by a constant times
\[
\varepsilon_\lambda
\left[
\frac{1}{n_p-1}
\operatorname{tr}
\left(
M(\rho)^{-1}
(\hat\Sigma_p-\hat\mu_p\hat\mu_p^\top)
\right)
+
\frac{1}{n_q-1}
\operatorname{tr}
\left(
M(\rho)^{-1}
(\hat\Sigma_q-\hat\mu_q\hat\mu_q^\top)
\right)
\right].
\]
The same trace and leverage bounds as in \mysec{varbound} imply
\[
\left(
\mathbb E
\left[
\operatorname{tr}
\left(
M(\rho)^{-1}
(\hat\Sigma_p-\hat\mu_p\hat\mu_p^\top)
\right)^2
\right]
\right)^{1/2}
\leqslant
C\operatorname{df}(\lambda),
\]
and similarly for \(q\). Hence the diagonal correction contributes, in \(L_2\),
\[
    C\varepsilon_\lambda\frac{\operatorname{df}(\lambda)}{n}.
\]

On \(\mathcal{A}^c\), the deterministic ridge bound gives
\[
    |\hat c_\lambda^{(\rho)}-\hat d_\lambda^{(\rho)}|
    \leqslant
    C\frac{R_\varphi^2}{\lambda},
\]
and Lemma~\ref{lemma:conc} gives
\[
    \mathbb P(\mathcal{A}^c)^{1/2}
    \leqslant
    \frac{\sqrt 8}{n^{2\xi}}.
\]
Therefore the complement-event contribution is bounded by
$
    C\frac{R_\varphi^2}{\lambda n^{2\xi}}
$, with $C$ a constant.

Combining these bounds, the additional error caused by replacing
\(\hat M(\rho)^{-1}\) by \(M(\rho)^{-1}\) is
\[
    C\varepsilon_\lambda b_\lambda^{(\rho)}
    +
    C\varepsilon_\lambda\frac{\operatorname{df}(\lambda)}{n}
    +
    C\frac{R_\varphi^2}{\lambda n^{2\xi}}.
\]
After integration in \(\rho\), the first term gives
\[
    C\varepsilon_\lambda b_\lambda
    \leqslant
    C D(p\|q)
    \sqrt{
        \xi
        \frac{\operatorname{df}^{\max}(\lambda)}{n}
        \log(2n)
    } .
\]

\textbf{Bounding $|\hat{d}_\lambda^{(\rho)} - {d}_\lambda^{(\rho)}|$.} We now only need to bound
$
|\hat{d}_\lambda^{(\rho)} - {d}_\lambda^{(\rho)}|$
and thus to control the fluctuation of the V-statistic with the population inverse. For fixed \(\rho\), write
\[
m_p^\rho=M(\rho)^{-1/2}\mu_p,\qquad
m_q^\rho=M(\rho)^{-1/2}\mu_q,\qquad
\delta_\rho=m_p^\rho-m_q^\rho,
\]
and let
\[
\xi_i^p=M(\rho)^{-1/2}\phi(x_i)-m_p^\rho,\qquad
\xi_j^q=M(\rho)^{-1/2}\phi(y_j)-m_q^\rho .
\]
Since the off-diagonal block weights in \(W\) have total mass \(1\) on the \(p\)-block and \(q\)-block and total mass \(-1\) on each cross-block,
\[
d_\lambda^{(\rho)}
=
\sum_{i,j=1}^{n_p+n_q} W_{ij}\mathbb E[u_i]^\top\mathbb E[u_j]
=
\|m_p^\rho-m_q^\rho\|^2
=
(\mu_p-\mu_q)^\top M(\rho)^{-1}(\mu_p-\mu_q).
\]
Moreover,
\[
\hat d_\lambda^{(\rho)}-d_\lambda^{(\rho)}
=
2\delta_\rho^\top \Big(
\frac1{n_p}\sum_{i=1}^{n_p}\xi_i^p
-
\frac1{n_q}\sum_{j=1}^{n_q}\xi_j^q
\Big)
+
R_\rho ,
\]
with
\[
R_\rho
=
\frac1{n_p(n_p-1)}
\sum_{i\neq j}( \xi_i^p)^\top \xi_j^p 
+
\frac1{n_q(n_q-1)}
\sum_{i\neq j}  (\xi_i^q)^\top\xi_j^q 
-
\frac{2}{n_pn_q}
\sum_{i,j}( \xi_i^p)^\top \xi_j^q  .
\]
Let \(C_p^\rho=\mathbb E[\xi_i^p(\xi_i^p)^\top]\) and \(C_q^\rho=\mathbb E[\xi_j^q(\xi_j^q)^\top]\). By independence,
\[
\mathbb E[R_\rho^2]
\leqslant
C\Big(
\frac{\operatorname{tr}[(C_p^\rho)^2]}{n_p^2}
+
\frac{\operatorname{tr}[(C_q^\rho)^2]}{n_q^2}
+
\frac{\operatorname{tr}[C_p^\rho C_q^\rho]}{n_pn_q}
\Big)
\leqslant
C\frac{\mathrm{df}(\lambda)}{n^2},
\]
where the last inequality follows from the same trace bounds as in \mysec{varbound}. The first part satisfies
\[
\mathbb E\bigg[
 \delta_\rho^\top \Big(
\frac1{n_p}\sum_{i=1}^{n_p}\xi_i^p
-
\frac1{n_q}\sum_{j=1}^{n_q}\xi_j^q
\Big)^2
\bigg]
\leqslant
C\frac{b_\lambda^{(\rho)}}{n},
\qquad
b_\lambda^{(\rho)}=\|\delta_\rho\|^2 .
\]
Consequently,
\[
\Big(\mathbb E\Big[
\Big|\hat d_\lambda^{(\rho)}-d_\lambda^{(\rho)}\Big|^2
\Big]\Big)^{1/2}
\le
C\Big(
\sqrt{\frac{b_\lambda^{(\rho)}}{n}}
+
\frac{\sqrt{\mathrm{df}(\lambda)}}{n}
\Big).
\]
After integration over \(\rho\), Jensen's inequality and \(b_\lambda\leqslant 2 D(p\|q)\) give the contribution
\[
C\Big(
\sqrt{\frac{D(p\|q)}{n}}
+
\frac{\sqrt{\mathrm{df}(\lambda)}}{n}
\Big).
\]

We can conclude the proof by using
\[
 {d}_\lambda^{(\rho)} = \sum_{i,j=1}^{n_p+n_q}
W_{ij}    \E[{u}_i]^\top \E[{u}_j]  
= ( \mu_p - \mu_q)^\top M(\rho)^{-1} ( \mu_p - \mu_q) .
\]
For multiple values of $\rho$, we get an expression with spectral functions since we have a linear function of $M(\rho)^{-1}$ in \eq{biascorr}, that is,
\BEAS
&&\int_0^1 \tr [ \hat{C}  ( \rho  (\hat{\Sigma}_p +\lambda \idm)  + (1-\rho) (\hat{\Sigma}_q +\lambda \idm) )^{-1} ] d\nu(\rho)\\
& = & 2 \tr \big[
 (\hat{\Sigma}_q +\lambda \idm)^{-1/2} \hat{C}  (\hat{\Sigma}_q +\lambda \idm)^{-1/2}
h \big[
(\hat{\Sigma}_q +\lambda \idm)^{-1/2} (\hat{\Sigma}_p +\lambda \idm)(\hat{\Sigma}_q +\lambda \idm)^{-1/2}
 \big]
\big],
\EEAS
with $\hat{C} =  \frac{1}{n_p-1}   ( \hat{\Sigma}_p -  \hat{\mu}_p    \hat{\mu}_p^\top )
+
\frac{1}{n_q-1}   (  \hat{\Sigma}_q -  \hat{\mu}_q    \hat{\mu}_q ^\top),
$
and $h(t) = f(t) / (t-1)^2$ (note that we need to remove the factor $2$ when we subtract the factor from $F_\lambda(\hat{p}\|\hat{q},\varphi)$, to add back the factor $1/2$ that had been removed throughout App.~\ref{app:proofkernel}).

\textbf{Final rate.}  
Combining the bias bound, the concentration term for the inverse, and the preceding V-statistic estimate gives, up to constants depending only on the fixed density-ratio bounds,
\[
\begin{aligned}
\Big(
\mathbb E
\Big[
\Big|
\hat F^{\rm deb}_\lambda(\hat p,\hat q)-D(p\|q)
\Big|^2
\Big]
\Big)^{1/2}
\le
C\bigg[
&\lambda^{2r}
+
\frac{\sqrt{\operatorname{df}(\lambda)}}{n}
+
\sqrt{\frac{D(p\|q)}{n}}
\\
&+
D(p\|q)
\sqrt{
    \frac{\operatorname{df}^{\max}(\lambda)\log n}{n}
}
+
\varepsilon_\lambda\frac{\operatorname{df}(\lambda)}{n}
+
\frac{R_\varphi^2}{\lambda n^{2\xi}}
\bigg].
\end{aligned}
\]

In the Sobolev scaling of App.~\ref{sec:tradeoffbound}, up to constants,
\[
2r=\frac{t}{s},
\qquad
\mathrm{df}(\lambda)\sim \mathrm{df}^{\max}(\lambda)
\sim \lambda^{-d/(2s)}.
\]
Ignoring lower-order remainders when they are absorbed, the leading terms become
\[
\lambda^{t/s}
+
\frac1n\lambda^{-d/(4s)}
+
D(p\|q)
\sqrt{\frac{\log n}{n}\lambda^{-d/(2s)}} .
\]
Balancing the first two terms gives, up to constants,
\[
\lambda
\sim
n^{-s/(t+d/4)},
\qquad
\lambda_\ast^{t/s}
\sim
n^{-t/(t+d/4)}.
\]
With this choice,
\[
D(p\|q)
\sqrt{\frac{\log n}{n}\lambda^{-d/(2s)}}
=
D(p\|q)\sqrt{\log n}\,
n^{-\frac{t-d/4}{2(t+d/4)}}.
\]
Thus the multiplicative term tends to zero exactly when
\[
t>\frac d4.
\]
Under this condition, the debiased estimator has the leading rate
\[
n^{-t/(t+d/4)}
+
D(p\|q)\sqrt{\log n}\,
n^{-\frac{t-d/4}{2(t+d/4)}} ,
\]
up to the explicitly displayed parametric and inverse-perturbation remainders.

\section{Proof of convergence rate for neural networks}
\label{app:proofneural}

\subsection{Weight decay for neural networks}
\label{app:proofneuraldecay}

We consider a neural network model for the function $u_\rho$ that should solve \eq{DpqUU}, with \emph{shared} parameters (input weights neuron parameters $w_j,b_j$, $j = 1,\dots,m$ shared by all functions $u_\rho$), that is,
\[
u_\rho(x) = \sum_{j=1}^m (\eta_\rho)_j (w_j^\top x + b_j)_+
\]
with a penalty added to the variational problem equal to
\[
- \frac{\lambda}{2} \sum_{j=1}^m \Big\{
\| w_j\|^2 + b_j^2 + \int_0^1 (\eta_\rho)_j^2 d\nu(\rho)
\Big\}.
\]
For algorithms,  when optimizing over each $\eta_\rho$, this is equivalent to using our spectral formulation with the feature map of size $m$ defined as $\psi_{\{w,b\}}(x)_j = (w_j^\top x + b_j)_+$, with an extra regularization $\lambda \idm$ added to $\Sigma_p$ and $\Sigma_q$, with the remaining term  $- \frac{\lambda}{2} \sum_{j=1}^m \big\{
\| w_j\| ^2 + b_j^2\big\} $ added to the cost function to take gradient steps with respect to $w_j$ and $b_j$.

For the algorithm, to avoid performing eigenvalue decompositions of size $m$, we instead consider the reduced feature map $\Gamma^\top \psi_{\{w,b\}}(x)$, with $\Gamma \in \rb^{m \times r}$ with $r$ small ($4$ in experiments).

For the analysis, we follow \cite{bach2017breaking} and consider optimizing over the scale of $(w_j,b_j)$ and $((\eta_\rho)_j )_{\rho \in [0,1]}$, which leads to a penalty
\[ -  {\lambda}  \sum_{j=1}^m  
\sqrt{\| w_j\| ^2 + b_j^2} \sqrt{ \int_0^1 (\eta_\rho)_j^2 d\nu(\rho)},
 \]
which is a form of group Lasso penalty that we study in the overparameterized regime.
The difficulties relative to \cite{bach2017breaking} are the need to consider least-squares instead of a Lipschitz-continuous loss, the use of a penalty instead of a constraint, and the integral over $\rho$ with a group penalty. All these will be taken care of below, by recasting it in terms of measures over neurons, following~\cite{bach2017breaking}.

\textbf{Learning setup.}
We consider functions $u$ from $[0,1] \times \X \to \rb$, which we denote through $(\rho,x) \mapsto u_\rho(x)$. Our aim is to estimate $u$ that maximizes the variational formulation in \eq{DpqUU}, that we recall here
  \BEAS
  \notag D(p\|q) & = &  \sup_{u:\X\to \rb} \int_\X \Big[
   u(x) \Big( \frac{dp}{dq}(x) - 1 \Big) - \frac{u(x)^2}{2} \Big( \rho \frac{dp}{dq}(x) + 1-\rho\Big)
   \Big] dq(x)\\
    & = &  \sup_{u:\X\to \rb} \int_\X \Big[ u(x) - \frac{\rho}{2} u(x)^2 \Big] dp(x)
   + \int_\X \Big[ - u(x) - \frac{1-\rho}{2} u(x)^2 \Big] dq(x).
    \EEAS
    The optimal function $u_\rho^\ast$ is such that $u_\rho^\ast(x) = \frac{dp/dq(x) - 1}{\rho dp/dq(x) + 1-\rho}$.

We assume that we have $n_p$ i.i.d.~observations $x_1,\dots,x_{n_p}$ from $p$, and $n_q$ i.i.d.~observations $y_1,\dots,y_{n_q}$ from $q$.

As in the proof of the kernel-method result in App.~\ref{app:proofkernel}, we start with a generic result that we will apply to our setup.

\subsection{Generic concentration lemma}
Following~\cite{bach2017breaking},
we consider a parameterization of $u$ as, for any $\rho \in [0,1]$ and $x \in \X$,
\BEQ
\label{eq:urho}
u_\rho(x) = \int_\W \varphi(x,w) d\eta_\rho(w) ,
\EEQ
where, for each $\rho \in [0,1]$, $\eta_\rho$ is a signed measure on $\W$ (potentially a sum of Diracs) with a penalty
\BEQ
\label{eq:ddd}
\int_\W \Big( \int_0^1 d\eta_\rho(w)^2 d\nu(\rho) \Big)^{1/2}.
\EEQ
This creates a penalty $\Omega(u)$ obtained as the infimum of \eq{ddd} over all decompositions of $u$ through \eq{urho} (with a value of $+\infty$ otherwise). This extends the variation norm~\cite{kurkova2001bounds} to this particular setup to include  $\rho$ and a group Lasso penalty.\footnote{To make the notation precise, we define the penalty through a dominating measure. A representation of \(u\) is a pair \((\gamma,a)\), where \(\gamma\) is a finite positive measure on \(\W\) and \(a:\W\to L^2(\nu)\) is measurable, such that
$
  u_\rho(x)=\int_\W \phi(x,w)a_\rho(w)d\gamma(w).
$ The group-variation norm used below is
$
  \Omega(u)=\inf_{(\gamma,a)}\int_W
  \big(\int_0^1 a_\rho(w)^2d\nu(\rho)\big)^{1/2}d\gamma(w),
$
where the infimum is over all such representations of \(u\), and \(\Omega(u)=+\infty\) if no representation exists. For a finite neural network this reduces to
$
  \Omega(u)=\inf \sum_{j=1}^m
  \big(\int_0^1 (\eta_{\rho j})^2d\nu(\rho)\big)^{1/2},
$
up to the separate scaling of the hidden-layer weights described above. Thus \(\Omega\) is the variation norm of an \(L^2(\nu)\)-valued signed measure on \(\W\), i.e., the continuous analogue of a group-Lasso penalty over neurons.
}

We consider the expected risk, defined so that $D(p\|q) = - \inf_{u} \mathcal{R}(u)$, and its empirical version,
\BEAS
\mathcal{R}(u) & = &  \int_0^1 \bigg( \int_\X \Big[ -u_\rho(x) + \frac{\rho}{2} u_\rho(x)^2 \Big] dp(x)
   + \int_\X \Big[  u_\rho(x) + \frac{1-\rho}{2} u_\rho(x)^2 \Big] dq(x)\bigg) d\nu(\rho)
\\
 & = &   \int_\X \bigg[  - \int_0^1 u_\rho(x) d\nu(\rho) + \int_0^1\frac{\rho}{2} u_\rho(x)^2d\nu(\rho)  \bigg] dp(x) \\
 & & \hspace*{4cm}
   + \int_\X \bigg[   \int_0^1u_\rho(x) d\nu(\rho) + \int_0^1\frac{1-\rho}{2} u_\rho(x)^2 d\nu(\rho)\bigg] dq(x) \\
   \hat{\mathcal{R}}(u)
   & = & 
   \frac{1}{n_p} \sum_{i=1}^{n_p} \bigg[ -  \int_0^1 u_\rho(x_i) d\nu(\rho) + \int_0^1\frac{\rho}{2} u_\rho(x_i)^2d\nu(\rho) \bigg]  \\
 & & \hspace*{4cm}
   +  \frac{1}{n_q} \sum_{j=1}^{n_q} \bigg[  \int_0^1u_\rho(y_j) d\nu(\rho)+\int_0^1\frac{1-\rho}{2} u_\rho(y_j)^2 d\nu(\rho)\bigg] .
    \EEAS
The following lemma provides all necessary concentration arguments to derive a bound.
\begin{lemma} 
\label{lemma:generic}
With the definitions above, assuming that $\varphi: \X \times \W\to \rb$ is uniformly bounded by $M$, we have, for any function $g: [0,1] \to \rb$, and $n = \min\{ n_p, n_q\}$,
\BIT
\item[(a)] If $\Omega(u) \leqslant t$, then for all $x \in \X$, we have:
\[  \Big| \int_0^1 g(\rho)u_\rho(x) d\nu(\rho) \Big|\leqslant  M t \| g \|_\infty ,  \ \mbox{ and } \ 
\Big| \int_0^1g(\rho) u_\rho(x)^2 d\nu(\rho) \Big|\leqslant   M^2 t^2 \| g \|_\infty, 
\]
\item[(b)] With probability greater than $1-\delta$, we have
\[ \sup_{\Omega(u) \leqslant t} \big\{ \mathcal{R}(u) 
- \hat{\mathcal{R}}(u) \big\}  \leqslant \E \big[ \sup_{\Omega(u) \leqslant t} \big\{ \mathcal{R}(u) 
- \hat{\mathcal{R}}(u) \big\} \big] + ( 2Mt + M^2t^2) \sqrt{ \frac{1}{n} \log \frac{1}{\delta} }.\]
\EIT
\item[(c)] We have:
\BEAS
\E \big[ \sup_{\Omega(u) \leqslant t} \big\{ \mathcal{R}(u) 
- \hat{\mathcal{R}}(u) \big\} \big]
&  \leqslant &  \hspace*{.25cm} 2 t\ \E_{\varepsilon, {\rm data} } 
\sup_{ w \in \W} \big|  \frac{1}{n_p} \sum_{i=1}^{n_p} \varepsilon_i \varphi(x_i,w) \big| \\
& & +  2 t\ \E_{\varepsilon, {\rm data} } 
\sup_{ w \in \W} \big|  \frac{1}{n_q} \sum_{j=1}^{n_q} \varepsilon_j \varphi(y_j,w) \big|\\
& & + 2 t^2 \ \E_{\varepsilon, \rm data} 
 \sup_{v,w \in \W}  \Big| \frac{1}{n_p} \sum_{i=1}^{n_p} \varepsilon_i \varphi(x_i,w) \varphi(x_i,v) \Big| \\
 & & + 2 t^2 \ \E_{\varepsilon, \rm data} 
 \sup_{v,w \in \W}  \big| \frac{1}{n_q} \sum_{j=1}^{n_q} \varepsilon_j\varphi(y_j,w) \varphi(y_j,v) \big|.
 \EEAS
We also have the same inequalities for $\sup_{\Omega(u) \leqslant t} \big\{
  \hat{\mathcal{R}}(u) -  \mathcal{R}(u) \big\} $.
\end{lemma}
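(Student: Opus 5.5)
For (a), I will work from the dominating-measure representation in the footnote. Fix $(\gamma,a)$ with $u_\rho(x)=\int_\W \varphi(x,w)a_\rho(w)\,d\gamma(w)$ and $\int_\W \|a(w)\|_{L^2(\nu)}\,d\gamma(w)\leqslant t+\epsilon$. Fubini and Cauchy--Schwarz in $L^2(\nu)$ give
\[
\Big|\int_0^1 g(\rho)u_\rho(x)\,d\nu(\rho)\Big|\leqslant \int_\W |\varphi(x,w)|\,\|g\|_{L^2(\nu)}\|a(w)\|_{L^2(\nu)}\,d\gamma(w)\leqslant M\|g\|_\infty (t+\epsilon),
\]
where $\|g\|_{L^2(\nu)}\leqslant\|g\|_\infty$ because $\nu$ is a probability measure. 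For the quadratic term, I will first apply Minkowski's integral inequality in $L^2(\nu)$:
\[
\Big(\int_0^1 u_\rho(x)^2\,d\nu\Big)^{1/2}\leqslant \int_\W |\varphi(x,w)|\,\|a(w)\|\,d\gamma\leqslant M(t+\epsilon).
\]
Then $|\int_0^1 g\,u_\rho^2\,d\nu|\leqslant\|g\|_\infty M^2(t+\epsilon)^2$. Letting $\epsilon\to0$ finishes (a).

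For (b), I will use McDiarmid's bounded-differences inequality on $Z=\sup_{\Omega(u)\leqslant t}\{\mathcal R(u)-\hat{\mathcal R}(u)\}$. By (a), each per-sample term $-\int u_\rho\,d\nu+\int\frac{\rho}{2}u_\rho^2\,d\nu$ takes values in an interval of length at most $2Mt+M^2t^2$; here I use $g\equiv 1$ and $g=\rho/2$, and the same holds for the $q$-terms with $1-\rho$. Replacing one $x_i$ therefore changes $Z$ by at most $(2Mt+M^2t^2)/n_p$, and one $y_j$ by at most $(2Mt+M^2t^2)/n_q$. The sum of squared differences is at most $(2Mt+M^2t^2)^2(1/n_p+1/n_q)\leqslant 2(2Mt+M^2t^2)^2/n$. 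McDiarmid then gives the deviation $(2Mt+M^2t^2)\sqrt{\log(1/\delta)/n}$, since the factor $2$ cancels against the $2$ in the exponent $\exp(-2s^2/\sum c_i^2)$. The reverse inequality follows by symmetry.

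For (c), I will split $\mathcal R-\hat{\mathcal R}$ into four empirical-process terms: the linear and quadratic parts for $p$ and for $q$. I will then bound the supremum by the sum of the suprema and symmetrize each one, which costs a factor $2$ and introduces Rademacher averages.

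The linear part is straightforward. For fixed signs, $\frac1{n_p}\sum_i\varepsilon_i\int_0^1 u_\rho(x_i)\,d\nu$ is linear in the $L^2(\nu)$-valued measure, so its supremum over the unit-$t$ ball of $\Omega$ is at most $t\sup_w\big|\frac1{n_p}\sum_i\varepsilon_i\varphi(x_i,w)\big|\cdot\|1\|_{L^2(\nu)}$. This holds because the extreme points of the ball are single atoms $t\,\delta_w\otimes a$ with $\|a\|_{L^2(\nu)}=1$.

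The main obstacle is the quadratic part, since it is not linear in $u$. I will expand
\[
\int\frac{\rho}{2}u_\rho(x)^2\,d\nu=\frac12\iint \varphi(x,w)\varphi(x,v)\,\langle a(w),a(v)\rangle_{L^2(\rho\,d\nu)}\,d\gamma(w)\,d\gamma(v).
\]
For fixed signs, the symmetrized quantity is then bounded by
\[
\frac12\iint\Big|\frac1{n_p}\sum_i\varepsilon_i\varphi(x_i,w)\varphi(x_i,v)\Big|\,\|a(w)\|\,\|a(v)\|\,d\gamma\,d\gamma\leqslant\frac{t^2}{2}\sup_{v,w}\Big|\frac1{n_p}\sum_i\varepsilon_i\varphi(x_i,w)\varphi(x_i,v)\Big|.
\]
Here I use $\rho\leqslant1$ and Cauchy--Schwarz. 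The key step is to take the absolute value inside the double integral before taking the supremum over $u$, which avoids any contraction argument. After symmetrization this gives a constant of at most $2t^2$. The $q$-side is handled identically with weight $1-\rho$. Summing the four contributions gives (c), and the argument for $\hat{\mathcal R}-\mathcal R$ is the same.
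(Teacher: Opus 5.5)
Your proposal is correct and follows the paper's proof essentially step by step. For (a) you use a representation of $u$ plus Cauchy--Schwarz in $L^2(\nu)$; your Minkowski step for the quadratic term is an equivalent, slightly cleaner variant. For (b) you use McDiarmid with bounded differences $(2Mt+M^2t^2)/n_p$ and $(2Mt+M^2t^2)/n_q$. For (c) you symmetrize and then bound the linear and bilinear forms in the neuron measure by suprema over single neurons. The extreme-point remark for the linear term is unnecessary, and would need a compactness argument to be rigorous. The same direct estimate you use for the quadratic term already gives it: $\int_\W \big|\frac{1}{n_p}\sum_i \varepsilon_i \varphi(x_i,w)\big|\,\|a(w)\|_{L^2(\nu)}\,d\gamma(w) \leqslant (t+\epsilon)\sup_{w}\big|\frac{1}{n_p}\sum_i \varepsilon_i \varphi(x_i,w)\big|$.
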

\begin{proof}
\textbf{(a).}
For the first result, we write, using the representation of $u$ from \eq{urho} and \eq{ddd},
\BEAS
\Big|\int_0^1 g(\rho)u_\rho(x) d\nu(\rho) \Big|
& \leqslant & \int_0^1 | g(\rho)| | \varphi(x,w)| \ | d \eta_\rho(w) | d\nu(\rho) 
\leqslant M  \int_0^1 | g(\rho)|  \ | d \eta_\rho(w) | d\nu(\rho)  \\
& \leqslant & M t \Big(  \int_0^1 g(\rho)^2  d\nu(\rho)\Big)^{1/2} \mbox{ by Cauchy-Schwarz inequality, }
\\
& \leqslant &  M t \| g \|_\infty.
 \EEAS
 For the second inequality,
 \BEAS
 \Big| \int_0^1g(\rho) u_\rho(x)^2 d\nu(\rho) \Big|
 & \leqslant & 
 \int_0^1  \int_\W \int_\W |g(\rho)| \ |\varphi(x,w)|\  | \varphi(x,v) |\  |d\eta_\rho(w)| \  |d\eta_\rho(v)| d\nu(\rho) \\
& \leqslant & 
 M^2 \| g\|_\infty \int_0^1  \int_\W \int_\W   \   |d\eta_\rho(w)| \  |d\eta_\rho(v)| d\nu(\rho) \\
 & \leqslant &  M^2 \| g\|_\infty  t^2 \mbox{ by Cauchy-Schwarz inequality. }
 \EEAS
 
 \textbf{(b).} We simply use (as is commonly done in this context, see, e.g.,~\cite[Section 4.4.1]{bach2024learning}) MacDiarmid's inequality using the fact that when changing a single observation in $\hat{\mathcal{R}}(u)$, the maximal deviation is of order $\frac{2}{n_p} ( Mt + \frac{M^2t^2}{2} )$ or  $\frac{2}{n_q} ( Mt + \frac{M^2t^2}{2} )$.

\textbf{(c).}  We split $\mathcal{R}(u) 
- \hat{\mathcal{R}}(u) $ into four terms. The first one leads to, with $\varepsilon$ a vector of Rademacher random variables, and using standard Rademacher complexity results~(see, e.g., \cite[Section 4]{bach2024learning}), together with the representation of a function with a bound on its variation norm in \eq{ddd}:
\BEAS
 & & \E_{\rm data} 
\sup_{\Omega(u) \leqslant t}
 \bigg\{  - \int_0^1 u_\rho(x) d\nu(\rho) + 
  \frac{1}{n_p} \sum_{i=1}^{n_p}   \int_0^1 u_\rho(x_i) d\nu(\rho) \bigg\}  \\
  & \leqslant & 2 \E_{\varepsilon, \rm data} \sup_{\Omega(u) \leqslant t}
  \frac{1}{n_p} \sum_{i=1}^{n_p} \varepsilon_i \int_0^1 u_\rho(x_i) d\nu(\rho) \\
 & \leqslant & 2 \E_{\varepsilon, \rm data} \sup_{\int_\W  ( \int_0^1 d\eta_\rho(w)^2 d\nu(\rho)  )^{1/2} \leqslant t }
 \int_0^1 \int_\W  \Big( \frac{1}{n_p} \sum_{i=1}^{n_p} \varepsilon_i  \varphi(x_i,w) \Big) d\eta_\rho(w)  d\nu(\rho)
\\
& \leqslant & 2 \E_{\varepsilon, \rm data} \sup_{w \in \W} \Big| \frac{1}{n_p} \sum_{i=1}^{n_p} \varepsilon_i  \varphi(x_i,w) \Big| 
\sup_{\int_\W  ( \int_0^1 d\eta_\rho(w)^2 d\nu(\rho)  )^{1/2} \leqslant t }
 \int_0^1 \int_\W   d\eta_\rho(w)  |d\nu(\rho)|
\\
& \leqslant & 2t \ \E_{\varepsilon, \rm data} \sup_{w \in \W} \Big| \frac{1}{n_p} \sum_{i=1}^{n_p} \varepsilon_i  \varphi(x_i,w) \Big| 
  \mbox{ by Cauchy-Schwarz inequality.}
\EEAS
A similar term for the distribution $q$ leads to a term 
$2t \ \E_{\varepsilon, \rm data} \sup_{w \in \W} \Big| \frac{1}{n_q} \sum_{j=1}^{n_q} \varepsilon_j \varphi(y_j,w) \Big| $.

The more difficult term is
\BEAS
&\!\!\!\!\!\!\!\!& 
\!\!\!\! \E_{\rm data} 
\sup_{\Omega(u) \leqslant t}
 \bigg\{   \int_0^1 \rho u_\rho(x)^2 d\nu(\rho) - 
  \frac{1}{n_p} \sum_{i=1}^{n_p}   \int_0^1 \rho  u_\rho(x_i)^2 d\nu(\rho) \bigg\}  \\
  &\!\!\!\! \!\!\!\!\!\!\!\!\leqslant \!\!\!\!\!\!\!\!& 
 2 \E_{\varepsilon, \rm data} 
  \sup_{\Omega(u) \leqslant t}
    \frac{1}{n_p} \sum_{i=1}^{n_p} \varepsilon_i \int_0^1 \rho u_\rho(x_i)^2 d\nu(\rho) \\
  &\!\!\!\!\!\!\!\! \!\!\!\!\leqslant \!\!\!\! \!\!\!\!& 
 2 \E_{\varepsilon, \rm data} 
 \sup_{\int_\W  ( \int_0^1 d\eta_\rho(w)^2 d\nu(\rho)  )^{1/2} \leqslant t }
    \frac{1}{n_p} \int_0^1  \int_\W \int_\W \Big( \sum_{i=1}^{n_p} \varepsilon_i \varphi(x_i,w) \varphi(x_i,v) \Big) d\eta_\rho(w)  d\eta_\rho(v) \rho d\nu(\rho)
\\
 &\!\!\!\!\!\!\!\! \!\!\!\!\leqslant \!\!\!\! \!\!\!\!& 
 2 \E_{\varepsilon, \rm data} \!
 \sup_{v,w \in \W}  \Big| \frac{1}{n_p} \sum_{i=1}^{n_p} \varepsilon_i \varphi(x_i,w) \varphi(x_i,v) \Big|
 \sup_{\int_\W  ( \int_0^1 d\eta_\rho(w)^2 d\nu(\rho)  )^{1/2} \leqslant t }
    \int_0^1 \!\! \int_\W \!\int_\W\!  | d\eta_\rho(w)|  |d\eta_\rho(v)| \rho d\nu(\rho)
    \\
    &\!\!\!\!\!\!\!\! \!\!\!\!\leqslant \!\!\!\! \!\!\!\!& 
 2 t^2 \ \E_{\varepsilon, \rm data} 
 \sup_{v,w \in \W}  \Big| \frac{1}{n_p} \sum_{i=1}^{n_p} \varepsilon_i \varphi(x_i,w) \varphi(x_i,v) \Big|
  \mbox{ by Cauchy-Schwarz inequality.} \EEAS
  We also get an extra term in a similar way
  $2 t^2 \ \E_{\varepsilon, \rm data} 
 \sup_{v,w \in \W}  \big| \frac{1}{n_q} \sum_{j=1}^{n_q} \varepsilon_j\varphi(y_j,w) \varphi(y_j,v) \big|$.
\end{proof}

\subsection{Concentration for ReLU neural networks}

In the neural-network application, \(\mathcal X\) is the unit ball in
\(\mathbb R^d\) and the features are $        (w^\top x+b)_+$. 
We handle the bias with the standard augmentation~\cite{bach2017breaking}
\[
        \tilde x=\frac{(x,1)}{\sqrt2}\in\mathbb R^{d+1},
        \qquad
        \|\tilde x\|\leqslant 1 .
\]
Since
$
        (w^\top x+b)_+
        =
        \sqrt2\,
        \bigl((w,b)^\top\tilde x\bigr)_+,
$
the factor \(\sqrt2\) can be absorbed into the output coefficients and
therefore only changes the universal constants in the variation norm.
Thus, up to universal constants, it suffices to prove the concentration
bounds for homogeneous ReLUs
\[
        \varphi(x,w)=(w^\top x)_+
\]
with \(\|x\|\leqslant 1\) and \(\|w\|\leqslant 1\).
Lemma~\ref{lemma:generic} can now be made more precise.
\begin{lemma} 
\label{lemma:special}
With the assumptions above, assume $n = \min\{ n_p, n_q\}$.
\BIT
\item[(a)] With probability greater than $1-\delta$, we have
\[ \sup_{\Omega(u) \leqslant t} \big\{ \mathcal{R}(u) 
- \hat{\mathcal{R}}(u) \big\}  \leqslant \E \big[ \sup_{\Omega(u) \leqslant t} \big\{ \mathcal{R}(u) 
- \hat{\mathcal{R}}(u) \big\} \big] + ( 2t + t^2) \sqrt{ \frac{1}{n} \log \frac{1}{\delta} }.\]
\EIT
\item[(b)] We have:
\BEAS
\E \big[ \sup_{\Omega(u) \leqslant t} \big\{ \mathcal{R}(u) 
- \hat{\mathcal{R}}(u) \big\} \big]
&  \leqslant &  \frac{8t}{\sqrt{n}} + \frac{32  t^2}{\sqrt{n}}.
 \EEAS
 We also have the same inequalities for $\sup_{\Omega(u) \leqslant t} \big\{
  \hat{\mathcal{R}}(u) -  \mathcal{R}(u) \big\} $.
 \begin{proof}
 Part (a) follows directly from Lemma~\ref{lemma:generic}(b), using the normalized
ReLU features for which \(M=1\). For part (b), Lemma~\ref{lemma:generic}(c) reduces the
claim to bounding the Rademacher complexities of the linear ReLU class
and of the product ReLU class. The linear terms are controlled by the
standard contraction principle (see, e.g.,~\cite[Section 9.2.3]{bach2024learning}), yielding
\[
        \mathbb E_\varepsilon
        \sup_{\|w\|\leq1}
        \left|
            \frac1n\sum_{i=1}^n
            \varepsilon_i(w^\top x_i)_+
        \right|
        \leqslant
        \frac{2}{\sqrt n}.
\]
It remains to control the product class. We will show that\footnote{We can consider suprema over all $\|w\| ,\|v\| \leqslant 1$, rather than $\|w\| ,\|v\| = 1$, since this can only increase the value.}
\[
\mathbb E_\varepsilon
\sup_{\|w\| ,\|v\| \leqslant 1}
\left|
\frac{1}{n_p}
\sum_{i=1}^{n_p}
\varepsilon_i
(w^\top x_i)_+
(v^\top x_i)_+
\right|
\le
\frac{8}{\sqrt{n_p}}.
\]
For $(w,v)$ satisfying $\|w\| ,\|v\| \leqslant 1$, set
\[
    \psi_i(w,v)
    =
    (w^\top x_i)_+(v^\top x_i)_+.
\]

First, we can remove the absolute value: for any class of functions indexed by
$(w,v)$ such that they are all equal to zero for a certain $(w,v)$,
\[
\sup_{\|w\| ,\|v\| \leqslant 1}
\left|
\sum_{i=1}^{n_p}
\varepsilon_i \psi_i(w,v)
\right|
\leqslant
\sup_{\|w\| ,\|v\| \leqslant 1}
\sum_{i=1}^{n_p}
\varepsilon_i \psi_i(w,v)
+
\sup_{\|w\| ,\|v\| \leqslant 1}
\sum_{i=1}^{n_p}
(-\varepsilon_i)\psi_i(w,v).
\]
By symmetry of the Rademacher variables,
\[
\mathbb E_\varepsilon
\sup_{\|w\| ,\|v\| \leqslant 1}
\left|
\sum_{i=1}^{n_p}
\varepsilon_i \psi_i(w,v)
\right|
\leqslant
2
\mathbb E_\varepsilon
\sup_{\|w\| ,\|v\| \leqslant 1}
\sum_{i=1}^{n_p}
\varepsilon_i \psi_i(w,v).
\]

We now apply Maurer's vector contraction inequality
\cite[Theorem~3]{maurer2016vector}. The required Lipschitz condition is
coordinatewise in $i$. Fix $i$, and let
\[
    a = w^\top x_i,
    \qquad
    b = v^\top x_i,
    \qquad
    a' = {w'}^\top x_i,
    \qquad
    b' = {v'}^\top x_i.
\]
Since $\|w\| ,\|v\| ,\|w'\| ,\|v'\| \leqslant 1$,
we have $
    |a|, |b|, |a'|, |b'|
    \leqslant   1.$ 
Using that $t\mapsto t_+$ is $1$-Lipschitz,
\[
\begin{aligned}
\left|
(a)_+(b)_+ - (a')_+(b')_+
\right|
&\leqslant
(a)_+\left|(b)_+-(b')_+\right|
+
(b')_+\left|(a)_+-(a')_+\right| \\
&\leqslant  |b-b'| +  |a-a'|  \leqslant
\sqrt 2  
\left((a-a')^2+(b-b')^2\right)^{1/2}.
\end{aligned}
\]
Therefore the map
$
    (a,b)\mapsto (a)_+(b)_+
$
is $\sqrt 2$-Lipschitz on the relevant domain.

Define the vector-valued map
\[
    \Phi_i(w,v)
    =
    \sqrt 2  
    \begin{pmatrix}
        w^\top x_i \\
        v^\top x_i
    \end{pmatrix}.
\]
Then for all admissible $(w,v)$ and $(w',v')$,
\[
  |  \psi_i(w,v)-\psi_i(w',v')|
    \leqslant    \|\Phi_i(w,v)-\Phi_i(w',v')\| .
\]
The countability assumption in Maurer's theorem is harmless here: the
process depends only on the projections of $w$ and $v$ onto the finite-dimensional space $\operatorname{span}\{x_1,\ldots,x_{n_p}\}$, so the
supremum can be taken over a countable dense subset and then extended by
continuity.

Let $\eta_{i1},\eta_{i2}$ be independent Rademacher variables, independent
of the $\varepsilon_i$'s. By vector contraction,
\[
\begin{aligned}
\mathbb E_\varepsilon
\sup_{\|w\| ,\|v\| \leqslant 1}
\sum_{i=1}^{n_p}
\varepsilon_i \psi_i(w,v)
&\le
\sqrt 2\,
\mathbb E_\eta
\sup_{\|w\| ,\|v\| \leqslant 1}
\sum_{i=1}^{n_p}
\sum_{k=1}^2
\eta_{ik}\Phi_i(w,v)_k \\
&=
2\,
\mathbb E_\eta
\sup_{\|w\| ,\|v\| \leqslant 1}
\sum_{i=1}^{n_p}
\left(
\eta_{i1} w^\top x_i
+
\eta_{i2} v^\top x_i
\right).
\end{aligned}
\]
The supremum separates in $w$ and $v$:
\[
\begin{aligned}
&\sup_{\|w\| ,\|v\| \leqslant 1}
\sum_{i=1}^{n_p}
\left(
\eta_{i1} w^\top x_i
+
\eta_{i2} v^\top x_i
\right)  =
\left\|
\sum_{i=1}^{n_p}
 \eta_{i1}x_i
\right\| 
+
\left\|
\sum_{i=1}^{n_p}
 \eta_{i2}x_i
\right\| .
\end{aligned}
\]
Hence
\[
\mathbb E_\varepsilon
\sup_{\|w\| ,\|v\| \leqslant 1}
\sum_{i=1}^{n_p}
\varepsilon_i \psi_i(w,v)
\leqslant
4
\mathbb E_\eta
\left\|
\sum_{i=1}^{n_p}
 \eta_i x_i
\right\| .
\]
Combining this with the previous absolute-value reduction gives
\[
\mathbb E_\varepsilon
\sup_{\|w\| ,\|v\| \leqslant 1}
\left|
\frac{1}{n_p}
\sum_{i=1}^{n_p}
\varepsilon_i
(w^\top x_i)_+
(v^\top x_i)_+
\right|
\leqslant
\frac{8}{n_p}
\mathbb E_\eta
\left\|
\sum_{i=1}^{n_p}
 \eta_i x_i
\right\| .
\]
Finally, Jensen's inequality gives
\[
\begin{aligned}
\mathbb E_\eta
\left\|
\sum_{i=1}^{n_p}
 \eta_i x_i
\right\| 
&\leqslant
\left(
\mathbb E_\eta
\left\|
\sum_{i=1}^{n_p}
 \eta_i x_i
\right\| ^2
\right)^{1/2}  =
\left(
\sum_{i=1}^{n_p}
 \|x_i\| ^2
\right)^{1/2} = \sqrt{n_p}.
\end{aligned}
\]
Therefore
\[
\mathbb E_\varepsilon
\sup_{\|w\| ,\|v\| \leqslant 1}
\left|
\frac{1}{n_p}
\sum_{i=1}^{n_p}
\varepsilon_i
(w^\top x_i)_+
(v^\top x_i)_+
\right|
\leqslant
\frac{8}{\sqrt{n_p}}.
\]
\end{proof}
 
\end{lemma}

\subsection{Bound for penalized estimation}

We first consider an estimator obtained from  potentially infinitely many neurons, with exact optimization
of $\hat{\mathcal{R}}(u) + \lambda \Omega(u)$,
which corresponds to weight decay and our algorithm in \mysec{featlearn}.

\begin{proposition}
\label{prop:nnbound}
Assume that $\Omega(u_\ast)$ is finite, and that we have an  $\eta$-approximate minimizer $\hat{u}$ of $\hat{\mathcal{R}}(u) + \lambda \Omega(u)$ with  $\eta \leqslant \frac{1}{2}  \lambda \Omega(u^\ast)$. For a fixed $\delta \in (0,1)$, take 
\[ \lambda  \geqslant \frac{16}{\sqrt{n}}
\Big( 4 +  32    \Omega(u^\ast) + ( 1 +  \Omega(u^\ast)) \sqrt{ \log \frac{2}{\delta} }
\Big).\]
Then, with probability greater than $1-\delta$,
\[
 \Omega(\hat{u}) \leqslant 2 \Omega(u^\ast)
\ \ \mbox{ and } \ \
\mathcal{R}(\hat{u}) \leqslant 
\mathcal{R}(u^\ast)  + 2 \lambda  \Omega(u^\ast).
\]
\end{proposition}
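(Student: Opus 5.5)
The argument is the standard one for penalized empirical risk minimization with a variation-norm penalty, as in \cite{bach2017breaking}. Its two steps are a localization argument showing $\Omega(\hat u)\leqslant 2\Omega(u^\ast)$, followed by a direct comparison of risks. Write $t^\ast=\Omega(u^\ast)$. The only probabilistic input is Lemma~\ref{lemma:special}. We apply it with $t=2t^\ast$ for the uniform deviation $\sup_{\Omega(u)\leqslant 2t^\ast}|\mathcal R(u)-\hat{\mathcal R}(u)|$, using both directions at confidence $\delta/2$ each. This gives, with probability at least $1-\delta$,
\[
\sup_{\Omega(u)\leqslant 2t^\ast}|\mathcal R(u)-\hat{\mathcal R}(u)| \leqslant \varepsilon(t^\ast):=\frac{16t^\ast+128 (t^\ast)^2}{\sqrt n}+(4t^\ast+4(t^\ast)^2)\sqrt{\tfrac1n\log\tfrac2\delta}.
\]
The stated lower bound on $\lambda$ should make $\varepsilon(t^\ast)\leqslant \lambda t^\ast/2$ or similar. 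I expect the $(t^\ast)^2$ terms to need $\lambda$ proportional to $1+t^\ast$, which matches the factors $32\Omega(u^\ast)$ and $(1+\Omega(u^\ast))$ in the hypothesis.

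\textbf{Localization.} Suppose $\Omega(\hat u)>2t^\ast$. Both $\hat{\mathcal R}$ and $\Omega$ are convex in $u$, since the risk is a convex quadratic in $u$ and $\Omega$ is a norm. So consider the segment $u_s=u^\ast+s(\hat u-u^\ast)$, and choose $s$ with $\Omega(u_s)=2t^\ast$ exactly, using continuity of $s\mapsto\Omega(u_s)$ along the segment. Convexity of the penalized objective $J=\hat{\mathcal R}+\lambda\Omega$ gives
\[
J(u_s)\leqslant (1-s)J(u^\ast)+sJ(\hat u)\leqslant J(u^\ast)+\eta .
\]
Now use $u^\ast$ as the population minimizer, so that $\mathcal R(u_s)\geqslant\mathcal R(u^\ast)$. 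Together with the uniform deviation on the ball of radius $2t^\ast$, this gives
\[
\hat{\mathcal R}(u_s)\geqslant \hat{\mathcal R}(u^\ast)-2\varepsilon(t^\ast).
\]
Hence $\lambda\cdot 2t^\ast\leqslant \lambda t^\ast+2\varepsilon+\eta$, i.e.\ $\lambda t^\ast\leqslant 2\varepsilon+\eta$. The assumptions $\eta\leqslant\lambda t^\ast/2$ and $2\varepsilon<\lambda t^\ast/2$ (guaranteed by the choice of $\lambda$) make this a contradiction. So $\Omega(\hat u)\leqslant 2t^\ast$.

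\textbf{Risk bound.} Once $\hat u$ lies in the ball, compare directly:
\[
\mathcal R(\hat u)\leqslant \hat{\mathcal R}(\hat u)+\varepsilon\leqslant \hat{\mathcal R}(u^\ast)+\lambda t^\ast-\lambda\Omega(\hat u)+\eta+\varepsilon\leqslant \mathcal R(u^\ast)+\lambda t^\ast+\eta+2\varepsilon .
\]
This is at most $\mathcal R(u^\ast)+2\lambda t^\ast$ when $\eta+2\varepsilon\leqslant\lambda t^\ast$, which holds under the same conditions.

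\textbf{Main obstacle.} The delicate step is the constant bookkeeping: we must check that the displayed lower bound on $\lambda$ indeed yields $4\varepsilon(t^\ast)\leqslant\lambda t^\ast$ in both directions and in the union bound. Because $\varepsilon$ has a term quadratic in $t^\ast$ while $\lambda t^\ast$ is linear in $\lambda$, this is exactly why $\lambda$ must scale with $(1+t^\ast)$. If the constants do not close as stated, I would localize at a radius slightly above $2t^\ast$, or fall back on the weaker conclusion with adjusted constants.

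A minor technical point is the continuity of $\Omega$ along the segment, which follows from convexity and finiteness of $\Omega(\hat u)$. If $\Omega(\hat u)=+\infty$, the penalized objective is infinite and $\hat u$ cannot be $\eta$-optimal, so that case is excluded.
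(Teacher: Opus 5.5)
Your proof is correct, and it is a slightly more direct variant of the paper's localization argument.

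\textbf{How the paper argues.} The paper anchors at a minimizer $u^\ast_\lambda$ of the \emph{population-regularized} objective $\mathcal R+\lambda\Omega$. It localizes in the sublevel set $\mathcal C=\{u:\ \mathcal R(u)+\lambda\Omega(u)\leqslant \mathcal R(u^\ast_\lambda)+\lambda\Omega(u^\ast_\lambda)+\lambda\Omega(u^\ast)\}$ and checks that $\mathcal C\subset\{\Omega\leqslant 2\Omega(u^\ast)\}$, using that $u^\ast$ minimizes $\mathcal R$. It then proves $\hat u\in\mathcal C$ via a point of the segment $[u^\ast_\lambda,\hat u]$ lying on the boundary of $\mathcal C$. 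Membership in $\mathcal C$ yields both conclusions at once.

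\textbf{How your route differs.} You anchor at $u^\ast$ and localize directly in the ball $\{\Omega\leqslant 2\Omega(u^\ast)\}$. This costs a separate second step for the risk bound. In exchange, it avoids assuming that $u^\ast_\lambda$ exists, which the paper takes for granted even though it is a minimizer over an infinite-dimensional space of measures. If you keep the term $-\lambda\Omega(\hat u)$ in your last chain, you even recover the paper's slightly stronger bound $\mathcal R(\hat u)+\lambda\Omega(\hat u)\leqslant\mathcal R(u^\ast)+2\lambda\Omega(u^\ast)$. Both routes use the same deviation bound $\varepsilon(t^\ast)$ with the same $\delta/2$ split, and both need exactly $4\varepsilon(t^\ast)\leqslant\lambda t^\ast$.

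\textbf{On your ``main obstacle''.} The constants close exactly, since $4\varepsilon(t^\ast)=\frac{16t^\ast}{\sqrt n}\big(4+32t^\ast+(1+t^\ast)\sqrt{\log(2/\delta)}\big)$, so no fallback is needed.

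\textbf{Strictness.} The hypothesis only gives $2\varepsilon\leqslant\lambda t^\ast/2$, not the strict inequality you invoke; the paper's proof glosses over the same boundary case. You recover strictness by keeping the factor $s\in(0,1)$, i.e., $J(u_s)\leqslant J(u^\ast)+s\eta$. When $\eta>0$ this gives $\lambda t^\ast\leqslant 2\varepsilon+s\eta<\lambda t^\ast$. When $\eta=0$ it gives $\lambda t^\ast\leqslant\lambda t^\ast/2$, which is impossible for $t^\ast>0$.

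\textbf{Choice of $s$.} The existence of $s$ with $\Omega(u_s)=2t^\ast$ is best justified by the triangle inequality $|\Omega(u_s)-\Omega(u_{s'})|\leqslant|s-s'|\,\Omega(\hat u-u^\ast)$. Convexity alone gives continuity only in the interior of the segment.
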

\begin{proof}
 By definition of $\eta$-approximate minimizer, we have
 \[ \hat{\mathcal{R}}(\hat{u} ) + \lambda \Omega(\hat{u} ) \leqslant \eta+\inf_{u} \hat{\mathcal{R}}(u) + \lambda \Omega(u)
 \] without any constraints on the number of neurons, which is a minimization over a convex domain. We can then extend~\cite[Prop.~4.7]{bach2024learning}, with $\eta \leqslant \frac{1}{2}  \lambda \Omega(u^\ast)$.

Let $u^\ast_\lambda$ be a minimizer of ${ \mathcal{R}}(u) + \lambda \Omega(u) $.
We consider the convex set $\mathcal{C} = \{ u, \  { \mathcal{R}}(u) + \lambda \Omega(u) \leqslant
 { \mathcal{R}}(u^\ast_\lambda) + \lambda \Omega(u^\ast_\lambda)  + \lambda \Omega(u^\ast) \}$ of all $\lambda \Omega(u^\ast) $-approximate minimizers of the regularized expected risk, which is included in the convex set 
 $\mathcal{B} = \{ u, \ \Omega(u) \leqslant 2 \Omega(u^\ast) \} $ (since, if $ u \in \mathcal{C}$, $\mathcal{R}(u) + \lambda \Omega(u)
 \leqslant { \mathcal{R}}(u^\ast_\lambda) + \lambda \Omega(u^\ast_\lambda) + \lambda \Omega(u^\ast)
 \leqslant { \mathcal{R}}(u^\ast) + \lambda \Omega(u^\ast) + \lambda \Omega(u^\ast) 
 \leqslant  { \mathcal{R}}(u) + 2 \lambda \Omega(u^\ast) $, which implies $u \in \mathcal{B}$).
 
 Using Lemma~\ref{lemma:special}, with probability greater than $1-\delta$, we have, for any $u$ such that $\Omega(u) \leqslant 2 \Omega(u^\ast)$ (i.e., $u \in \mathcal{B}$)
 \BEQ
 \label{eq:cc}  \big|\mathcal{R}(u) 
- \hat{\mathcal{R}}(u) \big|  \leqslant \frac{16 \Omega(u^\ast)}{\sqrt{n}} + \frac{128    \Omega(u^\ast)^2}{\sqrt{n}} + ( 4\Omega(u^\ast) + 4 \Omega(u^\ast)^2) \sqrt{ \frac{1}{n} \log \frac{2}{\delta} },
\EEQ
which we now assume. We now show by contradiction that we have $\hat{u} \in \mathcal{C}$. If $\hat{u} \notin \mathcal{C}$, by convexity there is $v$ on the segment $[u_\lambda^\ast, \hat{u}]$ such that $v$ is on the boundary of $\mathcal{C}$ (and in $\mathcal{B}$), that is, 
${\mathcal{R}}(v) + \lambda \Omega(v) =
 { \mathcal{R}}(u^\ast_\lambda) + \lambda \Omega(u^\ast_\lambda) + \lambda \Omega(u^\ast)$. Since $u \mapsto \hat{\mathcal{R}}(u)+\lambda \Omega(u)$ is convex, we have 
 \BEQ
 \label{eq:dddd}
 \hat{\mathcal{R}}(v) + \lambda \Omega(v) 
 \leqslant \max\{
\hat {\mathcal{R}}(\hat{u}) + \lambda \Omega(\hat{u}) ,  \hat{\mathcal{R}}(u_\lambda^\ast) + \lambda \Omega(u_\lambda^\ast) 
 \} \leqslant  \hat{\mathcal{R}}(u_\lambda^\ast) + \lambda \Omega(u_\lambda^\ast) 
 + \eta.
 \EEQ
 We can then apply \eq{cc} to $v$ and $u^\ast_\lambda$, leading to
\BEAS
\mathcal{R}(v) - \hat{\mathcal{R}}(v)
+ \hat{\mathcal{R}}(u^\ast_\lambda)- \mathcal{R}(u^\ast_\lambda) 
\leqslant \frac{32 \Omega(u^\ast)}{\sqrt{n}} + \frac{256    \Omega(u^\ast)^2}{\sqrt{n}} + ( 8\Omega(u^\ast) + 8 \Omega(u^\ast)^2) \sqrt{ \frac{1}{n} \log \frac{2}{\delta} },
\EEAS
while \eq{dddd} leads to
\BEAS
\mathcal{R}(v) - \hat{\mathcal{R}}(v)
+ \hat{\mathcal{R}}(u^\ast_\lambda)- \mathcal{R}(u^\ast_\lambda) 
& \geqslant & 
\mathcal{R}(v) +\lambda \Omega(v) - \mathcal{R}(u^\ast_\lambda) - \lambda \Omega(u^\ast_\lambda)- \eta = \lambda \Omega(u^\ast) - \eta.
\EEAS
We consider  $\eta \leqslant \frac{1}{2}  \lambda \Omega(u^\ast)$, and since
\[
\frac{32 \Omega(u^\ast)}{\sqrt{n}} + \frac{256    \Omega(u^\ast)^2}{\sqrt{n}} + ( 8\Omega(u^\ast) + 8 \Omega(u^\ast)^2) \sqrt{ \frac{1}{n} \log \frac{2}{\delta} }< \frac{1}{2}  \lambda \Omega(u^\ast),
\]
we obtain a contradiction.

We thus have $\hat{u} \in \mathcal{C} \subset \mathcal{B}$, that is,
\[
\Omega(\hat{u}) \leqslant 2 \Omega(u^\ast)
\ \mbox{ and } \
\mathcal{R}(\hat{u}) \leqslant 
\mathcal{R}(\hat{u})  + \lambda \Omega(\hat{u})
\leqslant 
\mathcal{R}(u_\lambda^\ast)  + \lambda \Omega(u_\lambda^\ast) + \lambda \Omega(u^\ast)
\leqslant 
\mathcal{R}(u^\ast)  + 2 \lambda  \Omega(u^\ast).
\]
 \end{proof}
 
We can now extend Prop.~\ref{prop:nnbound} to the optimization with only $m$ neurons, by using the representation of an unconstrained minimizer obtained from the Frank-Wolfe algorithm, as done in~\cite[Sec.~9.3.6]{bach2024learning}.

\begin{proposition}
\label{prop:nnboundnew}
Assume that \(B =\Omega(u^*)<\infty\), and that \(\hat u\) is an
\(\eta\)-approximate minimizer of
\[
        \hat{\mathcal R}(u)+\lambda\Omega(u)
\]
over networks with at most \(m\) neurons, with
$
        \eta\leqslant \frac14\lambda B .
$
For a fixed \(\delta\in(0,1)\), assume
\[
        \lambda
        \geqslant
        \frac{16}{\sqrt n}
        \big(
            4+32B+(1+B)\sqrt{\log\frac{2}{\delta}}
        \big),
        \qquad
        \frac{128B}{m}\leqslant \lambda .
\]
Then, with probability at least \(1-\delta\),
\[
        \Omega(\hat u)\leq 2B,
        \qquad
        \mathcal R(\hat u)
        \leq
        \mathcal R(u^*)+2\lambda B .
\]
\end{proposition}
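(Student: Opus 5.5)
The plan is to reduce Prop.~\ref{prop:nnboundnew} to Prop.~\ref{prop:nnbound}. The only new ingredient is to show that the minimum of $\hat{\mathcal R}+\lambda\Omega$ over networks with $m$ neurons is within $\frac14\lambda B$ of the unconstrained infimum over all measures. Once this holds, any $\eta$-approximate minimizer over $m$ neurons with $\eta\leqslant\frac14\lambda B$ is a $\frac12\lambda B$-approximate minimizer of the unconstrained problem. Prop.~\ref{prop:nnbound} then applies verbatim: its condition on $\lambda$ is exactly the first assumption, and it yields $\Omega(\hat u)\leqslant 2B$ and $\mathcal R(\hat u)\leqslant \mathcal R(u^\ast)+2\lambda B$ on the same event of probability $1-\delta$.

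For the approximation step, follow \cite[Sec.~9.3.6]{bach2024learning}: run the Frank--Wolfe algorithm \cite{jaggi2013revisiting} on the constrained problem $\min\{\hat{\mathcal R}(u) : \Omega(u)\leqslant t\}$ with $t=2B$. By Prop.~\ref{prop:nnbound}, or by the same argument applied to the empirical problem, one may restrict to $\Omega\leqslant 2B$ with high probability.
\begin{itemize}
\item The extreme points of the ball $\{\Omega\leqslant t\}$ are single neurons $w\mapsto \varphi(\cdot,w)\,a$ with $a\in L^2(\nu)$, $\|a\|_{L^2(\nu)}= t$. This is the group-Lasso structure, so each Frank--Wolfe step adds exactly one neuron (shared across $\rho$).
\item After $m$ steps the iterate therefore has at most $m$ neurons and $\Omega\leqslant t$.
\item The standard rate is $\hat{\mathcal R}(u_m)-\min_{\Omega\leqslant t}\hat{\mathcal R}\leqslant 2L\,{\rm diam}^2/(m+1)$, where $L$ is the smoothness constant of $\hat{\mathcal R}$ along the ball.
\end{itemize}
Since $\hat{\mathcal R}$ is quadratic with curvature weights $\rho,1-\rho\leqslant1$, the curvature term is $\sup \int_0^1 (u_\rho(x)-u'_\rho(x))^2d\nu(\rho)$. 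By Lemma~\ref{lemma:generic}(a) with $M=1$ this is at most $(2t)^2=16B^2$, giving an error of order $B^2/m$.

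To handle the penalty, compare $\hat{\mathcal R}(u_m)+\lambda\Omega(u_m)$ with the unconstrained regularized minimizer $\tilde u$. The latter satisfies $\Omega(\tilde u)\leqslant 2B$, and $\lambda\Omega$ contributes at most $2\lambda B$ on the ball. The cleaner route is to choose the Frank--Wolfe radius $t=\Omega(\tilde u)$, so that $\hat{\mathcal R}(u_m)\leqslant \hat{\mathcal R}(\tilde u)+C B^2/m$ and $\Omega(u_m)\leqslant \Omega(\tilde u)$. This gives an excess in the regularized objective of at most $CB^2/m$, which is $\leqslant \frac14\lambda B$ as soon as $CB/m\leqslant \lambda/4$. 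This is where the assumption $128B/m\leqslant\lambda$ enters.

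The main obstacle is tracking constants so that the Frank--Wolfe excess is genuinely $\leqslant\frac14\lambda B$ under $128B/m\leqslant\lambda$. The choice of radius $t=\Omega(\tilde u)\leqslant 2B$ gives diameter $4B$, and the constant in the $1/m$ rate must be matched to $32B^2/m$. Two further points need care: the existence of a minimizer $\tilde u$ (or a limiting argument with near-minimizers), and the step where the linear minimization oracle over $L^2(\nu)$-valued atoms is attained. I would first try the exact Jaggi bound $2C_f/(m+2)$ with $C_f\leqslant (4B)^2$, and absorb the leftover factor using the slack between $\frac14$ and $\frac12$ in the approximation tolerances.
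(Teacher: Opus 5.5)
Your proposal is correct and follows the paper's proof. On the concentration event, the exact regularized empirical minimizer $\bar u$ satisfies $\Omega(\bar u)\leqslant 2B$; $m$ Frank--Wolfe steps on $\{u:\Omega(u)\leqslant\Omega(\bar u)\}$ then give an $m$-neuron network within $32B^2/m\leqslant\frac14\lambda B$ of the regularized infimum, so $\hat u$ is a $\frac12\lambda B$-approximate minimizer and Prop.~\ref{prop:nnbound} applies.

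On the constant you flag, no slack is needed, and none is available, since $\eta$ already uses up $\frac14\lambda B$. Write a difference $h$ of two points of the ball as $h_\rho(z)=\int_\W\varphi(z,w)a_\rho(w)\,d\gamma(w)$. The bound $|h_\rho(z)|\leqslant\int_\W|a_\rho(w)|\,d\gamma(w)$ is uniform in $z$, so the $\rho$ and $1-\rho$ weights of the two empirical sums add to one. Minkowski then gives $C_f\leqslant\Omega(h)^2\leqslant 16B^2$, whereas applying Lemma~\ref{lemma:generic}(a) to each sum separately would lose a factor $2$. Jaggi's bound therefore gives $2C_f/(m+2)\leqslant 32B^2/m$, as needed.
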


\begin{proof}
Let \(\bar u\) be an exact minimizer of
$
        \hat{\mathcal R}(u)+\lambda\Omega(u)
$
over the unconstrained, possibly infinite-neuron, class. We work on the
concentration event used in the proof of Prop.~\ref{prop:nnbound}. On this event,
the proof of Prop.~\ref{prop:nnbound} applied with \(\eta=0\) gives
\[
        \Omega(\bar u)\leqslant 2B .
\]

We now approximate \(\bar u\) by an \(m\)-neuron network. Applying
\(m\) steps of the Frank-Wolfe algorithm
(with step-size $2/(k+1)$ at iteration $k$)~\cite{jaggi2013revisiting}  to minimize
\(\hat{\mathcal R}(u)\) on the ball
\[
        \{u:\Omega(u)\leqslant \Omega(\bar u)\},
\]
there exists an \(m\)-neuron network \(w_m\) such that
\[
        \Omega(w_m)\leqslant \Omega(\bar u),
        \qquad
        \hat{\mathcal R}(w_m)
        \leqslant
        \hat{\mathcal R}(\bar u)
        +
        \frac{32B^2}{m}.
\]
Therefore,
\[
\begin{aligned}
        \hat{\mathcal R}(w_m)+\lambda\Omega(w_m)
        &\leqslant
        \hat{\mathcal R}(\bar u)+\lambda\Omega(\bar u)
        +
        \frac{32B^2}{m}        =
        \inf_u
        \big\{
            \hat{\mathcal R}(u)+\lambda\Omega(u)
        \big\}
        +
        \frac{32B^2}{m}.
\end{aligned}
\]
Since \(\hat u\) is an \(\eta\)-approximate minimizer over the
\(m\)-neuron class and \(w_m\) is an \(m\)-neuron candidate,
\[
\begin{aligned}
        \hat{\mathcal R}(\hat u)+\lambda\Omega(\hat u)
        &\leqslant
        \hat{\mathcal R}(w_m)+\lambda\Omega(w_m)+\eta        \\
        &\leqslant
        \inf_u
        \left\{
            \hat{\mathcal R}(u)+\lambda\Omega(u)
        \right\}
        +
        \eta
        +
        \frac{32B^2}{m}.
\end{aligned}
\]
By the assumptions
\[
        \eta\leqslant \frac14\lambda B,
        \qquad
        \frac{32B^2}{m}\leqslant \frac14\lambda B,
\]
\(\hat u\) is a \(\frac12\lambda B\)-approximate minimizer of the
unconstrained regularized empirical problem. Prop.~\ref{prop:nnbound} therefore
applies to \(\hat u\), yielding
$
        \Omega(\hat u)\leqslant 2B,
        $ and $
        \mathcal R(\hat u)
        \leqslant
        \mathcal R(u^*)+2\lambda B .
$
\end{proof}

\subsection{Final bound}

We now complete the proof of the neural-network bound. Let
\(n=\min\{n_p,n_q\}\), let \(r=dp/dq\), and assume that
\(r\in[\alpha,\alpha^{-1}]\) for some \(\alpha>0\). Assume moreover that
there exists \(W\in\mathbb R^{d\times d_{\rm eff}}\) with orthonormal columns, and a function
\(h\) such that
\[
        r(x)=h(W^\top x).
\]
For each \(\rho\in[0,1]\), the exact population optimizer in Eq.~(3) is
\[
        u_\rho^*(x)
        =
        \frac{r(x)-1}{\rho r(x)+1-\rho}
        =
        f_\rho(W^\top x),
\        \mbox{ with } \ 
        f_\rho(z)
        =
        \frac{h(z)-1}{\rho h(z)+1-\rho}.
\]
Since \(\rho r(x)+1-\rho\geqslant \alpha\) uniformly in \(\rho\) and \(x\),
the assumed Sobolev regularity of \(h\) transfers uniformly to the
family \((f_\rho)_{\rho\in[0,1]}\). Hence, by a vector-valued
ReLU variation-space bound on the reduced \(d_{\rm eff}\)-dimensional
space, the family \((u_\rho^*)_{\rho\in[0,1]}\) has finite
group-variation norm whenever
\[
        t>\frac{d_{\rm eff}+3}{2}.
\]
With
$
        B =\Omega(u^*)<\infty ,
$
for \(\delta\in(0,1)\), define
\[
        \varepsilon_B(\delta)
         =
        \frac{16B+128B^2}{\sqrt n}
        +
        (4B+4B^2)\sqrt{\frac1n\log\frac{4}{\delta}} .
\]
Assume that \(\hat u\) is an \(\eta\)-approximate minimizer of
\[
        \hat{\mathcal R}(u)+\lambda\Omega(u)
\]
over networks with at most \(m\) neurons, with
$
        \eta\leqslant \frac14\lambda B .
$
Assume also that
\[
        \lambda
        \geqslant
        \frac{16}{\sqrt n}
        \left(
            4+32B+(1+B)\sqrt{\log\frac{4}{\delta}}
        \right),
        \qquad
        m\geqslant \frac{128B}{\lambda}.
\]
Applying Prop.~\ref{prop:nnboundnew} with failure probability \(\delta/2\), and using
the norm bound obtained in the proof of Prop.~\ref{prop:nnboundnew} through
Prop.~\ref{prop:nnbound}, we have, with probability at least \(1-\delta/2\),
\[
        \Omega(\hat u)\leqslant 2B,
        \qquad
        \mathcal R(\hat u)
        \leqslant
        \mathcal R(u^*)+2\lambda B .
\]
On the other hand, applying Lemma~\ref{lemma:special} with \(t=2B\) to both
\(\mathcal R-\hat{\mathcal R}\) and
\(\hat{\mathcal R}-\mathcal R\), with failure probability
\(\delta/2\), gives
\[
        \sup_{\Omega(u)\leqslant 2B}
        \left|
            \mathcal R(u)-\hat{\mathcal R}(u)
        \right|
        \leqslant
        \varepsilon_B(\delta).
\]
Therefore, on the intersection of these two events, which has
probability at least \(1-\delta\),
\[
        \Omega(\hat u)\leqslant 2B,
        \qquad
        \left|
            \mathcal R(\hat u)-\hat{\mathcal R}(\hat u)
        \right|
        \leqslant
        \varepsilon_B(\delta),
\]
and
\[
        \mathcal R(\hat u)
        \leqslant
        \mathcal R(u^*)+2\lambda B .
\]

Since \(u^*\) is the exact optimizer of the population variational
problem, we have
\[
        \mathcal R(u^*)=-D(p\|q).
\]
Moreover, for every admissible \(u\), the population variational value
\(-\mathcal R(u)\) is a lower bound on \(D(p\|q)\). Hence
\[
        0
        \leqslant
        D(p\|q)-[-\mathcal R(\hat u)]
        =
        D(p\|q)+\mathcal R(\hat u)
        \leqslant
        2\lambda B .
\]

We now pass from the population variational value to the empirical
regularized value. Define
\[
        \hat V_{\rm pen}
        =
        -\hat{\mathcal R}(\hat u)-\lambda\Omega(\hat u).
\]
Then
\[
\begin{aligned}
        \hat V_{\rm pen}-D(p\|q)
        &=
        \mathcal R(\hat u)-\hat{\mathcal R}(\hat u)
        -
        \bigl(D(p\|q)+\mathcal R(\hat u)\bigr)
        -
        \lambda\Omega(\hat u)           \leqslant
        \varepsilon_B(\delta),
\end{aligned}
\]
and
\[
\begin{aligned}
        D(p\|q)-\hat V_{\rm pen}
        &=
        D(p\|q)+\mathcal R(\hat u)
        +
        \hat{\mathcal R}(\hat u)-\mathcal R(\hat u)
        +
        \lambda\Omega(\hat u)                                      \\
        &\leqslant
        2\lambda B+\varepsilon_B(\delta)+2\lambda B     =
        \varepsilon_B(\delta)+4\lambda B .
\end{aligned}
\]
Consequently,
\[
        \big|
            \hat V_{\rm pen}-D(p\|q)
        \big|
        \leqslant
        \varepsilon_B(\delta)+4\lambda B .
\]
By the lower bound assumed on \(\lambda\),
\[
        \varepsilon_B(\delta)\leqslant \frac14\lambda B,
\]
and therefore
\[
        \big|
            \hat V_{\rm pen}-D(p\|q)
        \big|
        \leqslant
        \frac{17}{4}\lambda B .
\]
Thus, for \(\lambda\asymp n^{-1/2}\) and
\(m\gtrsim B/\lambda\), equivalently \(n=O(m^2)\) up to constants,
the penalized empirical neural-network objective satisfies
\[
        \big|
            \hat V_{\rm pen}-D(p\|q)
        \big|
        =
        O(n^{-1/2})
\]
with the displayed logarithmic dependence on the confidence parameter.

It remains to relate this quantity to the finite-dimensional
weight-decay objective used by the algorithm. For a finite network in
the scale-balanced parametrization of App.~\ref{app:proofneuraldecay}, with feature map
\(\Psi_{\hat w,\hat b}\), the scale-optimized group-variation objective
satisfies
\[
        \hat V_{\rm pen}
        =
        F_\lambda(\hat p\|\hat q,\Psi_{\hat w,\hat b})
        -
        \frac{\lambda}{2}
        \sum_{j=1}^m
        \bigl(\|\hat w_j\|^2+\hat b_j^2\bigr).
\]
Equivalently,
\[
        \hat V_{\rm pen}
        =
        -\hat{\mathcal R}(\hat u)-\lambda\Omega(\hat u).
\]
Thus the bound above applies to the regularized weight-decay objective,
not to the unpenalized empirical quantity
\(F(\hat p\|\hat q,\hat\phi)\).

If the statement is instead expressed in terms of
\(F_\lambda(\hat p\|\hat q,\Psi_{\hat w,\hat b})\) without subtracting
the hidden-layer weight-decay term, then in the scale-balanced
parametrization,
\[
        F_\lambda(\hat p\|\hat q,\Psi_{\hat w,\hat b})
        =
        \hat V_{\rm pen}
        +
        \frac{\lambda}{2}\Omega(\hat u).
\]
Since \(\Omega(\hat u)\leqslant 2B\), this gives
\[
\begin{aligned}
        \big|
            F_\lambda(\hat p\|\hat q,\Psi_{\hat w,\hat b})
            -
            D(p\|q)
        \big|
        &\leqslant
        \big|
            \hat V_{\rm pen}-D(p\|q)
        \big|
        +
        \frac{\lambda}{2}\Omega(\hat u)           \leqslant
        \varepsilon_B(\delta)+5\lambda B     \leqslant
        \frac{21}{4}\lambda B .
\end{aligned}
\]
This proves the same \(O(n^{-1/2})\) high-probability rate for
\(F_\lambda(\hat p\|\hat q,\Psi_{\hat w,\hat b})\), provided the
output layer is the ridge-optimal one for the learned hidden features
and the finite network is represented in the scale-balanced form of
App.~\ref{app:proofneuraldecay}.

The argument above certifies the statistical behavior of an approximate
global minimizer of the regularized empirical risk. It does not by
itself certify that the stochastic gradient ascent procedure used in
the experiments reaches such a minimizer.

\section{Feature learning for mutual information}
\label{app:featlearnMI}
 We focus on the situation from \mysec{mutual}, where we have a joint distribution $p(x_1,x_2)$ on the product space $\X_1 \times \X_2$. We aim to learn two feature maps $\psi^{(1)}_{\Gamma_1}: \X_1 \to \rb^{m_1}$ and $\psi^{(2)}_{\Gamma_2}: \X_2 \to \rb^{m_2}$ (of potentially different dimensions) such that the $F$-mutual information obtained from these two maps is as large as possible. If we manage to reach the true mutual information, then, by studying equality cases in the data-processing inequality for $f$-divergences~\cite{polyanskiy2025information}, this means that $x_1$ and $x_2$ are independent given $\psi^{(1)}_{\Gamma_1}(x_1)$ and $\psi^{(2)}_{\Gamma_2}(x_2)$. 

In the linear case, the moments $\mu_p, \mu_q, \Sigma_p, \Sigma_q$, defined in \mysec{mutual}, have sizes $d_1d_2$ and $d_1d_2 \times d_1d_2$ with $d_i$ the dimension of the feature space $\H_i$ ($i=1,\dots,2$); we then need to maximize the function $F(p\|q, (\theta_2 \otimes \theta_1) (\varphi_2 \otimes \varphi_1))$ with respect to $\theta_1$ and $\theta_2$. Since $F$ is convex and homogeneous, it is lower-bounded by its tangent, which is of the form
\[
2c^\top \! ( \theta_2\otimes \theta_1)\!^\top\! (\mu_p \!-\! \mu_q)   
+\! \tr\!\big[M  ( \theta_2\otimes \theta_1)\!^\top \Sigma_p   ( \theta_2\otimes \theta_1) \big]
\!+ \!\tr\!\big[N  ( \theta_2\otimes \theta_1)\!^\top \Sigma_q  ( \theta_2\otimes \theta_1) \big],
\]
which is tight at a given $(\theta_1,\theta_2)$ when $(M,N,c)$ are obtained from Prop.~\ref{prop:closedform} applied to the transformed moments.
A natural algorithm is thus alternating optimization over $\theta_1$, $\theta_2$ (each of these problems is quadratic), and $(M,N,c)$, where all steps can be performed in closed form (with an extra whitening step for stability so that $( \theta_2\otimes \theta_1)\!^\top \Sigma_q  ( \theta_2\otimes \theta_1) = \idm$). This is an ascent algorithm with no guarantees, except for the Pearson divergence, where this recovers exactly the power method for canonical correlation analysis (CCA)~\cite{anderson2003introduction,bach2002kernel}. Note that it is affine-invariant and that it is not sequential (that is, the features learned for certain $m_1,m_2$ may not include the ones learned for lower values), but that this sequentiality could be enforced directly.

The simplest implementation will compute the moment matrices of maximal size $d_1 d_2 \times d_1d_2$, and then needs to solve linear systems in dimension $d_1 m_1$ and $d_2 m_2$, thus with an overall complexity of $O(d_1^2 d_2^2 n + d_1^3 m_1^3 + d_2^3 m_2^3)$ per iteration. This is already an improvement when $m_1, m_2$ are much smaller than $d_1,d_2$, compared with the situation without feature learning, which has complexity  $O(d_1^2 d_2^2 n + d_1^3 d_2^3)$. Using gradient ascent techniques as in \mysec{featlearn}, possibly with variance reduction, we can reduce it to 
$O(n d_1 m_1 + n d_2 m_2 + m_1^3 m_2^3)$. This can be extended to more general feature spaces and stochastic approximation.

 \paragraph{Mutual information and logistic/softmax regression.} \label{app:softmax} With $m=m_1$ (feature dimension) and $k = m_2$ (number of classes), we have $\mu_p = (\pi_j \mu_j)_{j \in \{1,\dots,k\}} \in \rb^{mk}$, $\mu_q = (\pi_j \mu)_{j \in \{1,\dots,k\} } \in \rb^{mk}$, $\Sigma_p = \Diag( (\pi_j \Sigma_j)_{j \in \{1,\dots,k\}}) \in \rb^{mk \times mk}$, $\Sigma_q = \Diag( (\pi_j \Sigma)_{j \in \{1,\dots,k\}}) \in \rb^{mk \times mk}$. The estimation decouples into
\[
\sum_{j=1}^k \pi_j  \sum_{i=1}^m
\frac{ f(\lambda_i^{(j)} )}{(\lambda_i^{(j)} - 1)^2}  \big( (   \mu_j  - \mu )^\top v_i^{(j)} \big)^2,
\]
where $(v_i^{(j)})_{i \in \{1,\dots,m\}}$ are the (properly normalized so that $(v_i^{(j)})^\top \Sigma  v_i^{(j)} = 1$) eigenvectors of the pair 
$(\Sigma_j,\Sigma)$, with generalized eigenvalues $(\lambda_i^{(j)})_{i \in \{1,\dots,m\}}$. Note here that we use a slightly different normalization of eigenvectors (without the coefficient $\pi_j$).

From $M, N \in \rb^{mk \times mk}$ obtained from Prop.~\ref{prop:closedform}, we only need the diagonal $m\times m$ blocks $M_{j} \in \rb^{m \times m}$, while we need all the blocks $c_j \in \rb^{m}$ of $c \in \rb^{mk}$. 
The required blocks are, with $h(t) = f(t) / (t-1)^2$,
\[
  c_j=\sum_i h(\lambda_i^{(j)})v_i^{(j)}(v_i^{(j)})^\top\delta_j,
\]
\[
  M_j=\sum_{i,\ell}
  ((\delta_j)^\top v_i^{(j)})((\delta_j)^\top v_\ell^{(j)})
  \frac{h(\lambda_i^{(j)})-h(\lambda_\ell^{(j)})}
       {\lambda_i^{(j)}-\lambda_\ell^{(j)}}
  v_i^{(j)}(v_\ell^{(j)})^\top,
\]
and
\[
  N_j=-\sum_{i,\ell}
  ((\delta_j)^\top v_i^{(j)})((\delta_j)^\top v_\ell^{(j)})
  \frac{\lambda_i^{(j)}h(\lambda_i^{(j)})-\lambda_\ell^{(j)}h(\lambda_\ell^{(j)})}
       {\lambda_i^{(j)}-\lambda_\ell^{(j)}}
  v_i^{(j)}(v_\ell^{(j)})^\top,
\]
with the usual derivative interpretation when two eigenvalues coincide.

\section{Additional experiments and details}
\label{app:experiments}

\begin{figure}
\begin{center}
\hspace*{-.1cm}\includegraphics[width=14cm]{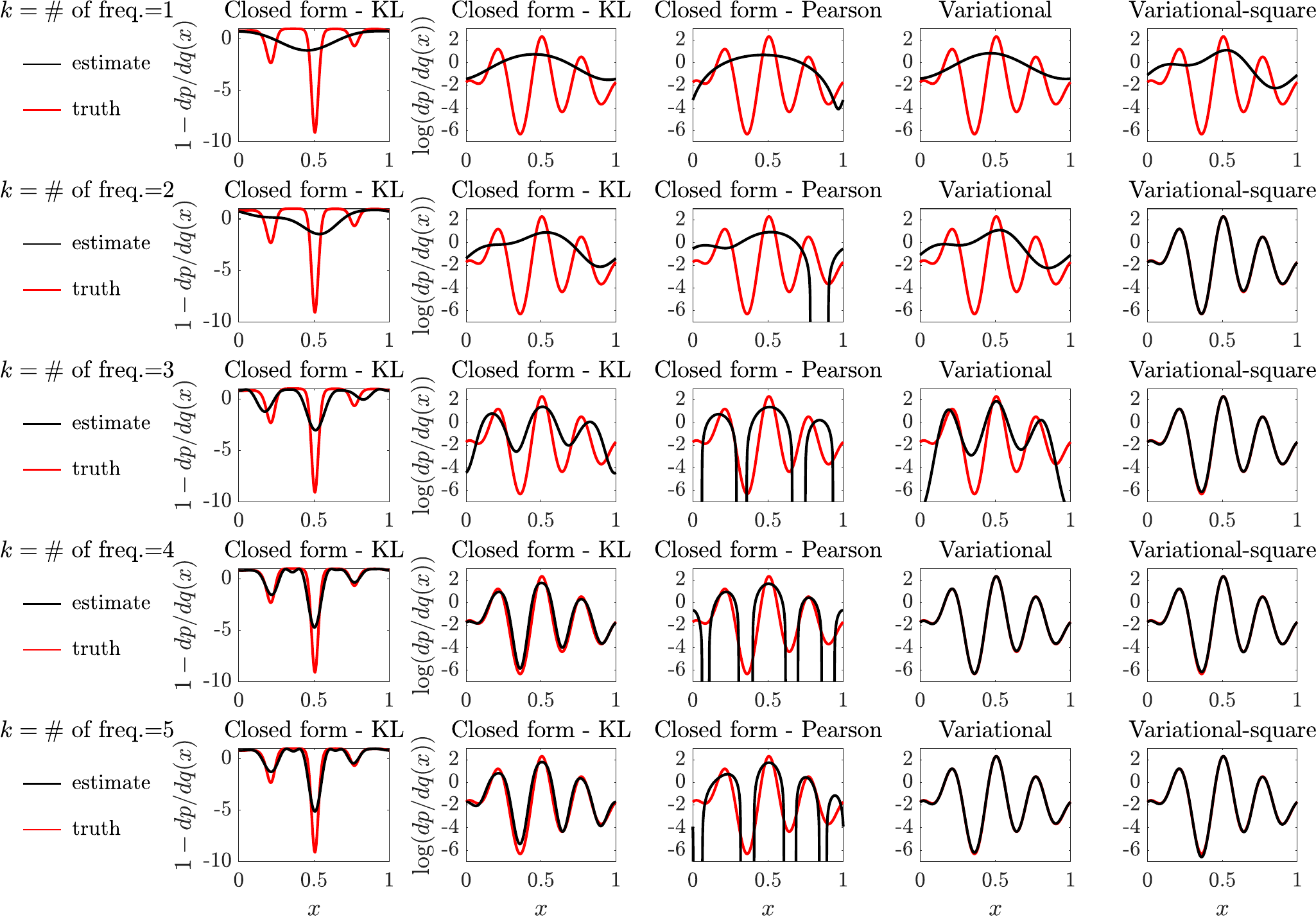}
\end{center}

\vspace*{-.3cm}

\caption{Estimation of $\log(dp/dq(x))$ for $x \in [0,1]$, and feature spaces composed of frequencies up to order $k$ (space of dimension $2k+1$), with sines and cosines, with $n = 100 000$ (thus approaching the population limit). From left to right: estimation of $1-dp/dq(x)$ 
using our closed-form estimator for the KL divergence, then estimation of $\log (dp/dq(x))$ with the KL estimator, the Pearson estimator (aimed at estimating $dp/dq(x)-1$), then the classical variational method based on \eq{varDPq} with a linear function in $\varphi(x)$ and a quadratic function (``square'').  The log-density can be perfectly fitted with 4 frequencies; hence the perfect fit for variational-square with $k = 2$ and for variational with $k = 4$.
Note that the KL estimator is not perfect after $k=4$ because the potential $w(x)$ cannot get a perfect fit (left plot). The Pearson estimate cannot deal well with small values of the relative density (small log). \label{fig:examples_regular}}
\end{figure}

\begin{figure}
\begin{center}
\hspace*{-.1cm}\includegraphics[width=14cm]{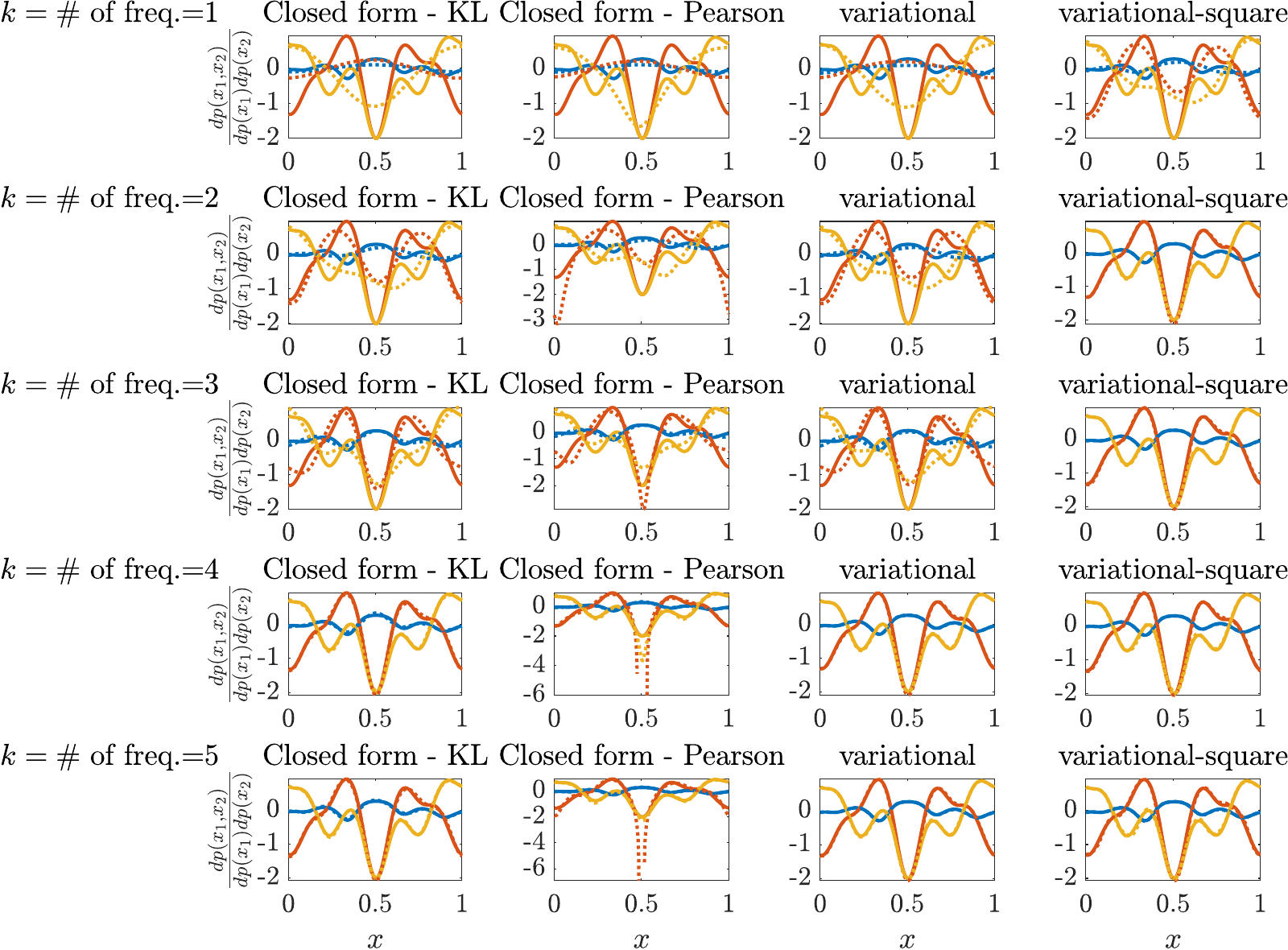}
\end{center}

\vspace*{-.2cm}

\caption{Estimation of $\log(dp(x_1,x_2)/dp(x_1)dp(x_2))$ for $x_1\in [0,1]$ and $x_2 \in \{1,2,3\}$, corresponding to a softmax regression problem, with feature spaces composed of frequencies up to order $k$ (space of dimension $2k+1$), with sines and cosines, and $n = 100 000$ (thus approaching the population limit), with one plot for each value of $x_2$. From left to right:  KL estimator,  Pearson estimator, then classical variational method with linear and square features. 
The log-density can be perfectly fitted with 4 frequencies, hence the perfect fit for variational-square with $k=2$, and variational with $k=4$. Note that the KL estimator is not perfect after $k=4$ but is a tight approximation. The Pearson estimate cannot deal well with small values of the relative density (small log). \label{fig:examples_MI}}
\end{figure}

\textbf{Compared estimators.} We compare our regularized closed-form estimators from Prop.~\ref{prop:closedform} with a single parameter (regularization parameter $\lambda$) with a fixed kernel (except for the last neural network experiment). We compare with the following existing \emph{simple} estimators (either closed-form or with a single optimization problem, and with a single hyperparameter): \BIT
\item Variational formulation~\cite{nguyen2010estimating}: we parameterize $v: \X\to\rb$ as $v(x) =   \theta^\top \varphi(x) $ and maximize  $\frac{1}{n_p} \sum_{i=1}^{n_p}
v(x_i) - \log \big( \frac{1}{n_q} \sum_{j=1}^{n_q} e^{v(y_j)} \big) - \frac{\lambda}{2} \| \theta\|^2$, which corresponds to optimizing in closed form with respect to a constant.\footnote{Indeed, we have 
$ \sup_{ a \in \rb}  
  \int_\X\! ( v(x) + a) dp(x)  -  \int_\X \! ( e^{v(x) + a} - 1) dq(x)
  =  \int_\X\! v(x) dp(x)  -  \log \big( \int_\X \! e^{v(x)  }  dq(x)\big)$.}
 This can be optimized using Newton's method (with each step having the same complexity as our estimator). This is not a closed-form estimator.  The ``square'' variant corresponds to using a quadratic function of the feature map, at a higher computational cost.
\item Plug-in estimator with kernel density estimation~\cite{joe1989estimation,wang2005divergence,kandasamy2015nonparametric}: we consider $\hat{p}(x) = \frac{1}{n_p} \sum_{i=1}^{n_p} k(x,x_i)$ and $\hat{q}(x) = \frac{1}{n_q} \sum_{i=1}^{n_q} k(x,y_i)$ for the Gaussian kernel as densities with respect to the Lebesgue measure, which leads to estimates of the relative density by taking ratios.
\item Pearson estimator: we use the closed-form estimator for the Pearson divergence~\cite{ribero2026regularized}, by adding a small constant when the learned density is thresholded to zero.
\EIT
For the mutual information, similar estimators can be defined, in particular for the classical variational formulation from \eq{varDPq}~\cite{belghazi2018mine,poole2019variational}.

\textbf{Relation to softmax regression.}
\label{app:softmaxvar}
When \(\X_2\) is discrete, the usual variational KL representation of mutual information, which corresponds to \eq{varDPq} applied to $p(x_1,x_2)$ and $q(x_1,x_2) = p(x_1) p(x_2)$ (product of the two marginal distributions of $p$):
\[
I(x_1;x_2)
=
\sup_{v : \X_1 \times \X_2 \to \rb}
\left\{
\mathbb E_{p(x_1,x_2)}[v(x_1,x_2)]
-
  \mathbb E_{p(x_1)p(x_2)}\!\left[e^{v(x_1,x_2)}\right]
+ 1\right\}.
\]
The variational formulation can be optimized in general using gradient ascent iterations (or Newton's  method) by adding a constant to $v$ and optimizing over it, as for the  variational formulation for generic $p$ and $q$, that is, by maximizing
\BEQ
\label{eq:varmi}
\sup_{v : \X_1 \times \X_2 \to \rb}
\left\{
\mathbb E_{p(x_1,x_2)}[v(x_1,x_2)]
-
 \log \Big( \mathbb E_{p(x_1)p(x_2)}\!\left[e^{v(x_1,x_2)}\right]
\Big) \right\}.
\EEQ
The formulation in \eq{varmi} is not equivalent to softmax regression. However, if instead we add a function of $x_1$ to $v(x_1,x_2)$ and optimize over it in closed form, we get the problem:
\BEAS
& & 
\sup_{v : \X_1 \times \X_2 \to \rb}
\sup_{ a: \X_1 \to \rb} 
\left\{
\mathbb E_{p(x_1,x_2)}[v(x_1,x_2) + a(x_1)]
-
 \mathbb E_{p(x_1)p(x_2)}\!\left[e^{v(x_1,x_2) + a(x_1)}\right]
+ 1\right\} \\
& = & 
\sup_{v : \X_1 \times \X_2 \to \rb}
\sup_{ a: \X_1 \to \rb} 
\left\{
\mathbb E_{p(x_1,x_2)}[v(x_1,x_2)]  + \E_{p(x_1)} 
\Big[ a(x_1) 
- e^{a(x_1)} \mathbb E_{p(x_2)} [ e^{v(x_1,x_2)}] +1 \Big]  \right\}  \\
\\
& = & 
\sup_{v : \X_1 \times \X_2 \to \rb}
 \left\{
\mathbb E_{p(x_1,x_2)}[v(x_1,x_2)]  -  \E_{p(x_1)} \log  \E_{p(x_2)} [ e^{v(x_1,x_2)}]
   \right\} ,
\EEAS
which is now exactly softmax regression.

\textbf{Computing resources.} All experiments were run on a single standard laptop (MacBook M4 Pro) without any use of GPUs (with an overall runtime under 4 hours).

\textbf{Additional experimental details.}
We consider experiments from \mysec{experiments}.
Unless otherwise stated, all synthetic distributions are normalized analytically and sampled exactly. True divergences and test errors are computed either analytically when available or by sampling with a large number of samples. Hyperparameters are selected from a log-spaced grid. When a figure is intended to compare the best statistical performance of estimators rather than model-selection procedures, we select hyperparameter by minimizing the reported test criterion on independent validation data. Error bars show standard errors over the stated number of independent replications. Random features and neural-network initializations use independent random seeds across replications.

\BIT
\item
\emph{Estimation of $D(p\|q)$:} For \myfig{bias} (left and center), \(q\) is uniform on \([0,1]\). The density ratio \(dp/dq\) is chosen as an affine function of the Bernoulli-kernel feature map,
$
  \frac{dp}{dq}(x)=1+\beta r(x),
$
where \(r\) is the fixed centered trigonometric feature combination and \(\beta\) is chosen so that \(dp/dq \) is strictly positive. The kernel is
$
  k(x,y)=1+\sum_{i\ge1}\frac{2}{i^2}\cos(2\pi i(x-y)).
$
We use \(n_p=n_q=n\) with \(n=2^5,\ldots,2^{10}\), and report means and standard errors over 64 repetitions. We consider a fixed schedule of $\lambda$, which depends on $n$ proportionally to $n^{-2/3}$ (which is the one optimizing the regular estimator, so that the debiased estimator can only have smaller constant, not a different power).\footnote{In this paper, we perform two types of experiments: this one on scaling law, where $\lambda$ fixed a priori before looking at the data, and we report training objectives, and all other ones, where $\lambda$ is estimated on validation data and we report testing objectives.}

 The compared methods are the regular estimator, its debiased version, and the quantum-divergence estimator.  The center plot reports the logarithm of the absolute error against \(\log_2 n\); the displayed exponents are least-squares slopes fitted to the data.

For \myfig{bias} (right), data are periodic on \([0,1]^2\) and mapped coordinatewise to the product of unit circles by \(x_\ell\mapsto(\cos 2\pi x_\ell,\sin 2\pi x_\ell)\). The reference distribution \(q\) is uniform. The log-density ratio \(\log(dp/dq)\) has independent coordinates and is a finite cosine expansion; the corresponding density ratio is normalized to integrate to one. We use \(m=512\) random ReLU features of order one and sample sizes \(n=512,1024,2048,4096\), with an independent test set of 1024 observations. The reported quantity is \(1-\hat D/D(p\|q)\), which is  the corresponding normalized test error for the potentials (lower is better).  For each method, the hyperparameter is chosen as the value giving the smallest validation error (on an independent dataset). 

\item
\emph{Comparison with softmax regression:} For \myfig{softmax} (left and center) we use a softmax-regression problem with \(k=3\) classes. Conditional log-densities on \([0,1]^2\) are finite cosine expansions. 

For the left plot, we draw \(n=100000\) training samples. A large random ReLU feature dictionary of size 2000 is first generated, principal component analysis (PCA) is applied to this dictionary, and the top \(r\) components are retained, with $r$ varying from 1 to 200. We compare the spectral closed-form estimator with standard softmax regression optimized by Newton's method using the same features. The left panel reports the training variational mutual-information objective as a function of \(r\). 

For the center panel, we generate 512 random features, consider the two estimators learned from $n=1000$ observations with a regularization parameter estimated on validation data of size $10000$, with test errors on the mutual-information estimate reported on $10000$ (independent) observations.

\item \emph{Estimation of mutual information.} For \myfig{softmax} (right), we consider both $x_1$ and $x_2$ on the unit circle in $\rb^2$, with a joint distribution whose log-density is a sum of a few cosines. We consider 500 ReLU random features and learn on $n$ observations, with $n = 100, 200, 400, 800$. Hyperparameters are selected on a validation set of size $2000$, and test performance is computed on an independent set of the same size.

\item
\emph{Feature learning for mutual information.} For \myfig{comparisonNN} (left), the data are generated from a distribution on $(x_1,x_2)$, where $x_1$ and $x_2$ are each two-dimensional and share a common dimension. We perform principal component analysis on 1000 random features and select $m=50$ (so that we can still run the full-size algorithm which requires eigenvalue decompositions of matrices of size $m^2 \times m^2$), on which we run the linear feature-learning algorithm of \mysec{featlearn} with reduced dimensions \(r=1,\ldots,24\), and compare the resulting mutual-information lower bound (on the training set of $n=2000$ samples) with the full-feature estimator. The plot shows that approximately 24 learned components recover the full-feature value on this example.

\item
\emph{Feature learning with neural networks.} For \myfig{comparisonNN} (right),   we use the latent-variable setting from Prop.~\ref{prop:latent}: the density ratio depends only on a fixed low-dimensional projection while the ambient dimension is \(d=2,4,8,16\). We use \(n=10000\) samples to learn and four replications. The neural-network models are one-hidden-layer ReLU networks with $m=50$ hidden units for our estimator, and $m= 100$ hidden units for the variational methods, trained with stochastic gradient ascent with weight decay (with the reduction through the matrix $\Gamma \in \rb^{m \times r}$ described in App.~\ref{app:proofneuraldecay}, with $r=4$). We compare the closed-form estimator with fixed random features (KL), the learned-feature neural version (KL-NN), the direct variational estimator with fixed features, and its neural-network version. Hyperparameters are selected on an independent validation set of size 10000, and the reported metric is the normalized test error \(1-\hat D/D(p\|q)\) (on an independent set of the same size), for which lower is better.
\EIT

\end{document}